\documentclass{article}

\usepackage{silence} 
\WarningFilter{caption}{Unknown document class (or package)}
\WarningFilter{hyperref}{Ignoring empty anchor}

\usepackage{microtype}
\usepackage{graphicx}
\usepackage{subfigure}
\usepackage{booktabs} %

\usepackage{hyperref}

\usepackage[accepted]{icml2025}

\usepackage{src/macros}

\usepackage{amsmath}
\usepackage{amssymb}
\usepackage{mathtools}
\usepackage{amsthm}

\usepackage[capitalize,noabbrev]{cleveref}

\theoremstyle{plain}
\newtheorem{theorem}{Theorem}[section]
\newtheorem{proposition}[theorem]{Proposition}
\newtheorem{lemma}[theorem]{Lemma}
\newtheorem{corollary}[theorem]{Corollary}
\newtheorem{definition}[theorem]{Definition}

\theoremstyle{remark}
\newtheorem{remark}[theorem]{Remark}

\usepackage[textsize=tiny]{todonotes}

\icmltitlerunning{Mechanisms of Projective Composition}

\begin{document}

\twocolumn[
\icmltitle{Mechanisms of Projective Composition of Diffusion Models}

\icmlsetsymbol{equal}{*}

\begin{icmlauthorlist}
\icmlauthor{Arwen Bradley}{equal,comp}
\icmlauthor{Preetum Nakkiran}{equal,comp}
\icmlauthor{David Berthelot}{comp}
\icmlauthor{James Thornton}{comp}
\icmlauthor{Joshua M. Susskind}{comp}
\end{icmlauthorlist}

\icmlaffiliation{comp}{Apple, Cupertino, CA, USA}

\icmlcorrespondingauthor{Arwen Bradley}{arwen\_bradley@apple.com}
\icmlcorrespondingauthor{Preetum Nakkiran}{p\_nakkiran@apple.com}

\icmlkeywords{Machine Learning, ICML}

\vskip 0.3in
]

\printAffiliationsAndNotice{\icmlEqualContribution} %

\begin{abstract}

We study the theoretical foundations of composition in diffusion models, with a particular focus on out-of-distribution extrapolation and length-generalization.
Prior work has shown that composing distributions
via linear score combination can achieve promising results,
including length-generalization in some cases \citep{du2023reduce,liu2022compositional}.
However, our theoretical understanding of how and why such compositions work remains incomplete. In fact, it is not even entirely clear what it means for composition to ``work''.
This paper starts to address these fundamental gaps.
We begin by precisely defining one possible desired result of composition, which we
call \emph{projective composition}.
Then, we investigate: (1) when linear score combinations provably achieve projective composition, (2) whether reverse-diffusion sampling can generate the desired composition, and (3) the conditions under which composition fails. We connect our theoretical analysis to prior empirical observations where composition has either worked or failed, for reasons that were unclear at the time. Finally, we propose a simple heuristic to help predict the success or failure of new compositions.
\end{abstract}

\begin{figure}[!ht]
\vskip 0.2in
\begin{center}
\centerline{\includegraphics[width=\columnwidth]{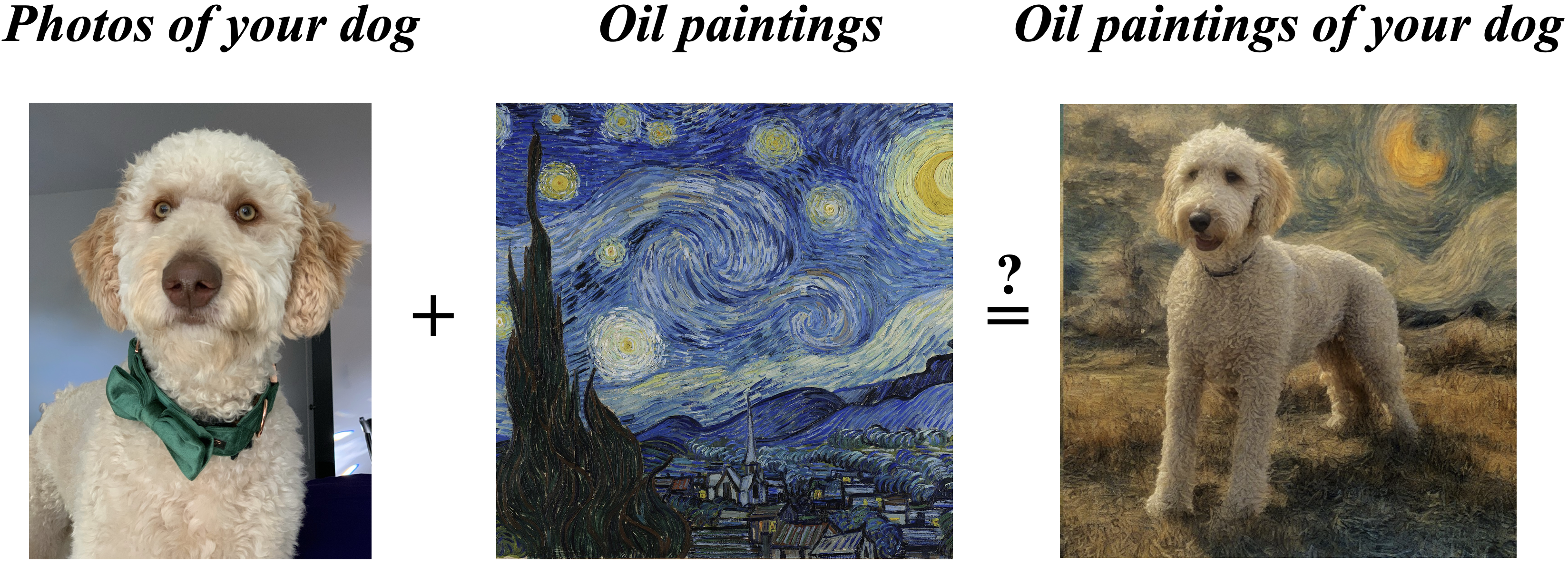}}
\caption{Composing diffusion models via score combination.
Given two diffusion models, it is sometimes
possible to sample in a way that
composes content from one model (e.g. your dog) 
with style of another model (e.g. oil paintings). We aim to theoretically understand this empirical behavior.
Figure generated via score composition
with SDXL fine-tuned on the author's dog; details in Appendix~\ref{app:sdxl_detail}.
}
\label{fig:style-content}
\end{center}
\vskip -0.2in
\end{figure}
\begin{figure}[!htb]
\vskip 0.1in
\begin{center}
\centerline{\includegraphics[width=\columnwidth]{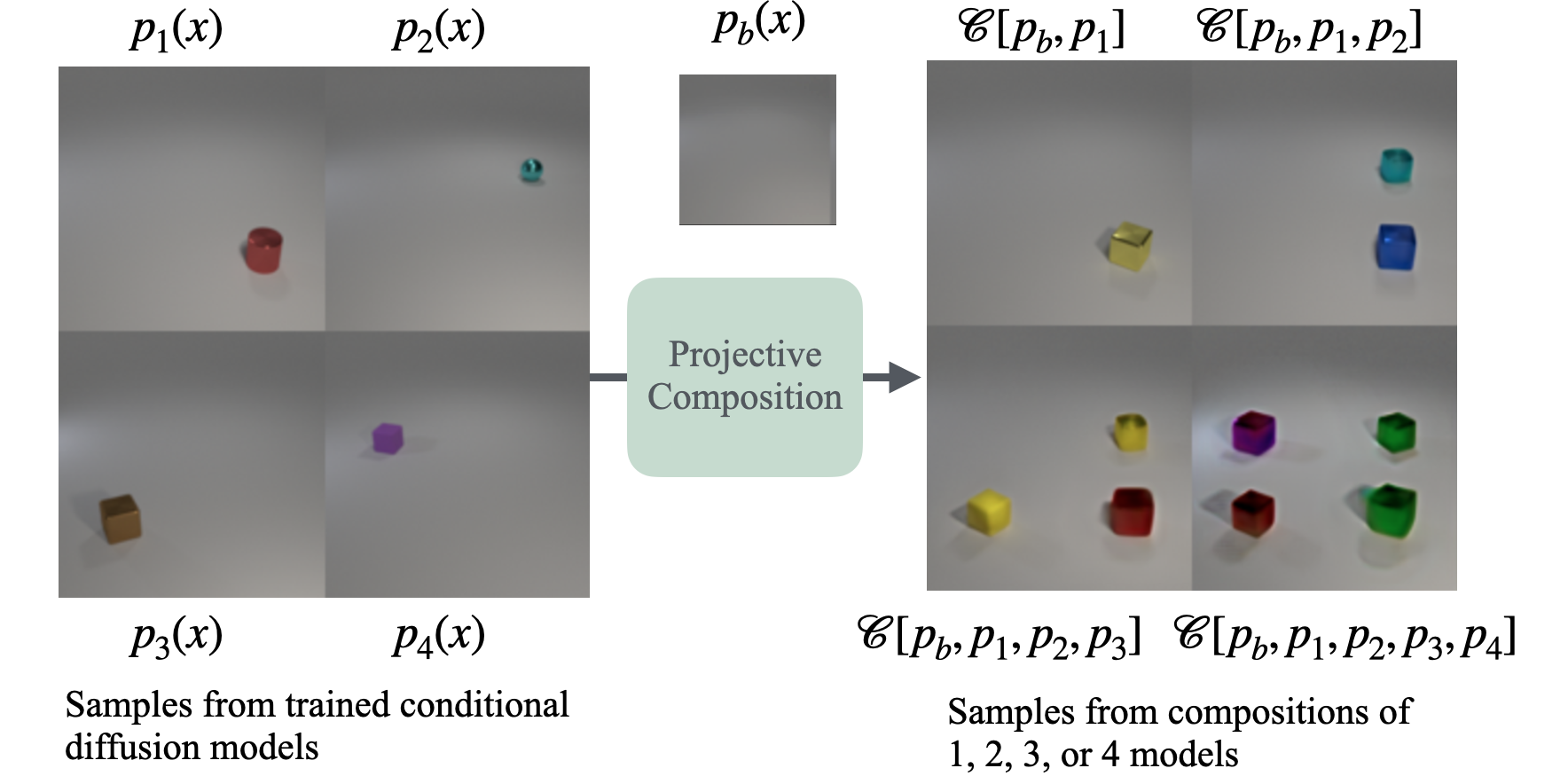}}
\caption{Length-generalization, another capability of composition enabled by our framework. Diffusion models trained to generate a single object conditioned on location (left) can be composed at inference-time to generate images
of multiple objects at specified locations (right).
Notably, such images are strictly out-of-distribution
for the individual models being composed. (Additional samples in Figure \ref{fig:len_gen_extra}.)
}
\label{fig:len_gen}
\end{center}
\vskip -0.3in
\end{figure}

\section{Introduction}

The possibility of \emph{composing} different concepts represented by pretrained models has been of both theoretical and practical interest for some time \citep{jacobs1991adaptive, hinton2002training, du2024compositional}, with diverse applications including image and video synthesis \cite{du2023reduce, du2020visualenergy, liu2022compositional, liu2021learning, nie2021controllable, yang2023probabilistic, wang2024concept}, planning \cite{ajay2024compositional, janner2022planning}, constraint satisfaction \citep{yang2023compositional}, parameter-efficient training \citep{hu2022lora, ilharco2022editing}, and many others \citep{wu2024compositional, su2024decomposition, urain2023composable, zhang2025scaling}.
One central goal in this field is to build novel compositions at inference time using only the outputs of pretrained models (either entirely separate models, or different conditionings of a single model), to create generations that are potentially more complex than any model could produce individually.
As a concrete example to keep in mind, suppose we have two diffusion models,
one trained on your personal photos of your dog and another trained on a collection of oil paintings,
and we want to somehow combine these to generate oil paintings of your dog.
Note that in order to achieve this goal,
compositions must be able to generate images that are out-of-distribution (OOD) with respect to each of the individual models, since for example,
there was no oil painting of your dog in either model's training set.
Prior empirical work has shown that this ambitious vision is at least partially achievable in practice.
However, the theoretical foundations of how and why composition works in practice, as well as its limitations, are still incomplete.

The goal of this work is to advance our theoretical understanding of composition---
we will take a specific family of methods used for composing diffusion models,
and we will analyze conditions under which this method 
provably generates the ``correct'' composition.
Specifically, are there sufficient properties of the distributions we are composing that can guarantee that composition will work ``correctly''?
And what does correctness even mean, formally?

We focus our study on
composing diffusion models by linearly combining their scores,
a method introduced by \citet{du2023reduce, liu2022compositional}
(though many other interesting constructions are possible, see Section \ref{sec:related_work}).
Concretely, suppose we have three separate diffusion models, one for the
distribution of dog images $p_{dog}$,
another for oil-paintings $p_{oil}$,
and another unconditional model for generic images $p_u$.
Then, we can use the individual score estimates $\grad_x \log p(x)$ 
given by the models to construct a composite score:
{\begin{align}
    \label{eq:intro-style-content}
    \grad_x &\log \hat p(x) := \\
    &\grad_x \log p_{dog}(x) + \grad_x \log p_{oil}(x) - \grad_x \log p_u(x).\notag
\end{align}
}%
This implicitly defines a distribution which we will call a ``product composition'':
$\hat{p}(x) \propto p_{dog}(x)p_{oil}(x)/p_u(x)$.
Finally, we can try to sample from $\hat{p}$
by using these scores with a generic score-based sampler,
or even reverse-diffusion.
This method of composition often achieves good results in practice,
yielding e.g. oil paintings of dogs,
but it is unclear why it works theoretically.

We are particularly interested in the OOD generalization capabilities of this style of composition. By this we mean the compositional method's ability to generate OOD with respect to each of the individual models being composed -- which may be possible even if none of the individual models are themselves capable of OOD generation.
A specific desiderata is \emph{length-generalization}, understood as the ability to compose arbitrarily many concepts.
For example, consider the CLEVR \citep{johnson2017clevr} setting shown in Figure \ref{fig:len_gen}.
Given conditional models trained on images each containing a single object
and conditioned on its location,
we want to generate images containing $k > 1$ objects composed in the same scene.
How could such length-generalizing composition be possible?
Here is one illustrative toy example---
consider the following construction, inspired by but slightly different from \citet{du2023reduce, liu2022compositional}.
Suppose $p_b$ is a distribution of empty background images,
and each $p_i$ a distribution of images with a single object at location $i$,
on an otherwise empty background. Assume all locations we wish to compose are non-overlapping.
Then, performing reverse-diffusion sampling using the following score-composition will work
--- meaning will produce images with $k$ objects at appropriate locations:
\begin{align}
    \label{eq:product_comp}
    \grad_x \log p^t_b(x)
    + \sum_{i=1}^k \underbrace{\left( \grad_x \log p^t_i(x) - \grad_x \log p^t_b(x) \right)}_{\textrm{score delta $\delta_i \in \R^n$}}.
\end{align}
Above, the notation $p_i^t$ denotes the
distribution $p_i$
after time $t$ in the forward diffusion
process (see Appendix \ref{app:samplers}).
Intuitively this works because during the reverse-diffusion process,
the update performed by model $i$
modifies only pixels in the vicinity of location $i$, and otherwise leaves them identical to the background.
Thus the different models do not interact, and the sampler acts as if each model individually ``pastes''
an object onto an empty background.
Formally, sampling works
because the score delta vectors $\delta_i$ are mutually orthogonal, and in fact have disjoint supports.
Notably, we can sample from this composition with 
a \emph{standard diffusion sampler}, in contrast to \citet{du2023reduce}'s observations that more sophisticated samplers are necessary.
This construction would not be guaranteed to work, however, if the ``background'' $p_b$
was chosen to be the unconditional distribution $p_u$
(as in Equation~\ref{eq:intro-style-content}),
a common choice in many prior works \citep{du2023reduce, liu2022compositional}.

The remainder of this paper is devoted to trying to generalize this example as far as possible,
and understand both its explanatory power and its limitations.
It turns out the core mechanism can be generalized surprisingly far, 
and does not depend on ``orthogonality'' as strongly as the above example may suggest.
We will encounter some subtle aspects along the way, starting from 
formal definitions
of what it means for composition to succeed ---
a definition that
can capture both composing objects (as in Figure~\ref{fig:len_gen}),
and composing other attributes (such as style + content, in Figure~\ref{fig:style-content}).

\begin{figure*}[t]
\vskip 0.2in
\begin{center}
\centerline{
\includegraphics[width=1.0\textwidth]{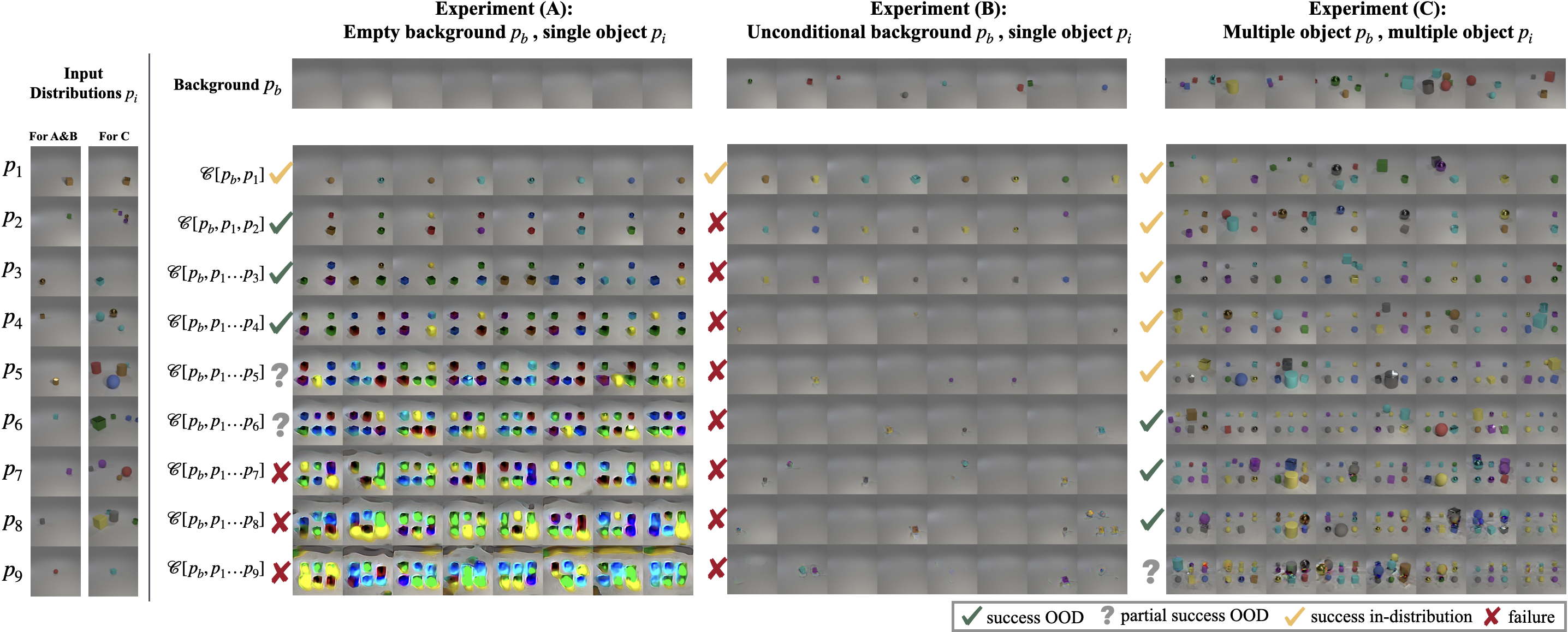}
}
\caption{\textbf{Attempted compositional length-generalization up to 9 objects.} We attempt to compose via linear score combination the distributions $p_1$ through $p_9$ shown on the far left, where each $p_i$ is conditioned on a specific object location
as described below.
Settings (A) and (C) approximately satisfy the conditions of our theory of projective composition, and thus are expected to length-generalize at least somewhat, while setting (B) does not even approximately satisfy our conditions and indeed fails to length-generalize. 
{\bf Experiment (A):} In this experiment, the distributions $p_i$ each contain a single object at a fixed location, and the background $p_b$ is empty. In this case any successful composition of more than one object represents length-generalization.
We find that composition succeeds up to several objects,
but then degrades as number of objects increases
(see \cref{sec:clevr-details} for details). 
{\bf Experiment (B):} Here the distributions $p_i$ are identical to (A),
but the background $p_b$ is chosen
as the unconditional distribution
(i.e. a single object at a random location)--- this the ``Bayes composition'' (\cref{sec:problematic-compositions}).
This composition entirely fails--- remarkably, trying
to compose many objects often produces no objects!
{\bf Experiment (C):}
Here each distribution $p_i$ contains
an object at a fixed location $i$, and 0-4 other objects
(sampled uniformly) in random locations.
The background distribution $p_b$ is a distribution of 1-5 objects (sampled uniformly) in random locations.
In this case length-generalization means composition of more than 5 objects.
This composition can length-generalize,
but artifacts appear for large numbers of objects. %
See \cref{sec:clevr-details} for further discussion, and \cref{tab:clevr_counts} for a quantitative analysis. In \Cref{app:clevr_multi} we explore another configuration that improves length-generalization quality.}
\label{fig:len_gen_monster}
\end{center}
\vskip -0.2in
\end{figure*}

\subsection{Contributions and Organization}
In this work we introduce a theoretical framework
to help understand 
the empirical success of certain methods of 
composing diffusion models,
with emphasis on
understanding how compositions can sometimes
length-generalize. 
We start by discussing the limitations
of several prior definitions of composition
in Section~\ref{sec:problematic-compositions}.
In Section~\ref{sec:composition} we offer a formal definition of
``what we want composition to do'',
given precise information about which aspects we want to compose, which we call \emph{Projective Composition} (Definition~\ref{def:proj_comp}).
(Note that there are many other valid notions of composition;
we are merely formalizing one particular goal.)
Then,
we study how projective composition can be achieved.
In Section~\ref{sec:comp_coord} we introduce formal criteria called
\emph{Factorized Conditionals} (Definition~\ref{def:factorized}),
which is a type of independence criteria along both distributions and coordinates. We prove that when this criteria holds, 
projective composition
can be achieved by linearly combining scores (as in Equation~\ref{eq:product_comp}),
and can be sampled via standard reverse-diffusion samplers.
In Section~\ref{sec:comp_feature} we show that parts of this result can be extended much further to apply even in
nonlinear feature spaces; but interestingly, even when projective composition is achievable, it may be difficult to sample.
We find that in many important cases existing constructions approximately satisfy our conditions, but the theory also helps characterize and explain certain limitations.
Finally in Section~\ref{sec:practical} we discuss how our results can help explain existing experimental results in the literature where composition worked or failed, for reasons that were unclear at the time. We also suggest a simple practical heuristic to help predict whether new sets of concepts will compose correctly.

\section{Related Work}
\label{sec:related_work}

\textbf{Single vs. Multiple Model Composition.}
First, we distinguish the kind of composition we study in this paper from 
approaches that rely a single model 
but with OOD conditioners;
for example, passing OOD text prompts to text-to-image models \citep{nichol2021glide, podell2023sdxl},
or works like \citet{okawa2024compositional, park2024emergence, bradley2025local}.
In contrast, we study compositions which recombine the outputs of
multiple separate models at inference time, where each model only sees
in-distribution conditionings.
Among compositions involving multiple models,
many different variants have been explored. %
Some are inspired by logical operators like AND and OR,
which are typically implemented as
product $p_0(x)p_1(x)$ and sum $p_0(x) + p_1(x)$ \citep{du2023reduce,du2024compositional,liu2022compositional}.
Some composition methods are based on diffusion models, 
while others use energy-based models \citep{du2020visualenergy, du2023reduce, liu2021learning} or densities \cite{skreta2024superposition}. In this work, we focus specifically on product-style compositions %
implemented with diffusion models via a linear combinations of scores as in \citet{du2023reduce, liu2022compositional}. Our goal is not to propose
a new method of composition but to improve theoretical understanding of existing methods.

\textbf{Learning and Generalization.}
In this work we focus only on mathematical aspects
of composition,
and we do not consider any learning-theoretic aspects
such as inductive bias or sample complexity.
Our work is thus complementary to \citet{kamb2024analytic},
which studies how a type of compositional generalization
can arise from inductive bias in the learning procedure.
Additional related works are discussed in Appendix~\ref{app:related}.

\section{Prior Definitions of Composition}
\label{sec:problematic-compositions}

In this section we will describe why two popular mathematical definitions of composition are insufficient
for our purposes: the ``simple product'' definition, and the Bayes composition.
Specifically, neither of these notions can describe
the outcome of the CLEVR length-generalization experiment from Figure~\ref{fig:len_gen}.
Our observations here will thus motivate us to propose a new definition of composition,
in the following section. As a running example, we will consider a subset of the CLEVR experiment from Figure~\ref{fig:len_gen}.
Suppose we are trying to compose two distributions $p_1, p_2$ of images each containing a single object in an otherwise empty scene, where the object is in the lower-left corner under $p_1$, and the upper-right corner under $p_2$.
We would like the composed distribution $\hat{p}$ to place objects in at least
the lower-left and upper-right, simultaneously.

\subsection{The Simple Product}
The simple product is perhaps the most familiar type of composition:
Given two distributions $p_1$ and $p_2$ over $\R^n$,
the simple product is defined\footnote{The geometric mean $\sqrt{p_1(x)p_2(x)}$ is also often used; our discussion applies equally to this as well.} as
$\hat{p}(x) \propto p_1(x) p_2(x)$.
The simple product can represent some interesting types of composition, but it 
has a key limitation: 
the composed distribution can never be truly ``out-of-distribution'' w.r.t. $p_1$ or $p_2$,
since $\hat{p}(x) = 0$ whenever $p_1(x) =0$ or $p_2(x) = 0$.
This presents a problem for our CLEVR experiment.
Using the simple product definition,
we must have $\hat{p}(x) = 0$ for any image $x$ with two objects,
since neither $p_1$ nor $p_2$ was supported on images with two objects.
Therefore, the simple product definition cannot represent our
desired composition. %

\subsection{The Bayes Composition}
Another candidate definition for composition, which we will call the ``Bayes composition'', was introduced and studied by \citet{du2023reduce,liu2022compositional}.
The Bayes composition is theoretically justified
when the desired composed distribution is formally a conditional distribution
of the model's training distribution.
However, it is not formally capable of generating truly
out-of-distribution samples, as our example below will illustrate.

Let us attempt to apply the Bayes composition methodology to our CLEVR example.
We interpret our two distributions $p_1, p_2$ as conditional distributions,
conditioned on an object appearing in the lower-left or upper-right, respectively.
Thus we write $p(x|c_1) \equiv p_1(x)$, where $c_1$ is the event that
an object appears in the lower-left of image $x$, and
$c_2$ the event an object appears in the upper-right.
Now, since we want both objects simultaneously, we define the composition as
$\hat{p}(x) := p(x | c_1, c_2)$.
Because the two events $c_1$ and $c_2$ are conditionally independent given $x$
(since they are deterministic functions of $x$), we can compute $\hat{p}$
in terms of the individual conditionals:
\begin{align}
    \hat{p}(x) := p(x | c_1, c_2) %
    & \propto p(x|c_1) p(x | c_2) / p(x) .\label{ln:indep}
\end{align}
Equivalently in terms of scores: $\grad_x \log\hat{p}_t(x) := 
\grad_x \log p(x|c_1) + \grad_x \log p(x|c_2) - \grad_x \log{p}(x)$.
Line~\eqref{ln:indep} thus serves as our definition of the Bayes composition $\hat{p}$,
in terms of the conditional distributions $p(x|c_1)$ and $p(x|c_2)$,
and the unconditional $p(x)$.

The definition of composition above seems natural: we want both objects to appear simultaneously,
so let us simply condition on both these events.
However, there is an obvious error in the conclusion:
$\hat{p}(x)$ must be $0$ whenever $p(x|c_1)$ or $p(x|c_2)$ is zero (by Line~\ref{ln:indep}).
Since neither conditional distribution have support on images with two objects,
the composition $\hat{p}$ cannot contain images of two objects either.

Where did this go wrong?
The issue is: 
$p(x | c_1, c_2)$ is not well-defined in our case.
We intuitively imagine some unconditional distribution $p(x)$
which allows both objects simultaneously,
but no such distribution has been defined,
or encountered by the models during training.
Thus, the definition of $\hat{p}$
in Line~\eqref{ln:indep} does not actually
correspond to our intuitive notion of
``conditioning on both objects at once.''
More generally, this example illustrates how the Bayes composition cannot produce truly out-of-distribution samples, with respect to the distributions being composed.\footnote{
Although \citet{du2023reduce} use the Bayes composition
to achieve a kind of length-generalization, our discussion
shows that the Bayes justification does not 
explain the experimental results.}
Figure~\ref{fig:len_gen_monster}b shows that the Bayes composition does
not always work experimentally either:
for diffusion models trained in a CLEVR setting similar to Figure~\ref{fig:len_gen}, the Bayes composition of $k > 1$ locations typically fails to produce $k$ objects (and sometimes produces zero).
The difficulties discussed lead us to propose a
precise definition of what we actually ``want'' composition to do in this case.

\section{Our Proposal: Projective-Composition}
\label{sec:composition}

\begin{figure}
    \centering
    \includegraphics[width=0.9\linewidth]{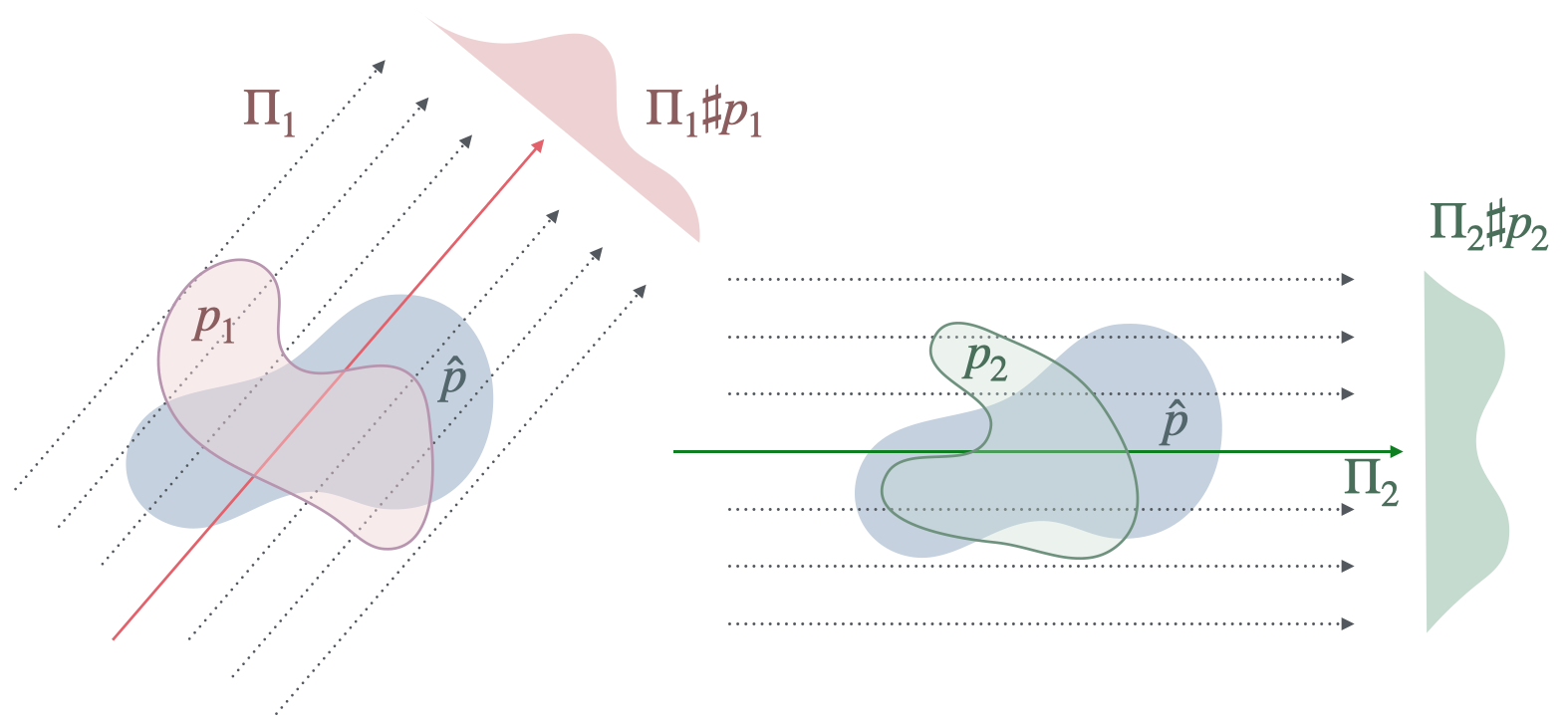}
    \caption{
    Distribution $\hat{p}$ is a projective composition
    of $p_1$ and $p_2$ w.r.t. projection functions $(\Pi_1, \Pi_2)$,
    because $\hat{p}$ has the same marginals as $p_1$ when 
    both are post-processed by $\Pi_1$, and analogously for $p_2$.
    }
    \label{fig:projection-vis}
\end{figure}

We now present our formal definition of what it means to ``correctly compose'' distributions.
Our main insight here is, a realistic definition of composition should not
purely be a function of distributions $\{p_1, p_2, \dots \}$, in the way 
the simple product $\hat{p}(x) = p_1(x) p_2(x)$ is purely a function of $p_1, p_2$.
We must also somehow specify 
\emph{which aspects} of each distribution we care about preserving in the composition.
For example, informally, we may want a composition that mimics the style of $p_1$
and the content of $p_2$.
Our definition below of \emph{projective composition} allows us this flexibility.

Roughly speaking, our definition requires specifying a ``feature extractor''
$\Pi_i: \R^n \to \R_k$ associated with every distribution $p_i$.
These functions can be arbitrary, but we usually imagine them as projections\footnote{
We use the term ``projection'' informally here, to convey intuition;
these functions $\Pi_i$ are not necessarily coordinate projections, although this is an important special case (Section~\ref{sec:comp_coord}).
} in
some feature-space, e.g, $\Pi_1(x)$ may be a transform of $x$ which extracts only its style,
and $\Pi_2(x)$ a transform which extracts only its content.
Then, a projective composition is any distribution $\hat{p}$ which
``looks like'' distribution $p_i$ when both are viewed through $\Pi_i$
(see Figure~\ref{fig:projection-vis}).
Formally:

\begin{definition}[Projective Composition] 
\label{def:proj_comp}
Given a collection of distributions $\{p_i\}$ along with
associated ``projection'' functions $\{\Pi_i: \R^n \to \R^k\}$,
we call a distribution $\hat{p}$ a \emph{projective composition} if\footnote{
The notation $\sharp$ refers to push-forward of a probability measure.
}
\begin{equation}
\label{eqn:proj_comp}
\forall i: \quad
\Pi_i \sharp \hat{p} = \Pi_i \sharp p_i.
\end{equation}
That is, when $\hat{p}$ is projected by each $\Pi_i$,
it yields marginals identical to those of $p_i$.
\end{definition}

There are a few aspects of this definition worth emphasizing,
which are conceptually different from 
many prior notions of composition.
First, our definition above does not \emph{construct} a composed distribution;
it merely specifies what properties the composition must have.
For a given set of $\{(p_i, \Pi_i)\}$, there may be many possible distributions $\hat{p}$
which are projective compositions; or in other cases, a projective composition
may not even exist.
Separately, the definition of projective composition does not posit any sort of ``true'' underlying
distribution, nor does it require that the distributions $p_i$ 
are conditionals of an underlying joint distribution.
In particular, projective compositions can be truly ``out of distribution'' with respect to the $p_i$: $\hat{p}$ can be
supported on samples $x$ where none of the $p_i$ are supported.
\begin{figure}[t]
    \centering
    \includegraphics[width=1.0\linewidth]{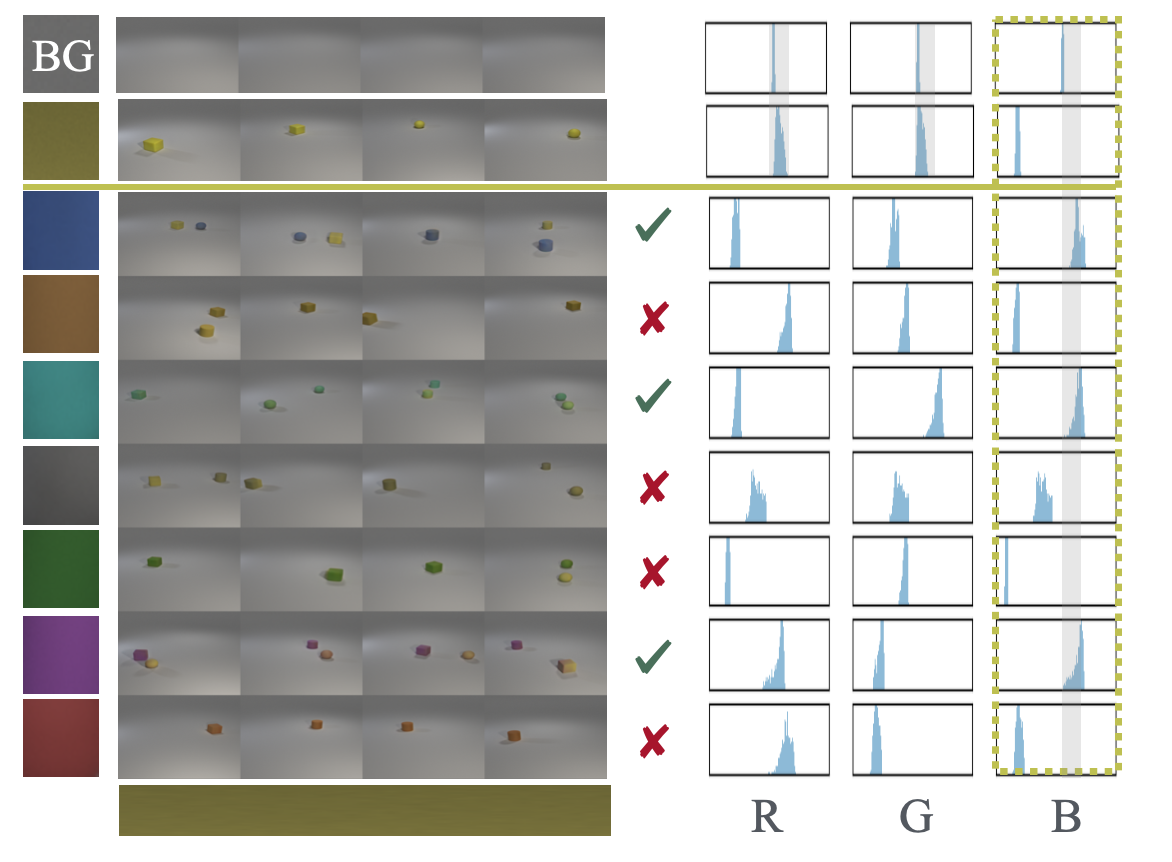}
    \caption{\textbf{Composing yellow objects with objects of other colors.} Yellow objects successfully compose with blue, cyan and magenta objects but not with brown, gray, green, or red objects. Per the histograms (left), in RGB-colorspace yellow has R, G distributed like the background (gray) while B has a distinct distribution peaked closer to zero.
    Taking $M_\text{yellow} \approx \{B\}$, \cref{lem:compose} predicts that standard diffusion can sample from compositions of yellow with any color
    where the B channel is distributed like the background: namely, blue, cyan, magenta per the histograms. (Other colors may theoretically compose per \cref{lem:transform_comp}, but be difficult to sample.) (Additional samples in \cref{fig:clever_color_comp_extra}.)}
    \label{fig:clevr_color_comp}
\end{figure}
\paragraph{Examples.}
We have already discussed the style+content composition of Figure~\ref{fig:style-content}
as an instance of projective composition.
Another even simpler example to keep in mind is 
the following coordinate-projection case.
Suppose we take $\Pi_i: \R^n \to \R$ to be the
projection onto the $i$-th coordinate.
Then, a projective composition of distributions $\{p_i\}$
with these associated functions $\{\Pi_i\}$
means: a distribution where the first coordinate is
marginally distributed identically to the first coordinate of $p_1$,
the second coordinate is marginally distributed as $p_2$, and so on.
(Note, we do not require any independence between coordinates).
This notion of composition would be meaningful if, for example,
we are already working in some disentangled feature space,
where the first coordinate controls the style of the image
the second coordinate controls the texture, and so on.
The CLEVR length-generalization example from Figure~\ref{fig:len_gen}
can also be described as a projective composition in almost an identical way,
by letting $\Pi_i: \R^n \to \R^k$ be a restriction onto the set of
pixels neighboring location $i$. We describe this 
explicitly later in Section~\ref{sec:clevr-details}.

\section{Simple Construction of Projective Compositions}
\label{sec:comp_coord}

It is not clear apriori that projective compositional distributions satisfying Definition \ref{def:proj_comp} ever exist, much less that there is any straightforward way to sample from them.
To explore this, we first restrict attention to perhaps the simplest setting, where the projection functions $\{\Pi_i\}$ are
just coordinate restrictions.
This setting is meant to generalize the intuition we had
in the CLEVR example of Figure~\ref{fig:len_gen},
where different objects were composed in disjoint regions of the image.
We first define the construction of the composed distribution,
and then establish its theoretical properties.

\subsection{Defining the Construction}
Formally, suppose we have a set of distributions
$(p_1, p_2, \ldots, p_k)$ that we wish to compose;
in our running CLEVR example, each $p_i$ is the distribution of images
with a single object at position $i$.
Suppose also we have some reference distribution $p_b$,
which can be arbitrary, but should be thought of as a 
``common background'' to the $p_i$s.
Then, one popular way to construct a composed distribution
is via the \emph{compositional operator} defined below.
(A special case of this construction is used in \citet{du2023reduce}, for example).

\begin{definition}[Composition Operator]
    \label{def:comp_oper}
    Define the \emph{composition operator} $\cC$ acting on an arbitrary set of distributions $(p_b, p_1, p_2, \ldots)$ by
    \begin{align}
    \label{eq:comp_oper}
    \cC[\vec{p}] := \cC[p_b, p_1, p_2, \dots](x) := \frac{1}{Z} p_b(x) \prod_i \frac{p_i(x)}{p_b(x)},
    \end{align}
    where $Z$ is the appropriate normalization constant. We name $\cC[\vec{p}]$ the \emph{composed distribution}, and the score of $\cC[\vec{p}]$ the \emph{compositional score}:
    \begin{align}
    \label{eqn:comp_score}
    &\grad_x \log \cC[\vec{p}](x)  \\
    &= \grad_x \log p_b(x) + \sum_i \left( \grad_x \log p_i(x) - \grad_x \log p_b(x) \right). \notag
    \end{align}
\end{definition}
Notice that if $p_b$ is taken to be the unconditional distribution then this is exactly the Bayes-composition.

\vspace{-0.5em}
\subsection{When does the Composition Operator Work?}
We can always apply the composition operator to any set of distributions,
but when does this actually yield a ``correct'' composition
(according to Definition~\ref{def:proj_comp})?
One special case is when each distribution $p_i$ is
``active'' on a different, non-overlapping set of coordinates.
We formalize this property below
as \emph{Factorized Conditionals} (Definition~\ref{def:factorized}).
The idea is, 
each distribution $p_i$
must have a particular set of ``mask'' coordinates $M_i \subseteq [n]$ which it
samples in a characteristic way,
while independently sampling all other coordinates
from a common background distribution.
If a set of distributions $(p_b, p_1, p_2, \ldots)$ has this
\emph{Factorized Conditional} structure, then 
the composition
operator will produce a projective composition (as we will prove below).

\begin{definition}[Factorized-Conditionals]
\label{def:factorized}

We say a set of distributions $(p_b, p_1, p_2, \dots p_k)$
over $\R^n$
are \emph{Factorized Conditionals} if
there exists a partition of coordinates $[n]$
into disjoint subsets $M_b, M_1, \dots M_k$ such that:
\begin{enumerate}
    \setlength{\itemsep}{1pt}
    \item $(x|_{M_i}, x|_{M_i^c})$ are independent under $p_i$.
    \item $(x|_{M_b}, x|_{M_1}, x|_{M_2}, \dots, x|_{M_k})$
    are mutually independent under $p_b$.
    \item $p_i(x|_{M_i^c}) = p_b(x|_{M_i^c})$.
\end{enumerate}

Equivalently, if we have:
\begin{align}
    p_i(x) &= p_i(x|_{M_i}) p_b(x|_{M_i^c}), \text{ and} \label{eqn:cc-cond}\\
    p_b(x) &= p_b(x|_{M_b}) \prod_{i \in [k]} p_b(x|_{M_i}). \notag
\end{align}
\end{definition}
\vspace{-1em}
Equation~\eqref{eqn:cc-cond} means that each $p_i$
can be sampled by first sampling $x \sim p_b$,
and then overwriting the coordinates of $M_i$
according to some other distribution (which can be specific to distribution $i$).
For instance, the experiment of Figure~\ref{fig:len_gen}
intuitively satisfies this property, since 
each of the conditional distributions could essentially be sampled
by first sampling an empty background image ($p_b$), then ``pasting''
a random object in the appropriate location (corresponding to pixels $M_i$).
If a set of distributions obey this Factorized Conditional structure,
then we can prove that the composition operator $\cC$
yields a correct projective composition,
and reverse-diffusion correctly samples from it.
Below, let $N_t$ denote the noise operator of the
diffusion process\footnote{Our results are agnostic to the specific diffusion noise-schedule and scaling used.} at time $t$.

\begin{theorem}[Correctness of Composition]
\label{lem:compose}
Suppose a set of distributions $(p_b, p_1, p_2, \dots p_k)$
satisfy Definition~\ref{def:factorized},
with corresponding masks $\{M_i\}_i$.
Consider running the reverse-diffusion SDE 
using the following compositional scores at each time $t$:
\begin{align}
s_t(x_t) &:= \grad_x \log \cC[p_b^t, p_1^t, p_2^t, \ldots](x_t),
\end{align}
where $p_i^t := N_t[p_i]$ are the noisy distributions.
Then, the distribution of the generated sample $x_0$ at time $t=0$ is:
\begin{align}
\label{eqn:p_hat}
\hat{p}(x) := p_b(x|_{M_b}) \prod_i p_i(x|_{M_i}).
\end{align}
In particular,
$\hat{p}(x|_{M_i}) = p_i(x|_{M_i})$ for all $i$,
and so
$\hat{p}$ is a projective composition
with respect to projections $\{\Pi_i(x) := x|_{M_i}\}_i$,
per Definition \ref{def:proj_comp}.
\end{theorem}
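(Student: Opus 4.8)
The plan is to reduce the theorem to the standard correctness of the reverse-diffusion SDE by establishing a single distributional identity: $\cC[p_b^t, p_1^t, p_2^t, \ldots] = N_t[\hat{p}]$ for every time $t \ge 0$, where $\hat{p}$ is the distribution in Equation~\eqref{eqn:p_hat}. Once this identity is in hand, the compositional score $s_t = \grad_x \log \cC[p_b^t, p_1^t, \ldots]$ is exactly the true score $\grad_x \log N_t[\hat{p}]$ of the forward process initialized at $\hat{p}$, so running the reverse SDE with $s_t$ produces $x_0 \sim \hat{p}$ (Appendix~\ref{app:samplers}); note the noised densities are smooth and strictly positive, so all scores are well-defined. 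The ``in particular'' clause then follows immediately by reading off the block-marginals of $\hat{p}$.

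First I would record two elementary properties of the noise operator. Because the forward diffusion acts coordinate-wise (each coordinate independently scaled, with independent Gaussian noise added), $N_t$ commutes with marginalization onto any subset of coordinates, and it preserves independence between disjoint blocks of coordinates. Applying this to Definition~\ref{def:factorized}, the noised distributions still satisfy the factorized-conditional relations with the same masks: $p_i^t(x) = p_i^t(x|_{M_i})\, p_b^t(x|_{M_i^c})$ and $p_b^t(x) = p_b^t(x|_{M_b}) \prod_{j} p_b^t(x|_{M_j})$, where each block-marginal $p_i^t|_{M_i}$ is $N_t$ applied to $p_i|_{M_i}$, and $p_i^t|_{M_i^c} = p_b^t|_{M_i^c}$ since the underlying $t=0$ marginals agree by condition~3.

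The core step is then a short algebraic cancellation. Using mutual independence of the blocks under $p_b^t$, write $p_b^t(x|_{M_i^c}) = p_b^t(x|_{M_b}) \prod_{j \ne i} p_b^t(x|_{M_j})$, so that $p_i^t(x)/p_b^t(x) = p_i^t(x|_{M_i})/p_b^t(x|_{M_i})$. Substituting into the definition of $\cC$ and telescoping the $p_b^t(x|_{M_i})$ factors against the $\prod_j p_b^t(x|_{M_j})$ coming from $p_b^t(x)$ gives $\cC[p_b^t, p_1^t, \ldots](x) = \frac{1}{Z}\, p_b^t(x|_{M_b}) \prod_i p_i^t(x|_{M_i})$; since each factor is a normalized density on its own block and the blocks partition $[n]$, in fact $Z = 1$. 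On the other hand $\hat{p}$ is a product distribution across the blocks $M_b, M_1, \ldots, M_k$, so applying the coordinate-wise $N_t$ yields $N_t[\hat{p}](x) = p_b^t(x|_{M_b}) \prod_i p_i^t(x|_{M_i})$ as well, establishing the identity.

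I expect the main difficulty to be not a single hard step but the careful handling of the interchange of $N_t$ with marginalization and with block-independence; this is morally obvious because diffusion noise is independent per coordinate, yet it is the one place where the standard product form of the noise kernel is quietly used (its schedule and scaling are otherwise irrelevant). Everything else is the one-line cancellation above plus the appeal to exactness of reverse-diffusion. Finally, for the ``in particular'' statement: the $M_i$-marginal of the product $\hat{p}$ is $p_i(x|_{M_i})$, so with $\Pi_i(x) := x|_{M_i}$ we get $\Pi_i \sharp \hat{p} = \Pi_i \sharp p_i$, i.e. $\hat{p}$ is a projective composition per Definition~\ref{def:proj_comp}.
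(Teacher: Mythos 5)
Your proposal is correct and follows essentially the same route as the paper's own proof: the algebraic cancellation showing $\cC[\vec{q}]$ reduces to the block-product form under Definition~\ref{def:factorized}, the observation that coordinate-wise Gaussian noising preserves the factorized structure so that $\cC[\vec{p}^t]=N_t[\cC[\vec{p}]]=N_t[\hat{p}]$, and then the appeal to exactness of the reverse SDE with the true scores of $N_t[\hat{p}]$. Your explicit remarks on $Z=1$ and on $N_t$ commuting with block-marginalization only make the same argument slightly more careful.
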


Unpacking this, Line \ref{eqn:p_hat} says that the final generated distribution
$\hat{p}(x)$ can be sampled by
first sampling
the coordinates $M_b$ according to $p_b$ (marginally),
then independently sampling 
coordinates $M_i$ according to $p_i$ (marginally) for each $i$.
Similarly, by assumption, $p_i(x)$ can be sampled by first sampling the coordinates $M_i$ in some specific way, and then independently sampling the remaining coordinates according to $p_b$. Therefore Theorem \ref{lem:compose} says that $\hat{p}(x)$ samples the coordinates \emph{$M_i$ exactly as they would be sampled by $p_i$}, for each $i$ we wish to compose. 

\begin{proof}(Sketch) \small
Since $\vec{p}$ satisfies Definition \ref{def:factorized}, we have
\begin{align*}
&\cC[\vec{p}](x) := p_b(x) \prod_i \frac{p_i(x)}{p_b(x)} \notag 
= p_b(x) \prod_i \frac{p_b(x|_{M_i^c}) p_i(x|_{M_i})}{p_b(x|_{M_i^c})p_b(x|_{M_i})} \notag \\
&= p_b(x) \prod_i \frac{p_i(x|_{M_i})}{p_b(x|_{M_i})} \notag 
= p_b(x|_{M_b}) \prod_i p_i(x_t|_{M_i}) := \hat{p}(x).
\end{align*}
The sampling guarantee follows from the commutativity of composition with the diffusion noising process, i.e. $\cC[\vec{p^t}]= N_t[\cC[\vec{p}]]$. 
The complete proof is in Appendix \ref{app:compose_pf}.
\end{proof}

\begin{remark}
In fact, Theorem~\ref{lem:compose} still holds under any orthogonal transformation of the variables,
because the diffusion noise process commutes with orthogonal transforms.
We formalize this as Lemma~\ref{lem:orthogonal_sampling}.
\end{remark}

\begin{remark}
Compositionality is often thought of in terms of orthogonality between scores.
Definition \ref{def:factorized} implies orthogonality between the score differences that appear in the composed score \eqref{eqn:comp_score}:
$\grad_x \log p_i^t(x_t) - \grad_x \log p_b^t(x_t),$
but the former condition is strictly stronger
(c.f. Appendix \ref{app:score_orthog}).
\end{remark}

\begin{remark}
Notice that the composition operator $\cC$
can be applied to a set of Factorized Conditional
distributions
without knowing the coordinate partition $\{M_i\}$.
That is, we can compose distributions and compute scores
without knowing apriori exactly ``how'' these distributions are supposed to compose
(i.e. which coordinates $p_i$ is active on).
This is already somewhat remarkable, and we will see a much
stronger version of this property in the next section.
\end{remark}

\textbf{Importance of background.}
Our derivations highlight the crucial role of the background
distribution $p_b$ for the composition operator  
(Definition~\ref{def:comp_oper}).
While prior works have taken $p_b$ to be an unconditional distribution and the $p_i$'s its associated conditionals,
our results suggest this is not always the optimal choice -- in particular,
it may not satisfy a Factorized Conditional structure (Definition~\ref{def:factorized}). Figure~\ref{fig:len_gen_monster} demonstrates this empirically: settings (a) and (b) attempt to compose the same distributions using different backgrounds -- empty (a) or unconditional (b) -- with very different results.

\textbf{A partial relaxation}
Factorized Conditionals (\Cref{def:factorized}) is a stringent assumption in \Cref{lem:compose}, but it turns out that we can relax it somewhat and still show that resulting the composition is approximately projective. The proof is in \Cref{app:compose_pf}.
\begin{lemma}[Relaxed Correctness]
Suppose that $(p_b, p_1, p_2, \ldots)$ satisfy the following conditions, which partially relax \Cref{def:factorized}. We still require $M_1, M_2, \ldots$ to be independent under $p_b$, but we only require ``approximate independence'' between $M_i$ and $M_i^c$ (and $M_b$ and $M_b^c$):
\begin{align*}
    p_b(x) &:= p_b(x|_{M_b} | x|_{M_b^c}) \prod_i p_b(x|_{M_i})\\
    p_i(x) &:= p_i(x|_{M_i} | x|_{M_i^c}) p_b(x|_{M_i^c}), \quad \text{with} \\
    \sup_{x|_{M_i^c}} &\quad \text{KL}[ p_i(x|_{M_i}) || p_i(x|_{M_i} | x|_{M_i^c})] \le \epsilon_i, \quad i = b, 1, 2, \ldots
\end{align*}
Defining the ``ideal'' projective composition by \\
$\cC^\star[\vec{p}] := p_b(x|_{M_b}) \prod_i p_i(x|_{M_i})$,
we have
$$KL(\cC^\star[\vec{p}] || \cC[\vec{p}]) \le \epsilon_b + \sum_i \epsilon_i.$$
\label{lem:fc_relax}    
\end{lemma}
 
\subsection{Approximate Factorized Conditionals in CLEVR.}
\label{sec:clevr-details}

In \cref{fig:len_gen_monster} we explore compositional length-generalization (or lack thereof) in three different setting, two of which (\cref{fig:len_gen_monster}a and \ref{fig:len_gen_monster}c) approximately satisfy \cref{def:factorized}. In this section we explicitly describe how our definition of Factorized Conditionals approximately captures the CLEVR settings of Figures \ref{fig:len_gen_monster}a and \ref{fig:len_gen_monster}c. The setting of \ref{fig:len_gen_monster}b does not satisfy our conditions, as discussed in \cref{sec:problematic-compositions}.

\textbf{Single object distributions with empty background.}
This is the setting of both \cref{fig:len_gen} and \cref{fig:len_gen_monster}a.
The background distribution $p_b$ 
over $n$ pixels is images of an empty scene with no objects.
For each $i \in \{1,\ldots,L\}$ (where $L=4$ in \cref{fig:len_gen} and $L=9$ in \cref{fig:len_gen_monster}a), define the set $M_i \subset [n]$ 
as the set of pixel indices surrounding location $i$.
($M_i$ should be thought of as a ``mask'' that
that masks out objects at location $i$).
Let $M_b := (\cup_i M_i)^c$ be the remaining
pixels in the image.
Then, we claim the distributions $(p_b, p_1, \ldots, p_L)$
form approximately
Factorized Conditionals, with corresponding
coordinate partition $\{M_i\}$.
This is essentially because each distribution $p_i$
matches the background $p_b$ on all pixels except those surrounding
location $i$ (further detail in Appendix~\ref{app:clevr-details}).
Note, however, that the conditions of Definition~\ref{def:factorized}
do not \emph{exactly} hold in the experiment of Figure~\ref{fig:len_gen} -- there is still some dependence between
the masks $M_i$, since objects can cast shadows or even occlude each other.
Empirically, these deviations 
have greater impact
when composing many objects, as seen in \cref{fig:len_gen_monster}a.

\textbf{Bayes composition with cluttered distributions.}
In \cref{fig:len_gen_monster}c we replicate CLEVR experiments in  \citet{du2023reduce, liu2022compositional} where the images contain many objects (1-5) and the conditions label the location of one randomly-chosen object. It turns out the unconditional together with the conditionals can approximately act as Factorized Conditionals in ``cluttered'' settings like this one. The intuition is that if the conditional distributions each contain one specific object plus many independently sampled random objects (``clutter''), then the unconditional distribution \emph{almost} looks like independently sampled random objects, which together with the conditionals \emph{would} satisfy Definition \ref{def:factorized} (further discussion in Appendix \ref{app:clevr-details} and \ref{app:bayes_connect}). This helps to explain the length-generalization observed in \citet{liu2022compositional} and verified in our experiments (\cref{fig:len_gen_monster}c).

\textbf{Compositions of multi-conditioned models}
In \cref{app:clevr_multi}, we examine models trained on multiple objects and conditioned on the locations or colors of all objects, following the approach described in \cite{bradley2025local}. 
We investigate compositions of this model conditioned on individual locations, using a background generated by conditioning the same model on zero locations. We find that this composition achieves superior length generalization compared to the compositions shown in \cref{fig:len_gen_monster}. We hypothesize that this improvement arises from the background model transitioning from generating an empty background to acting as a local unconditional denoiser \cite{kamb2024analytic}.

\section{Projective Composition in Feature Space}
\label{sec:comp_feature}

\begin{figure}
    \centering
    \includegraphics[width=1.0\linewidth]{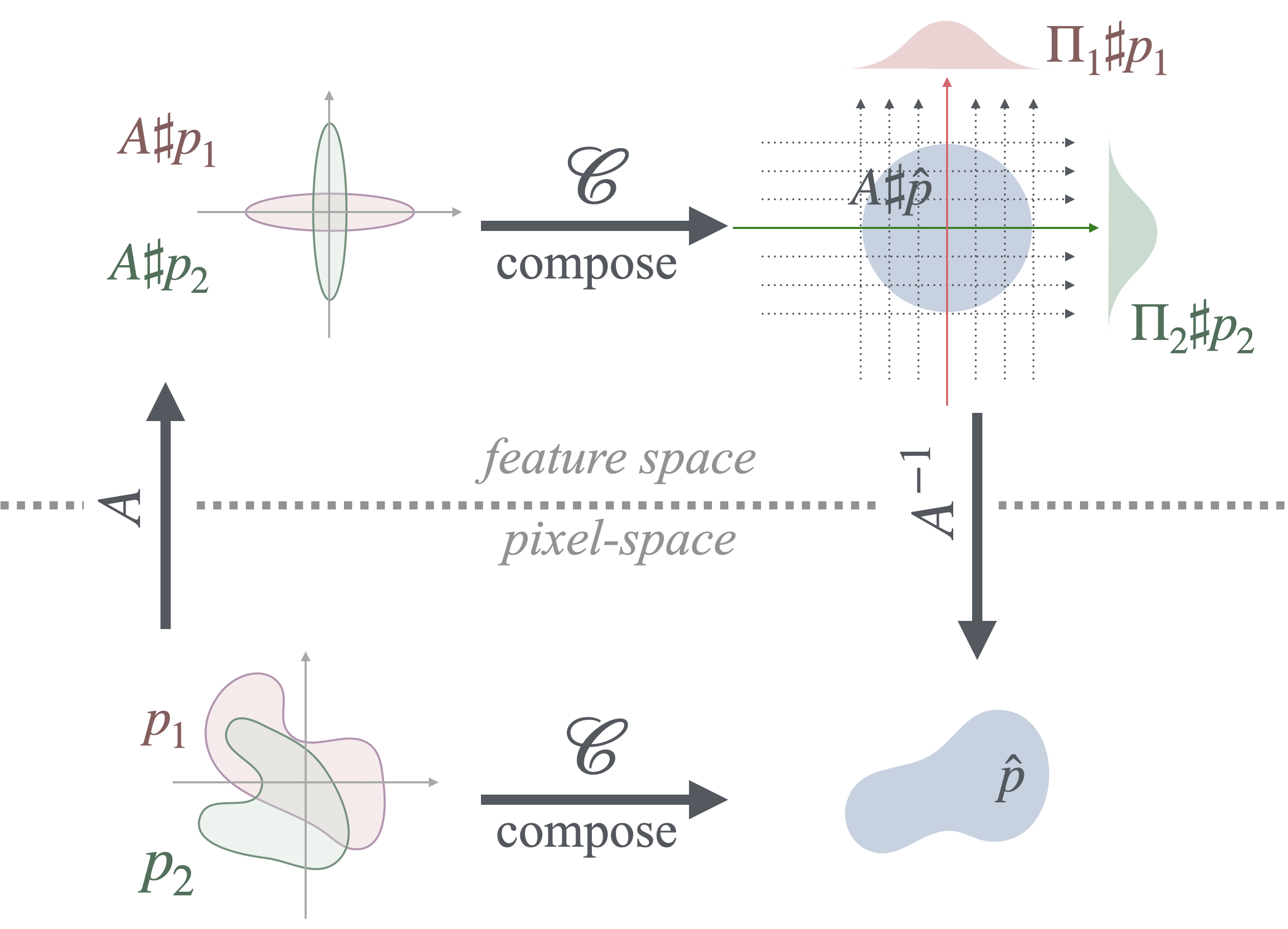}
    \caption{A commutative diagram illustrating Theorem~\ref{lem:transform_comp}.
    Performing composition in pixel space is equivalent 
    to encoding into a feature space ($\cA$),
    composing there,
    and decoding back
    to pixel space ($\cA^{-1}$).
    This holds for \emph{all} feature spaces
    subject to smoothness conditions.
    Thus, if there exists some feature space 
    where distributions $p_1, p_2$ projectively compose
    (e.g. due to orthogonality as illustrated here),
    then we can achieve this same composition
    by simply operating in pixel space,
    without needing to know the feature space.
    }
    \label{fig:feat-space-vis}
\end{figure}

So far we have focused on the setting where the projection functions $\Pi_i$ are simply projections onto coordinate subsets $M_i$ in the native space (e.g. pixel space).
This covers simple examples like Figure~\ref{fig:len_gen} but does not include more realistic situations such as Figure~\ref{fig:style-content},
where the properties to be composed are more abstract.
For example a property like ``oil painting'' does not correspond to projection
onto a specific subset of pixels in an image.
However, we may hope that there exists some conceptual feature space
in which ``oil painting'' does correspond to a particular subset of variables.
In this section, we extend our results to the case where the composition occurs in some conceptual feature space, and each distribution to be composed
corresponds to some particular subset of \emph{features}.

Our main result is a featurized analogue of Theorem~\ref{lem:compose}:
if there exists \emph{any} invertible transform $\cA$
mapping into a feature space
where Definition \ref{def:factorized} holds,
then the composition operator (Definition~\ref{def:comp_oper})
yields a projective composition in this feature space, as shown in Figure~\ref{fig:feat-space-vis}.

\begin{theorem}[Feature-space Composition]
\label{lem:transform_comp}
Given distributions $\vec{p} := (p_b, p_1, p_2, \dots p_k)$,
suppose there exists a $C^1$ diffeomorphism $\cA: \R^n \to \R^n$ (that is, $\cA$ and $\cA^{-1}$ should be differentiable)
such that
$(\cA \sharp p_b, \cA \sharp p_1, \dots \cA \sharp p_k)$
satisfy Definition~\ref{def:factorized},
with corresponding partition $M_i \subseteq [n]$.
Then, the composition $\hat{p} := \cC[\vec{p}]$ satisfies:
\begin{align}
\label{eqn:p_hat_A}
\cA \sharp \hat{p}(z)
\equiv
(\cA \sharp p_b (z))|_{M_b} \prod_{i=1}^k (\cA \sharp p_i(z))|_{M_i}.
\end{align}
Therefore, $\hat{p}$
is a projective composition of $\vec{p}$ w.r.t. projection functions
$\{\Pi_i(x) := \cA(x)|_{M_i}\}$.
\end{theorem}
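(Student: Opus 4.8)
The plan is to reduce Theorem~\ref{lem:transform_comp} to two facts: (i) the composition operator $\cC$ commutes with push-forward under an arbitrary diffeomorphism, i.e.\ $\cA \sharp \cC[\vec{p}] = \cC[\cA \sharp \vec{p}]$; and (ii) the density-level identity established in the proof of Theorem~\ref{lem:compose}, which identifies $\cC$ applied to a Factorized-Conditionals family with the product of its block marginals. Granting (i) and (ii), and writing $\hat p := \cC[\vec p]$, we get
\[
\cA \sharp \hat{p} \;=\; \cA \sharp \cC[\vec{p}] \;=\; \cC[\cA \sharp \vec{p}] \;=\; (\cA \sharp p_b)|_{M_b} \prod_{i=1}^k (\cA \sharp p_i)|_{M_i},
\]
which is exactly Equation~\eqref{eqn:p_hat_A}; the projective-composition claim then follows by marginalization, as explained below.

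For (i), write $q_i := \cA \sharp p_i$ and $J(z) := |\det D\cA^{-1}(z)|$, so that $q_i(z) = p_i(\cA^{-1}(z))\, J(z)$ by the change-of-variables formula. Substituting into Definition~\ref{def:comp_oper},
\[
\cC[\vec q](z) = \frac{1}{Z'}\, q_b(z) \prod_i \frac{q_i(z)}{q_b(z)} = \frac{1}{Z'}\, J(z)\, p_b(\cA^{-1}(z)) \prod_i \frac{p_i(\cA^{-1}(z))}{p_b(\cA^{-1}(z))},
\]
since the factor $J(z)$ survives exactly once from the leading term $q_b(z)$ and cancels in every ratio $q_i(z)/q_b(z)$. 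The right-hand side is a positive multiple of $J(z)\,\cC[\vec p](\cA^{-1}(z)) = (\cA \sharp \cC[\vec p])(z)$. Both $\cC[\vec q]$ and $\cA \sharp \cC[\vec p]$ are probability densities -- and here well-definedness is not an extra hypothesis, since by fact (ii) applied to the Factorized-Conditionals family $\vec q$, $\cC[\vec q]$ equals a genuine product of marginal densities and is therefore integrable with $Z' < \infty$ -- so, being proportional and each of total mass one, they coincide. This is the key step, and the only real subtlety is the Jacobian bookkeeping (verifying that exactly one power of $J$ survives) together with matching the normalization constants.

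Finally, to obtain projective composition, note that $\Pi_i = R_{M_i} \circ \cA$, where $R_{M_i}$ denotes restriction to the coordinates $M_i$; hence $\Pi_i \sharp \hat p = R_{M_i} \sharp (\cA \sharp \hat p)$. The right-hand side of Equation~\eqref{eqn:p_hat_A} is a product measure over the partition $\{M_b, M_1, \dots, M_k\}$ of $[n]$, so its marginal on the block $M_i$ is precisely the factor $(\cA \sharp p_i)|_{M_i}$; on the other hand $\Pi_i \sharp p_i = R_{M_i} \sharp (\cA \sharp p_i) = (\cA \sharp p_i)|_{M_i}$. Thus $\Pi_i \sharp \hat p = \Pi_i \sharp p_i$ for every $i$, which is Definition~\ref{def:proj_comp}. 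I would close with the remark that, in contrast to Theorem~\ref{lem:compose}, this argument yields only the density-level (projective-composition) statement and not a reverse-diffusion sampling guarantee, because the forward noise operator $N_t$ commutes with orthogonal maps but not with a general diffeomorphism $\cA$ -- which is exactly the gap that Section~\ref{sec:comp_feature} goes on to discuss.
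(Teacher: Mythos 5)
Your proposal is correct and follows essentially the same route as the paper: the paper also proves Theorem~\ref{lem:transform_comp} by combining reparameterization-equivariance of $\cC$ (Lemma~\ref{lem:reparam}, proved in Appendix~\ref{app:param-indep} both via a 1-homogeneity argument and via the same direct Jacobian-cancellation computation you give) with the intermediate density identity from the proof of Theorem~\ref{lem:compose}. Your explicit marginalization step and the observation that integrability of $\cC[\vec q]$ follows from fact (ii) are details the paper leaves implicit, but they do not change the argument.
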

This theorem is remarkable because it means we can
compose distributions $(p_b, p_1, p_2, \dots)$ in the base space,
and this composition will ``work correctly'' in the feature space
automatically (Equation~\ref{eqn:p_hat_A}),
without us ever needing to compute or even know the feature transform $\cA$
explicitly.

Theorem~\ref{lem:transform_comp} may apriori seem too strong
to be true, since it somehow holds for all feature spaces $\cA$
simultaneously.
The key observation underlying Theorem~\ref{lem:transform_comp} 
is that the composition operator $\cC$ behaves
well under reparameterization.
\begin{lemma}[Reparameterization Equivariance]
\label{lem:reparam}
The composition operator of Definition~\ref{def:comp_oper}
is reparameterization-equivariant. That is,
for all diffeomorphisms $\cA: \R^n \to \R^n$
and all tuples of distributions $\vec{p} = (p_b, p_1, p_2, \dots, p_k)$,
\begin{align}
 \cC[ \cA \sharp \vec{p}] =  \cA \sharp \cC[\vec{p}].
\end{align}
\end{lemma}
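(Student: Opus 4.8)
The plan is to unpack both sides of the claimed identity $\cC[\cA \sharp \vec{p}] = \cA \sharp \cC[\vec{p}]$ as explicit densities and check they agree pointwise. The only analytic ingredient is the change-of-variables formula for pushforward densities: for a diffeomorphism $\cA$ and a density $q$ on $\R^n$, the pushforward $\cA \sharp q$ has density $(\cA \sharp q)(z) = q(\cA^{-1}(z)) \, |\det D\cA^{-1}(z)|$, where $D\cA^{-1}$ is the Jacobian. The key structural observation is that the Jacobian factor $J(z) := |\det D\cA^{-1}(z)|$ is the \emph{same} for every distribution in the tuple, because it depends only on $\cA$, not on the density being transported.

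First I would write the right-hand side. By Definition~\ref{def:comp_oper}, $\cC[\vec{p}](x) = \frac{1}{Z} p_b(x) \prod_i \frac{p_i(x)}{p_b(x)}$, so applying change of variables gives
\begin{align*}
(\cA \sharp \cC[\vec{p}])(z)
&= \cC[\vec{p}](\cA^{-1}(z)) \, J(z) \\
&= \frac{1}{Z} \, p_b(\cA^{-1}(z)) \prod_i \frac{p_i(\cA^{-1}(z))}{p_b(\cA^{-1}(z))} \, J(z).
\end{align*}
Next I would write the left-hand side. Each transported distribution has density $(\cA \sharp p_i)(z) = p_i(\cA^{-1}(z)) J(z)$ and likewise for $p_b$, so
\begin{align*}
\cC[\cA \sharp \vec{p}](z)
&= \frac{1}{Z'} \, (\cA \sharp p_b)(z) \prod_i \frac{(\cA \sharp p_i)(z)}{(\cA \sharp p_b)(z)} \\
&= \frac{1}{Z'} \, p_b(\cA^{-1}(z)) J(z) \prod_i \frac{p_i(\cA^{-1}(z)) J(z)}{p_b(\cA^{-1}(z)) J(z)}.
\end{align*}
In each ratio inside the product the Jacobian factor $J(z)$ cancels between numerator and denominator, leaving exactly one leftover factor of $J(z)$ from the $(\cA\sharp p_b)(z)$ prefactor — matching the right-hand side up to the normalization constant. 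Finally I would argue the normalization constants agree: both $\cC[\vec{p}]$ and $\cC[\cA\sharp\vec{p}]$ are probability densities (assuming the relevant integral is finite, i.e. the composition is well-defined), and pushforward preserves total mass, so $Z' = Z$ and the two densities coincide.

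The main obstacle is not any single computation — each step is routine — but rather the bookkeeping of \emph{well-definedness}: the composition operator is only defined when $p_b(x) \neq 0$ wherever the product is nonzero, and when the normalizing integral converges. I would handle this by noting that $\cA$ is a diffeomorphism, hence a bijection preserving null sets and supports, so the support condition transfers verbatim between $\vec{p}$ and $\cA \sharp \vec{p}$, and finiteness of $Z$ is equivalent to finiteness of $Z'$ by the same change of variables. With that caveat dispatched, the pointwise identity above completes the proof. (Theorem~\ref{lem:transform_comp} then follows by applying this equivariance together with Theorem~\ref{lem:compose} in the feature space reached by $\cA$.)
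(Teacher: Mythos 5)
Your proof is correct and matches the paper's own argument: the paper proves this via a general lemma that pointwise 1-homogeneous operators commute with pushforward (the Jacobian factor being common to all inputs and factoring out), and also gives exactly your direct change-of-variables computation as an alternative proof in Corollary~\ref{corr:comp-param-indep}. Your additional care about normalization constants and well-definedness is sound and slightly more explicit than the paper's treatment.
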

\arxiv{\footnote{
For example (separate from our goals in this paper):
Classifier-Free-Guidance can be seen as an instance of the composition operator.
Thus, Lemma~\ref{lem:reparam} implies that performing CFG
in latent space is \emph{equivalent} to CFG in pixel-space,
assuming accurate score-models in both cases.}}
\arxiv{This lemma is potentially of independent interest:
reparametrization-equivariance
is a very strong property which is typically not satisfied by
standard operations between probability distributions---
for example, the ``simple product'' $p_1(x)p_2(x)$ does not satisfy it---
so it is mathematically notable that the composition operator 
has this structure.
Lemma~\ref{lem:reparam} and Theorem~\ref{lem:transform_comp}
are proved in Appendix \ref{app:param-indep}.}

This lemma is potentially of independent interest:
equivariance distinguishes the composition operator
from many other common operators
(e.g. the simple product).
Lemma ~\ref{lem:reparam} and Theorem~\ref{lem:transform_comp}
are proved in Appendix \ref{app:param-indep}.

\section{Sampling from Compositions.}
The feature-space Theorem~\ref{lem:transform_comp} is weaker than Theorem~\ref{lem:compose}
in one important way: it does not provide a sampling algorithm.
That is, Theorem~\ref{lem:transform_comp} guarantees that $\hat{p} := \cC[\vec{p}]$
is a projective composition, but does not guarantee that reverse-diffusion
is a valid sampling method.

There is one special case where diffusion sampling \emph{is} guaranteed to work, namely, for orthogonal transforms (which can seen as a straightforward extension of the coordinate-aligned case of \cref{lem:compose}):
\begin{lemma}[Orthogonal transform enables diffusion sampling]
\label{lem:orthogonal_sampling}
If the assumptions of Lemma \ref{lem:transform_comp} hold for $\cA(x) = Ax$, where $A$ is an orthogonal matrix, then running a reverse diffusion sampler with scores $s_t = \grad_x \log \cC[\vec{p}^t]$ generates the composed distribution $\hat{p} = \cC[\vec{p}]$ satisfying \eqref{eqn:p_hat_A}.
\end{lemma}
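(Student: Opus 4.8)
The plan is to transfer the sampling guarantee of Theorem~\ref{lem:compose} from the feature coordinates $z = Ax$ back to the base coordinates, using the fact that reverse diffusion is \emph{equivariant} under an orthogonal change of variables. Write $\vec{q} := \cA \sharp \vec{p} = (q_b, q_1, \dots, q_k)$ with $q_i := A \sharp p_i$. By hypothesis $\vec{q}$ satisfies Definition~\ref{def:factorized} with partition $\{M_i\}$, so Theorem~\ref{lem:compose} applies in the $z$-coordinates: running the reverse-diffusion SDE with scores $\tilde{s}_t(z) := \grad_z \log \cC[\vec{q}^t](z)$, started from the diffusion prior (an isotropic Gaussian, hence rotation-invariant), produces a sample $Z_0$ with law $q_b(z|_{M_b}) \prod_i q_i(z|_{M_i})$. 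By Lemma~\ref{lem:reparam}, $\cC[\vec{q}] = \cA \sharp \cC[\vec{p}] = \cA \sharp \hat{p}$, and by Theorem~\ref{lem:transform_comp} (Equation~\eqref{eqn:p_hat_A}) this equals exactly $q_b(z|_{M_b}) \prod_i q_i(z|_{M_i})$; hence $Z_0 \sim \cA \sharp \hat{p}$. It therefore suffices to show that the process driven by $s_t := \grad_x \log \cC[\vec{p}^t]$ in the base coordinates is the $A^{-1}$-image of the process driven by $\tilde{s}_t$ in the feature coordinates.

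I would establish this via two facts, both using only that $A$ is orthogonal. First, the forward noising operator commutes with $A$: since isotropic Gaussian noise is rotation-invariant, $N_t[A \sharp r] = A \sharp N_t[r]$ for every distribution $r$, hence $\vec{q}^t = \cA \sharp (\vec{p}^t)$, and Lemma~\ref{lem:reparam} then gives $\cC[\vec{q}^t] = \cA \sharp \cC[\vec{p}^t]$. Second, pushforward by an orthogonal map transforms scores by the chain rule: since $|\det A| = 1$, the density of $A \sharp r$ at $z$ equals $r(A^{-1}z)$, so $\grad_z \log (A \sharp r)(z) = A\, \grad_x \log r(A^{-1} z)$; applied to $r = \cC[\vec{p}^t]$ this reads $\tilde{s}_t(z) = A\, s_t(A^{-1} z)$. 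Substituting these two identities into the reverse-diffusion SDE, and using once more that the driving Brownian motion is rotation-invariant, one checks term-by-term that if $(X_t)$ solves the reverse SDE with scores $(s_t)$ started from the prior, then $(A X_t)$ solves the reverse SDE with scores $(\tilde{s}_t)$ started from the same rotation-invariant prior --- i.e. $(A X_t)$ has exactly the law of the process in the previous paragraph.

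Combining the two paragraphs, $A X_0 \sim \cA \sharp \hat{p}$, hence $X_0 \sim \hat{p} = \cC[\vec{p}]$, and $\hat{p}$ satisfies Equation~\eqref{eqn:p_hat_A} by Theorem~\ref{lem:transform_comp}, as claimed. I expect the only genuine work to be the bookkeeping in the SDE-equivariance step --- tracking how the drift, the score term, and the noise increment each transform under $x \mapsto Ax$ --- but each piece collapses to rotation-invariance of the Gaussian/Brownian motion together with the chain-rule identity above. The reason the statement is confined to orthogonal $\cA$, rather than the general diffeomorphisms allowed in Theorem~\ref{lem:transform_comp}, is precisely the first fact: noising commutes with $\cA$ only when $\cA$ preserves the isotropic noise covariance, which is exactly orthogonality; for a general diffeomorphism one has $\cA \sharp (\vec{p}^t) \neq (\cA \sharp \vec{p})^t$ and the reduction to Theorem~\ref{lem:compose} breaks down.
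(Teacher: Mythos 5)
Your proof is correct, and it rests on exactly the same three ingredients as the paper's: Theorem~\ref{lem:compose} applied to the factorized tuple $\cA \sharp \vec{p}$, the reparameterization equivariance of $\cC$ (Lemma~\ref{lem:reparam}), and the fact that $N_t$ commutes with orthogonal pushforwards. The difference is in how the sampling conclusion is extracted. The paper works at the level of marginal distributions: it shows the single identity $\cC[\vec{p}^t] = N_t[\cC[\vec{p}]]$ (by writing $\cC[N_t[\cA\sharp \vec p]] = N_t[\cC[\cA\sharp \vec p]]$ from Theorem~\ref{lem:compose} and then peeling off $\cA\sharp$ using equivariance and the orthogonal commutation), so that $s_t$ is literally the score of the $t$-noised $\hat p$ and standard reverse-diffusion correctness applies directly in the base space. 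You instead argue at the level of the sampling process: you keep Theorem~\ref{lem:compose}'s sampling guarantee in the feature coordinates and show the base-space reverse SDE driven by $s_t$ is the $A^{-1}$-image of the feature-space reverse SDE driven by $\tilde s_t$, via the chain-rule identity $\tilde s_t(z) = A\, s_t(A^{-1}z)$ (valid since $A^{-T} = A$ and $|\det A| = 1$) and rotation invariance of the Brownian driver and the prior. Your route requires the extra SDE-equivariance bookkeeping that the paper avoids, but it makes explicit \emph{why} the sampler itself respects the change of variables, and it isolates cleanly where orthogonality enters (noise commutation and invariance of the prior), which is the same diagnosis the paper offers for why general diffeomorphisms fail.
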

The proof is given in \cref{app:orthog_sample_pf}.

However, for general invertible transforms, we have no such sampling guarantees.
Part of this is inherent: in the feature-space setting, the 
diffusion noise operator $N_t$ no longer commutes
with the composition operator $\cC$ in general,
 so scores of the noisy composed 
distribution $N_t[\cC[\vec{p}]]$
cannot be computed from scores
of the noisy base distributions $N_t[\vec{p}]$.
Nevertheless, one may hope to sample from the distribution $\hat{p}$
using other samplers besides diffusion, 
such as annealed Langevin Dynamics
or
Predictor-Corrector methods \citep{song2020score}.
We find that the situation is surprisingly subtle:
composition $\cC$ produces distributions which
are in some cases easy to sample (e.g. with diffusion),
yet in other cases apparently hard to sample.
For example, in the
setting of Figure~\ref{fig:clevr_color_comp}, 
our Theorem~\ref{lem:transform_comp} implies
that all pairs of colors should compose equally well
at time $t=0$, since there exist diffeomorphisms
(indeed, linear transforms) between different colors.
However, as we saw,
the diffusion sampler
fails to sample from compositions 
of non-orthogonal colors--- and 
empirically, even more sophisticated
samplers such as Predictor-Correctors
also fail in this setting.
At first glance, it may seem odd that
composed distributions are so hard to sample,
when their constituent distributions are relatively easy to sample.
One possible reason for this below is that the composition operator has extremely poor Lipchitz constant,
so it is possible for a set of distributions $\vec{p}$ to ``vary smoothly''
(e.g. diffusing over time) while their composition $\cC[\vec{p}]$
changes abruptly.
We formalize this in \cref{lem:lipschitz} (further discussion and proof in Appendix \ref{app:lipschitz}).
\begin{lemma}[Composition Non-Smoothness]
\label{lem:lipschitz}
For any set of distributions $\{p_b, p_1, p_2, \dots, p_k\}$,
and any noise scale $t := \sigma$,
define the noisy distributions 
$p_i^t := N_{t}[p_i]$,
and let $q^t$ denote the composed distribution at time $t$: $q^t := \cC[\vec{p}^t]$. Then, for any choice of $\tau > 0$,
there exist distributions $\{p_b, p_1, \dots p_k\}$ over $\R^n$
such that
\begin{enumerate}
    \setlength{\itemsep}{0pt}
    \item For all $i$, the annealing path of $p_i$ is 
    $\cO(1)$-Lipshitz:
    $\forall t, t': W_2(p_i^{t}, p_i^{t'}) \leq \cO(1) |t - t'|$.
    \item The annealing path of $q$ has Lipshitz constant
    at least $\Omega(\tau^{-1})$:
    $\exists t, t': W_2(q^{t}, q^{t'}) \geq \frac{|t - t'|}{2\tau}.$
\end{enumerate}
\end{lemma}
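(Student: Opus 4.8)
The plan is to exhibit, for each $\tau>0$, an explicit family of distributions realizing the claim. First reduce to dimension $n=1$: from one‑dimensional distributions one obtains distributions on $\R^n$ by taking products with a fixed standard Gaussian on the remaining $n-1$ coordinates, and $\cC$, the noise operator $N_t$, and the $W_2$ metric all respect such products (product couplings give the upper bound needed in part~(1), and projecting a coupling onto the first coordinate gives the lower bound needed in part~(2)), so the relevant estimates change only by an absolute constant. Likewise set $p_i:=p_b$ for $i\ge 3$, so that $\cC[\vec p]\propto p_1 p_2/p_b$. Now fix the \emph{absolute} constant $\eta:=1/100$, set $D:=\max(100,\,1/\tau)$, and take the two‑component Gaussian mixtures
\begin{equation*}
p_1=p_2:=\tfrac12\,\mathcal N(0,\eta^2)+\tfrac12\,\mathcal N(D,1),\qquad
p_b:=\tfrac{1}{1+\sqrt\eta}\,\mathcal N(0,1)+\tfrac{\sqrt\eta}{1+\sqrt\eta}\,\mathcal N(D,1),
\end{equation*}
with ``modes'' at $0$ and at $D\gg 1$. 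The single feature that matters is the asymmetry at the mode $0$: there the $p_i$‑component is much \emph{narrower} (variance $\eta^2$) than the $p_b$‑component (variance $1$), whereas at the mode $D$ all three components coincide.

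For part~(1) I would couple $p_i^t$ with $p_i^{t'}$ by leaving the mixture‑weight assignment fixed and optimally transporting $\mathcal N(\mu,v)\mapsto\mathcal N(\mu,v')$, which costs $|\sqrt v-\sqrt{v'}|$. Since $N_t$ turns each variance $v$ into $v+t^2$ and $|\sqrt{v+t^2}-\sqrt{v+t'^2}|\le|t-t'|$, this coupling certifies $W_2(p_i^t,p_i^{t'})\le|t-t'|$ (and the same for $p_b$), \emph{uniformly in $D$ and $\eta$}; this is the easy half.

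For part~(2), write $q^t:=\cC[\vec p^t]\propto p_1^t p_2^t/p_b^t$. The heart of the argument is that, because $D$ dominates every Gaussian width in play (at most $\eta^2+t^2$ or $1+t^2$, and only $t\in[0,1]$ is used), $q^t$ is --- up to total‑variation and first‑moment error $e^{-\Omega(D^2)}$ --- a two‑component mixture placing mass $\propto m_0(t)$ near $0$ and mass $\propto m_D(t)$ near $D$, where, using the elementary identity $\int \mathcal N(x;0,v)^2/\mathcal N(x;0,w)\,dx=w\,/\sqrt{v(2w-v)}$ (valid for $2w>v$),
\begin{equation*}
m_0(t)=\tfrac{1+\sqrt\eta}{4}\,f(t),\qquad
m_D(t)=\tfrac{1+\sqrt\eta}{4\sqrt\eta},\qquad
f(t):=\frac{1+t^2}{\sqrt{(\eta^2+t^2)(2-\eta^2+t^2)}}.
\end{equation*}
Hence $m_0(t)/m_D(t)=\sqrt\eta\,f(t)$, and a one‑line computation gives $m_0(0)/m_D(0)=\bigl(\sqrt\eta\sqrt{2-\eta^2}\bigr)^{-1}\ge 7$ while $m_0(1)/m_D(1)=2\sqrt\eta\,/\sqrt{(1+\eta^2)(3-\eta^2)}\le\tfrac15$ for $\eta=1/100$. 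Therefore $q^0$ places at least $\tfrac78$ of its mass within $\cO(1)$ of $0$, while $q^1$ places at least $\tfrac56$ of its mass within $\cO(1)$ of $D$. Since the idealized mixtures have means $\tfrac{m_D(t)}{m_0(t)+m_D(t)}\,D$ and the residual contributions are $e^{-\Omega(D^2)}$, we get $|\mathbb E[q^1]-\mathbb E[q^0]|\ge\bigl(\tfrac56-\tfrac18\bigr)D-e^{-\Omega(D^2)}\ge D/2$ once $D\ge 100$. As $W_2(q^0,q^1)\ge\lvert\mathbb E[q^1]-\mathbb E[q^0]\rvert$ and $D\ge 1/\tau$, this yields $W_2(q^0,q^1)\ge D/2\ge\tfrac{1}{2\tau}=\tfrac{\lvert 0-1\rvert}{2\tau}$, which is exactly part~(2) with $(t,t')=(0,1)$.

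The main obstacle is the claim in the previous paragraph: rigorously controlling $q^t$ as a small perturbation of the clean two‑mode mixture. Concretely one must show that the cross terms $\mathcal N(\cdot;0,\cdot)\mathcal N(\cdot;D,\cdot)$ in the numerator, the ``valley'' $\{\,|x|,|x-D|\gtrsim 1\,\}$ (including the transition region near $x=D/2$, where the local Gaussian approximations are delicate but $q^t$ is pointwise $e^{-\Omega(D^2)}$‑small), and the far tails each contribute only $e^{-\Omega(D^2)}$ to both the mass and the first moment of $q^t$. The structural reason this works is that $p_1^t p_2^t$ carries \emph{two} Gaussian factors against $p_b^t$'s \emph{one}, so the ratio decays strictly faster than $p_b^t$ once one leaves the modes; the remaining estimates are routine Gaussian tail bounds. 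The one point requiring care in the setup is the choice of scales: $\eta$ must be an absolute constant (independent of $\tau$) so that part~(1)'s Lipschitz bound and the concentration estimates are uniform, while only $D$ is allowed to grow like $1/\tau$ --- which is consistent precisely because every error term is $e^{-\Omega(D^2)}$ and hence negligible as soon as $D$ exceeds a fixed constant.
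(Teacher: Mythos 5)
Your construction is correct, and it reaches the lemma by a genuinely different route from the paper. The paper's counterexample lives in $\R^4$: it takes $p_0,p_1$ to be two-cluster Gaussian mixtures with covariance $\tau^2(A^TA)^{-1}$ for a non-orthogonal matrix $A$, so that $\cC[\vec p]$ is an equal-weight four-cluster mixture at $t=0$ whose weights collapse to $[1-\epsilon,\epsilon,\epsilon,1-\epsilon]$ once noise of scale $\sigma_t=\tau$ is added; the $W_2$ lower bound is then extracted via the mixture-Wasserstein distance of Delon--Desolneux, and the pair $(t,t')=(0,\tau)$ is used. Your mechanism is different: you work in one dimension (padding to $\R^n$ by a common Gaussian factor, which indeed commutes with $\cC$, $N_t$, and $W_2$ up to dimension-dependent constants the lemma tolerates), let $\tau$ enter only through the mode separation $D\geq 1/\tau$ rather than through the cluster width, and exploit the fact that the ratio $\cN(\cdot;0,v)^2/\cN(\cdot;0,w)$ has mass $w/\sqrt{v(2w-v)}$, which changes by a constant factor as $v$ goes from $\eta^2$ to $\eta^2+1$; the resulting flip of the mode-weight ratio from $\geq 7$ to $\leq 1/5$ moves the mean by $\Omega(D)$, and the elementary bound $W_2\geq |\E q^1-\E q^0|$ finishes part (2) with $(t,t')=(0,1)$. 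I checked your closed-form weights $m_0(t),m_D(t)$ and the numerology at $\eta=1/100$; they are right, and your part (1) coupling argument (fixed mixture weights, $|\sqrt{v+t^2}-\sqrt{v+t'^2}|\leq|t-t'|$) matches the paper's GMM Lipschitz bound. What your approach buys is a lower-dimensional, more elementary lower bound that avoids the $MW_2$ machinery; what it leaves open are the Gaussian tail estimates you flag (cross term, valley, and the relative error from the two-term denominator), which are genuinely routine since every such contribution to mass and first moment is $e^{-\Omega(D^2)}$, but which the paper sidesteps by arranging exact cancellation so that $q^t$ is exactly a four-component GMM at every $t$. One cosmetic difference worth noting: the paper's example doubles as an illustration of non-orthogonal feature-space composition (Theorem~\ref{lem:transform_comp} and Figure~\ref{fig:yellow}), whereas yours is purpose-built for the Lipschitz statement alone; that is not required by the lemma.
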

Intuitively, this means that, even if projective composition is possible at $t=0$, reverse diffusion (or indeed any annealing method) may not be able to correctly sample from it.

\section{Practical implications}
\label{sec:practical}

We have presented a mathematical theory of composition.
Although this theoretical model is a simplification of reality (we do not claim its assumptions hold exactly in practice) we believe the spirit of our results carries over to practical settings,
and can help both understand empirical observations from prior work, and make predictions about the success or failure of new compositions.

\subsection{Connections with prior work}
\textbf{Independence Assumptions and Disentangled Features.}
Our theory relies on a type of independence 
between distributions, related to orthogonality between scores, which we formalize as Factorized Conditionals.
While such conditional structure typically does not exist in pixel-space,
it is plausible that a factorized structure exists in an appropriate \emph{feature space}, permitted by our theory (Section~\ref{sec:comp_feature}).
In particular, a feature space and distribution with perfectly ``disentangled'' features \citep{chen2018isolating, kim2018disentangling, yang2023disdiff, locatello2019challenging}
would satisfy our assumptions.
Conversely, if distributions are not appropriately disentangled,
our theory predicts that linear score combinations will fail to compose correctly.
This effect is well-known; see \cref{fig:dog-horse-hat}
for an example;
similar failure cases are highlighted
in \citet{liu2022compositional} as well
(such as ``A bird'' failing to compose with ``A flower'').
Regarding successful cases, style and content compositions
consistently work well in practice,
and are often taken to be disentangled features
(e.g. \citet{karras2019style,kotovenko2019content,gatys2016image,zhu2017unpaired}).

\textbf{Text conditioning with location information. }
Conditioning on location is a straightforward way to achieve factorized conditionals (provided the objects in different locations are approximately independent) since the required disjointness already holds in pixel-space. 
Many successful text-to-image compositions in \citet{liu2022compositional} use location information in the prompts, either explicitly (e.g. ``A blue bird on a tree'' + ``A red car behind the tree'')
or implicitly
(``A horse'' + ``A yellow flower field''; since horses are typically in the foreground and fields in the background).

\textbf{Unconditional Backgrounds.}
Most prior works on diffusion composition use the Bayes composition, with substantial practical success. 
As discussed in \cref{sec:clevr-details}, Bayes composition may be approximately projective in ``cluttered'' settings,
helping to explain its practical success in text-to-image settings, where images often contain many different possible objects and concepts.

\subsection{A practical heuristic}
We now discuss a simple heuristic to help predict whether Factorized Conditionals (FC) holds for a particular set of concepts. Although it is not sufficient to guarantee Projective Composition, it is easy-to-apply in practice to get an initial ``hint'' about whether composition is likely to work.

We start with a lemma showing that FC concepts satisfy a simple (necessary, but not sufficient) orthogonality condition. The short proof is in \cref{app:score_orthog}.
\begin{lemma}
Let $(p_b, p_1, \dots, p_k)$ be Factorized Conditionals. Let $\mu_i = \E_{p_i}[x]$ for $i=1, \ldots, k$ and $\mu_b = \E_{p_b}[x]$ denote the mean vectors. Then, for any $i \neq j$:
\begin{align}
  (\mu_i - \mu_b)^T (\mu_j - \mu_b) = 0.
\label{eq:mu_orthog}  
\end{align}
Thus, orthogonality between the mean difference vectors $\{\mu_i - \mu_b\}$ is a necessary (but not sufficient) condition for Factorized Conditionals.
\label{lem:heuristic}
\end{lemma}
This lemma makes precise the common intuition that some type of approximate ``concept-space orthogonality'' is required for successful composition in practice, such as LoRA task arithmetic \cite{zhang2023composing, ilharco2022editing}.

We can apply the lemma to help predict whether a new composition may be successful. Importantly, since we often do not expect FC to hold in pixel-space, we need to propose some feature-space in which FC might hold, and verify \cref{lem:heuristic} there. We choose the CLIP \citep{radford2021learning} feature-space as the proposal since it is simple-to-use and is known to provide a reasonably disentangled representation (though other feature spaces could also be used). In order to test \cref{lem:heuristic} in CLIP-space, the procedure is: for each concept distribution $p_i$, collect several representative images, compute their CLIP embeddings, and average them to obtain the mean $\mu_i$. Similarly, estimate the background mean $\mu_b$ using arbitrary images (representing the unconditional distribution). Then check whether \cref{eq:mu_orthog} approximately holds.

CLIP's goal of text-image-alignment suggest an even easier heuristic-for-the-heuristic: simply using the text-embedding for each concept. That is, we could approximate
$\mu_i$ as the CLIP text-embedding of a text description of concept $p_i$. Although CLIP has been shown to suffer from a ``modality gap'' (text and image embeddings are not perfectly aligned) \citep{liang2022gap}, the orthogonality structure we care about -- related to angles between centered-concepts -- may still be fairly well-preserved. In \cref{fig:clip} we show a preliminary experiment with both the image and text heuristics.

\begin{figure}
    \centering
    \includegraphics[width=1.0\linewidth]{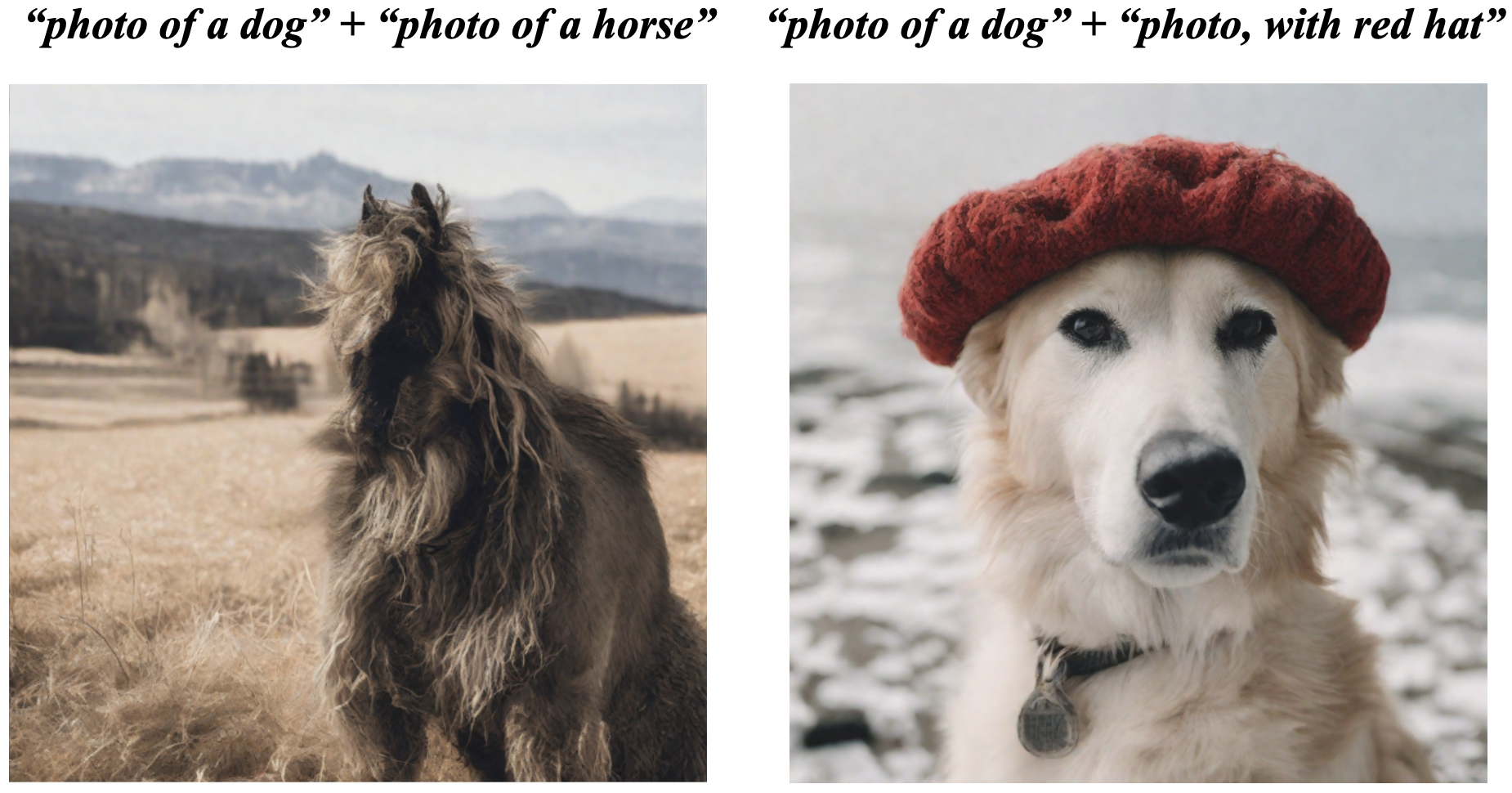}
    \caption{{\bf Composing Entangled Concepts.}
    The left image composes the text-conditions ``photo of a dog''
    with ``photo of a horse'', which both control the subject of the image,
    and produces unexpected results.
    In contrast, the right image composes ``photo of a dog''
    with ``photo, with red hat,'' which intuitively correspond
    to disentangled features.
    Both samples from SDXL using score-composition with
    an unconditional background; details in Appendix~\ref{app:sdxl_detail}.
    }
    \label{fig:dog-horse-hat}
    \vspace{-0.15in}
\end{figure}

\begin{figure}
    \centering
    \includegraphics[width=1.0\linewidth]{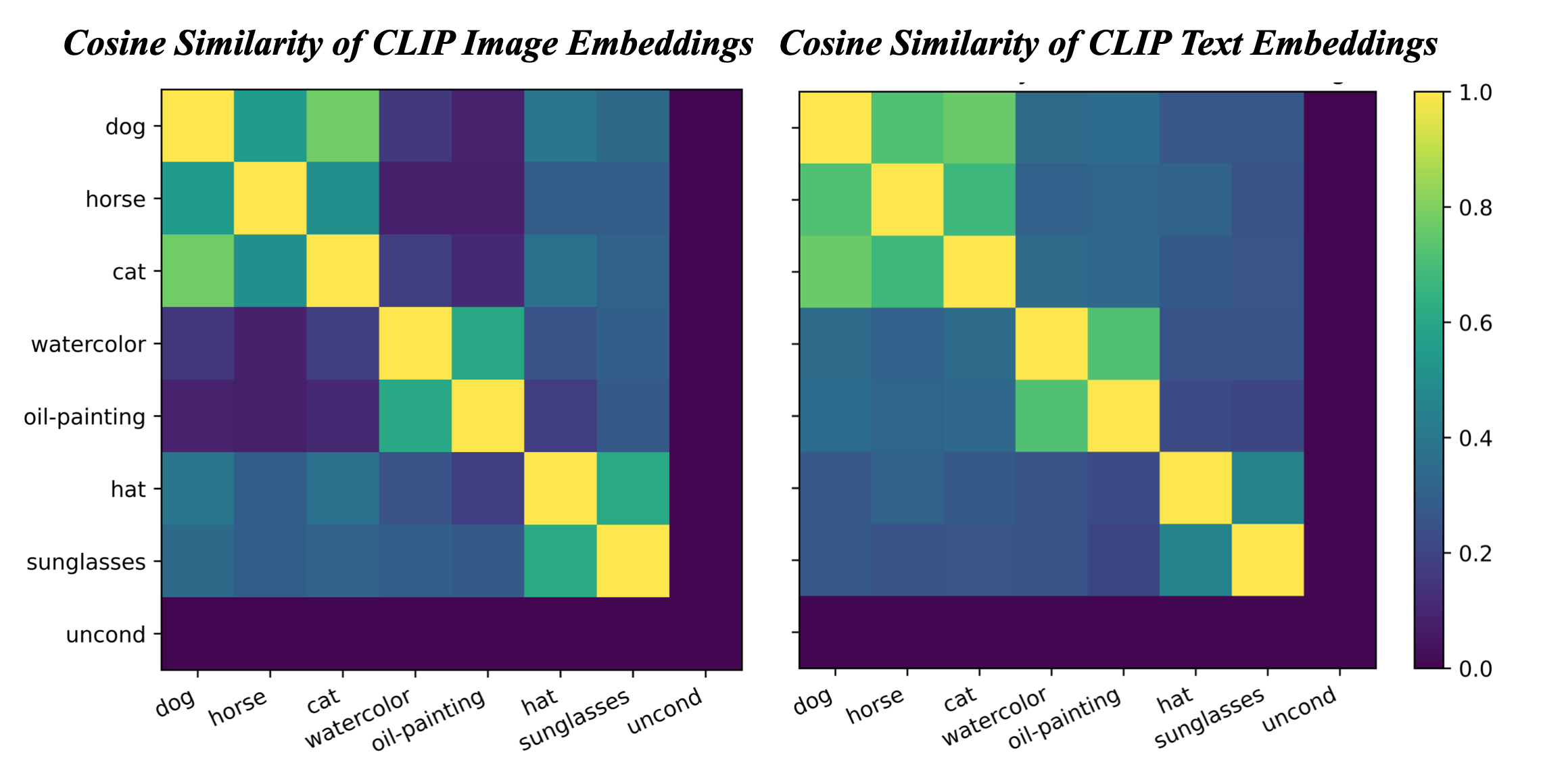}
    \caption{Cosine similarity ($\frac{u^tv}{\|u\|\|v\|}$) between mean difference vectors $\mu_i - \mu_b$ and $\mu_j - \mu_b$, for each pair of concepts $i,j$, with $\mu_i$ estimated as either the average CLIP image embedding over several images representative of concept $i$ (left) or the CLIP text embedding for a single text description of concept $i$ (right). Details in \cref{app:clip}. Note that the groups of concepts \{``dog'', ``horse'', ``cat''\}, \{``watercolor'', ``oil-painting''\}, \{``hat'', ``sunglasses''\} have high intra- and low inter-group similarity. This suggests that concepts from different groups may successfully compose (such as ``dog''+``hat'' or ``dog''+``oil-painting''), while concepts from the same group may not (such as ``dog''+``horse''), consistent with the examples in Figures \ref{fig:style-content} and \ref{fig:dog-horse-hat}.}
    \label{fig:clip}
\end{figure}

\section{Conclusion} 

In this work, we have developed a theory of
one possible mechanism of composition in diffusion models.
We study how composition can be defined, and sufficient conditions
for it to be achieved.
Our theory can help understand a range of diverse
compositional phenomena in both synthetic and practical settings,
and we hope it will inspire further work on foundations of
composition.

\section*{Acknowledgements}
Acknowledgements: We thank Miguel Angel Bautista Martin,  Etai Littwin, Jason Ramapuram, and Luca Zappella for helpful discussions and feedback throughout this work, and Preetum's dog Papaya for his contributions to Figure 1.

\bibliography{refs}
\bibliographystyle{icml2025}

\newpage
\appendix
\onecolumn
\section{Additional Related Works}
\label{app:related}

\textbf{Structured compositional generative models.} Structured generative models leverage architectural inductive biases in an encoder-decoder framework, such as recurrent attention mechanisms \cite{gregor2015draw} or slot-attention \cite{wang2023slot}. These models decompose scenes into background and parts-based representations in an unsupervised manner guided by modeling priors. While these approaches can flexibly generate scenes with single or multiple objects, they are not explicitly controllable, and require specific model pre-training on datasets containing compositions of interest.

\textbf{Controllable generation.} Composition at inference-time is one potential mechanism for exerting control over the generation process. Another way to modify compositions of style and/or content attributes is through spatial conditioning a pre-trained diffusion model on a structural attribute (e.g., pose or depth) as in  \citet{zhang2023adding}, or on multiple attributes of style and/or content as in \citet{stracke2024ctrloralter}. Another option is control through resampling, as in \citet{liu2024correcting}. These methods are complementary to single or multiple model conditioning mechanisms based on score composition that we study in the current work.

\textbf{Single model conditioning.} We distinguish the kind of composition we study in this paper from approaches that rely on a single model but use OOD conditioners to achieve novel combinations of concepts never seen together during training; for example, passing OOD text prompts to text-to-image models \citep{nichol2021glide, podell2023sdxl}, or works like \citet{okawa2024compositional, park2024emergence} where a single model conditions simultaneously on multiple attributes like shape and color, with some combinations held out during training.
In contrast, the compositions we study recombine the outputs of multiple separate models at inference time.
Though less powerful, this can still be surprisingly effective, and is more amenable to theoretical study since it disentangles the potential role of conditional embeddings.

\textbf{Multiple model composition.} Among compositions involving multiple separate models, many different variants have been explored with different goals and applications.
Some definitions of composition are inspired by logical operators like AND and OR, usually taken to mean that the composed distribution should have high probability under all of the conditional distributions to be composed, or at least one of them, respectively.
Given two conditional probabilities $p_0(x), p_1(x)$, AND is typically implemented as the product $p_0(x)p_1(x)$ and OR as sum $p_0(x) + p_1(x)$
(though these only loosely correspond to the logical operators and other implementations are also possible).
Some composition methods are based on diffusion models and use the learned scores (mainly for product compositions), others use energy-based models (which allows for OR-inspired sum compositions, as well as more sophisticated samplers, in particular sampling at $t=0$ \citep{du2020visualenergy, du2023reduce, liu2021learning}, and still others work directly with the densities \cite{skreta2024superposition} (enabling an even greater variety of compositions, including a different style of AND, taken to mean $p_0(x) = p_1(x)$). \citet{mcallister2025decentralized} explore another type of OR composition. \cite{wiedemer2024compositional} take a different approach of taking the final rendered images generated by separate diffusion models and ``adding them up'' in pixel-space, as part of a study on generalization of data-generating processes. Task-arithmetic \cite{zhang2023composing, ilharco2022editing}, often using LoRAs \cite{hu2022lora}, is a kind of composition in weight-space that has had significant practical impact.

\textbf{Product compositions.} In this work, we focus specifically on product compositions (broadly defined to allow for a ``background'' distribution, i.e. compositions of the form $\hat{p}(x) = p_b(x) \prod_i \frac{p_i(x)}{p_b(x)}$) implemented with diffusion models, which allows the composition to be implemented via a linear combinations of scores as in \citet{du2023reduce, liu2022compositional}. Our goal is not to propose
a wholly new method of composition but rather to improve theoretical understanding of existing methods.

\textbf{Learning and Generalization.}
Recently, \citet{kamb2024analytic}
demonstrated how a type of compositional generalization
arises from inductive bias in the learning procedure (equivariance
and locality).
Their findings are relevant to our broader motivation,
but complementary to the focus of this work.
Specifically, we focus only on mathematical aspects
of defining and sampling from compositional distributions,
and we do not consider any learning-theoretic aspects
such as inductive bias or sample complexity.
This allows us to study the behavior of
compositional sampling methods
even assuming perfect knowledge of the underlying distributions.

\section{CLEVR Experimental Details}
\label{app:clevr_extra}

All of our CLEVR experiments use raw conditional diffusion
scores, without applying any guidance/CFG \citep{ho2022classifier}.
Details below.

\subsection{Dataset, models, and training details}
\subsubsection{CLEVR dataset}
We used the CLEVR \cite{johnson2017clevr} dataset generation procedure\footnote{\url{https://github.com/facebookresearch/clevr-dataset-gen}} to generate datasets customized to the needs of the present work.
All default objects, shapes, sizes, colors were kept unchanged.
Images were generated in their original resolution of $320\times240$ and down-sampled to a lower resolution of $128\times128$ to facilitate experimentation and to be more GPU resources friendly.
The various datasets we generated from this procedure include:
\begin{itemize}
    \item A background dataset ($0$ objects) with 50,000 samples
    \item Single object dataset with 1,550,000 samples
    \item A dataset having 1 to 5 objects, with 500,000 samples for each object count, for a total of 2,500,000 samples.
\end{itemize}

Our experiments cover two different conditioning setups. In Figures \ref{fig:len_gen}, \ref{fig:len_gen_monster}, \ref{fig:len_gen_extra}, we condition on the 2D location of the object (or the location of one randomly-chosen object, for multi-object distributions). In Figures \ref{fig:clevr_color_comp}, \ref{fig:clever_color_comp_extra}, we condition on the color of the object. In all experiments we condition only a single attribute (either location or color) at a time, with all other attributes sampled randomly and not conditioned on.

\subsubsection{Model architecture}
We used our own PyTorch re-implementation of the EDM2 \cite{karras2024analyzing} U-net architecture.
Our re-implementation is functionally equivalent, and only differs in optimizations introduced to save memory and GPU cycles.
We used the smallest model architecture, e.g. $\texttt{edm2-img64-xs}$ from \url{https://github.com/NVlabs/edm2}.
This model has a base channel width of $128$, resulting in a total of $124\texttt{M}$ trainable weights.
Two versions of this model were used:
\begin{itemize}
    \item An unmodified version for background and class-conditioned experiments.
    \item A modified version for $(x,y)$ conditioning in which we simply replaced Fourier embeddings for the class with concatenated Fourier embeddings for $x$ and $y$.
\end{itemize}

\subsubsection{Training and inference}
In all experiments, the model is trained with a batch size of $2048$ over $128\times2^{20}$ samples by looping over the dataset as often as needed to reach that number.
In practice, training takes around $16$ hours to complete on $32$ A100 GPUs.
We used almost the same training procedure as in EDM2 \cite{karras2024analyzing}, which is basically a standard training loop with gradient accumulation.
The only difference is that we do weight renormalization after the weights are updated rather than before as the authors originally did.

For simplicity, we did not use posthoc-EMA to obtain the final weights used in inference.
Instead we took the average of weights over the last $4096$ training updates.
The denoising procedure for inference is exactly the same as in EDM2 \cite{karras2024analyzing}, e.g. $65$ model calls using a $32$-step Heun sampler.

\subsection{Factorized Conditionals in CLEVR.}
\label{app:clevr-details}

\subsubsection{Single object distributions with empty background}
Let us explicitly describe how
our definition of Factorized Conditionals
captures the CLEVR setting of Figures~\ref{fig:len_gen} and~\ref{fig:len_gen_monster}a.
Recall, the background distribution $p_b$ 
over $n$ pixels is images of an empty scene with no objects.
For each $i \in \{1,\ldots,L\}$ (where $L = 4$ in \cref{fig:len_gen} and $L=9$ in \cref{fig:len_gen_monster}(a)) define the set $M_i \in [n]$ 
as the set of pixel indices surrounding location $i$.
Each $M_i$ should be thought of as a ``mask'' that
that masks out objects at location $i$.
Then, let $M_b := (\cup_i M_i)^c$ be the remaining
pixels in the image, excluding all the masks.
Now we claim the distributions $(p_b, p_1, \ldots, p_L)$
are approximately Factorized Conditionals, with corresponding
coordinate partition $(M_b, M_1, \ldots, M_L)$.
We can confirm each criterion in Definition~\ref{def:factorized}
individually:
\begin{enumerate}
    \setlength{\itemsep}{1pt}
    \item In each distribution $p_i$, the pixels inside the
    mask $M_i$ are approximately independent from the pixels outside the mask,
    since the outside pixels always describe an empty scene.
    \item In the background $p_b$,
    the set of masks $\{M_i\}$ specify approximately
    mutually-independent sets of pixels,
    since all pixels are roughly constant.
    \item The distribution of $p_i$ and $p_b$ approximately agree 
    along all pixels outside mask $M_i$, since they both
    describe an empty scene outside this mask.
\end{enumerate}
Thus, the set of distributions approximately form
Factorized Conditionals. However the conditions of Definition~\ref{def:factorized} do not \emph{exactly} hold, since objects can cast shadows on each other and may even occlude each other. Empirically, this can significantly affect the results when composing many objects, as explored in Figure \ref{fig:len_gen_monster}(a).

\subsubsection{Cluttered distributed with unconditional background}
\begin{figure}[hb]
\vskip 0.2in
\begin{center}
\centerline{
\includegraphics[width=0.8\columnwidth]{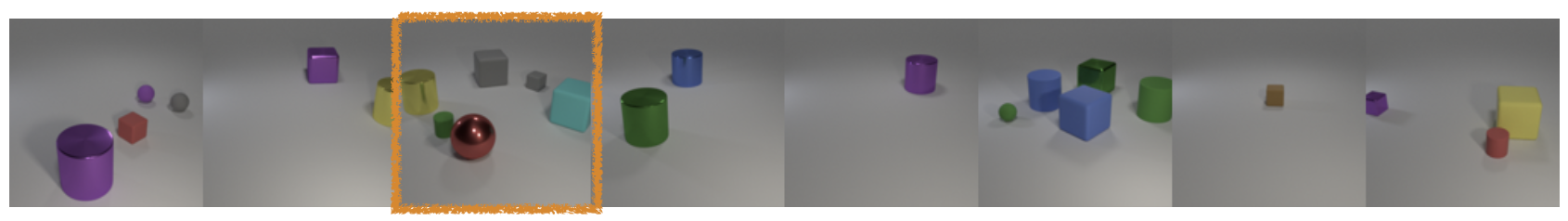}
}
\caption{Samples from unconditional model trained on images containing 1-5 objects. The sampled images sometimes contain 6 objects (circled in orange).}
\label{fig:clutter_uncond_6}
\end{center}
\vskip -0.2in
\end{figure}

Next, we discuss the setting of \cref{fig:len_gen_monster}c, which is a Bayes composition based on an unconditional distribution where each scene contains 1-5 objects (with the number of objects sampled uniformly). The locations and all other attributes of the objects are sampled independently. The conditions label the location of one randomly-chosen object. Just as in the previous case, for each $i \in \{1,2, \ldots, L\}$ ($L=9$ in \cref{fig:len_gen_monster}c), we define the set $M_i \in [n]$  as the set of pixel indices surrounding location $i$, and let $M_b := (\cup_i M_i)^c$ be the remaining pixels in the image, excluding all the masks. Again, we claim that the distributions $(p_b, p_1, \ldots, p_L)$
are approximately Factorized Conditionals, with corresponding
coordinate partition $(M_b, M_1,\ldots, M_L)$. We examine the criteria in Definition~\ref{def:factorized}:
\begin{enumerate}
    \setlength{\itemsep}{1pt}
    \item In each distribution $p_i$, the pixels inside the
    mask $M_i$ are approximately independent from the pixels outside the mask,
    since the outside pixels approximately describe a distribution containing 0-4 objects, and the locations and other attributes of all objects are independent.
    \item In the unconditional background distribution $p_b$, we argue that in practice, the set of masks $\{M_i\}$ are approximately
    mutually-independent. By assumption, the locations and other attributes of all shapes are all independent, and the masks $M_i$ are chosen in these experiment to minimize interaction/overlap. The main difficulty is the restriction to 1-5 total objects, which we discuss below.
    \item The distribution of $p_i$ and $p_b$ approximately agree 
    along all pixels outside mask $M_i$, since $p_i|_{M_i^c}$ contains 0-4 objects, while $p_b|_{M_i^c}$ contains 0-5 objects (since one object could be `hidden' in ${M_i^c}$).
\end{enumerate}
There are, however, two important caveats to the points above. First, overlap or other interaction (shadows, etc.) between objects can clearly violate all three criteria. In our experiment, this is mitigated by the fact that the masks $M_i$ are chosen to minimize interaction/overlap (though interactions start to occur as we compose more objects, leading to some image degradation). Second, since the number of objects is sampled uniformly from 1-5, the presence of one object affects the probability that another will be present. Thus, the masks $\{M_i\}$ are not perfectly independent under the background distribution, nor do $p_i$ and $p_b$ perfectly agree on $M_i^c$. Ideally, each $p_i$ would place an object in mask $M_i$ and independently follow $p_b$ on $M_i^c$, and $p_b$ would be such that the probability that an object appears in mask $M_i$ is independently Bernoulli (c.f. \cref{app:clutter}). In particular, this would imply that the distribution of the total number of objects is Binomial (which allows the total object-count to range from zero to the total-number-of-locations, as well as placing specific probabilities on each object-count), which clearly differs from the uniform distribution over 1-5 objects. However, a few factors mitigate this discrepancy:
\vspace{-1em}
\begin{itemize}
\item A Binomial with sufficiently small probability-of-success places very little probability on large $k$. For example, under $\text{Binomial}(9, 0.3)$, $\mathbb{P}(k=0:5) = 0.04, 0.156, 0.27, 0.27, 0.17, 0.07$ and $\mathbb{P}(k>5) = 0.026$.
\vspace{-0.5em}
\item Empirically, the \emph{learned} unconditional distribution does not actually enforce $k<5$; we sometimes see samples with $k=6$ for example, as seen in \cref{fig:clutter_uncond_6}.
\end{itemize}
\vspace{-1em}
Intuitively, the train distribution is ``close to Bernoulli'' and the \emph{learned} distribution seems to be even closer.

With these considerations in mind, we see that the set of distributions approximately -- though imperfectly -- form
Factorized Conditionals. One advantage of this setting compared to the single-object setting is that the models can learn how multiple objects should interact and even overlap correctly, potentially making it easier compose nearby locations. We explore the length-generalization of this composition empirically in Figure \ref{fig:len_gen_monster}c (note, however, that only compositions of more than 5 objects are actually OOD w.r.t. the distributions $p_i$ in this case).

\subsection{Additional CLEVR samples}
In this section we provide additional non-cherrypicked samples of the experiments shown in the main text.

\begin{table}[htbp]
\centering
\caption{Quantitative analysis of different methods of composition of location-conditioned CLEVR distributions. We generated 100 samples using each composition method, and manually counted (to avoid any potential error in using a classifier) the objects in correct locations (i.e. locations corresponding to the conditioners of the distributions being composed) in each generated image. The table shows the histogram of these manual counts, that is, each column lists the number of images that contained the given number of objects in correct locations. $N$ denotes number of distributions being composed (hence the expected number of objects) -- we test $N=1$ through $N=6$.
\textbf{Single-object empty} composes single-object object distributions with an empty background.
\textbf{Single-object Bayes} composes single-object object distributions with an unconditional background.
\textbf{Bayes-cluttered} composes 1-5 object distributions (with location label assigned to a single randomly-chosen object) with an unconditional background.
}
\label{tab:clevr_counts}
\begin{tabular}{| l | c | r  r  r  r  r  r  r |}
\hline
\textbf{Style} & \textbf{N} & \textbf{0} & \textbf{1} & \textbf{2} & \textbf{3} & \textbf{4} & \textbf{5} & \textbf{6} \\
\hline
Single-object empty 
                  & 1 &   & 100 &     &    &    &    &    \\
                  & 2 &   &     & 100 &    &    &    &    \\
                  & 3 &   &     & 1   & 99 &    &    &    \\
                  & 4 &   &     &     & 2  & 98 &    &    \\
                  & 5 &   &     &     &    & 2  & 98 &    \\
                  & 6 &   &     &     &    &    & 3  & 97 \\
\hline
Single-object Bayes
                  & 1 &   & 100 &     &    &    &    &    \\
                  & 2 &   & 10  & 67  & 32 &    &    &    \\
                  & 3 &   & 36  & 62  & 2  &    &    &    \\
                  & 4 & 77& 23  &     &    &    &    &    \\
                  & 5 & 66& 32  & 2   &    &    &    &    \\
                  & 6 &   & 34  & 6   & 3  &    &    &    \\
\hline
Bayes-cluttered
                  & 1 &   & 100 &     &     &     &    &    \\
                  & 2 &   &     & 100 &     &     &    &    \\
                  & 3 &   &     &     & 100 &     &    &    \\
                  & 4 &   &     &     &     & 100 &    &    \\
                  & 5 &   &     &     &     & 2   & 98 &    \\
                  & 6 &   &     &     &     &     & 2  & 98 \\
\hline
\end{tabular}
\end{table}

\begin{figure}[ht]
\vskip 0.2in
\begin{center}
\centerline{
\includegraphics[width=1.0\columnwidth]{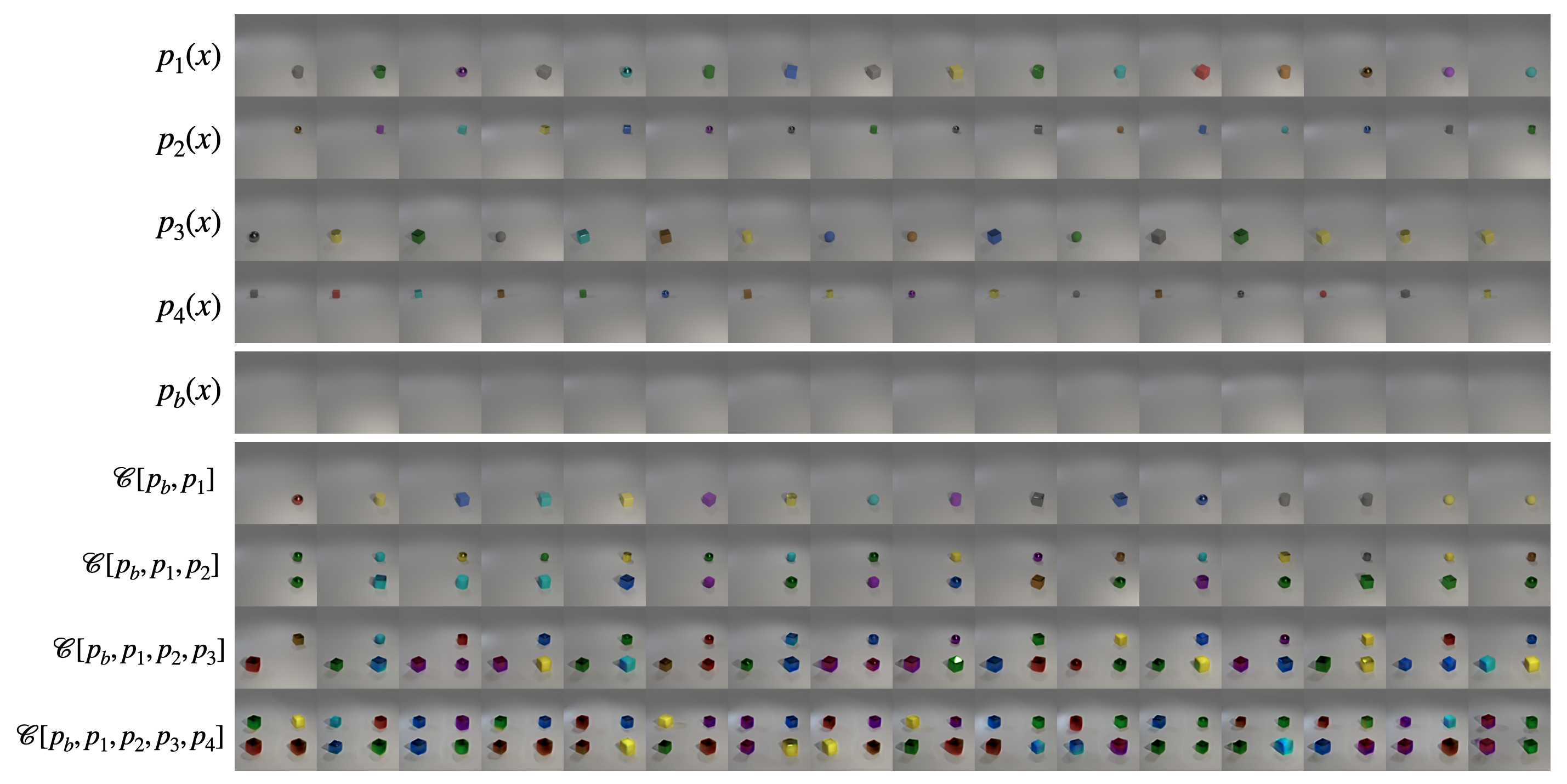}
}
\caption{Additional non-cherrypicked samples for CLEVR experiment of Figure \ref{fig:len_gen}.}
\label{fig:len_gen_extra}
\end{center}
\vskip -0.2in
\end{figure}

\begin{figure}[hb]
\vskip 0.2in
\begin{center}
\centerline{
\includegraphics[width=1.0\columnwidth]{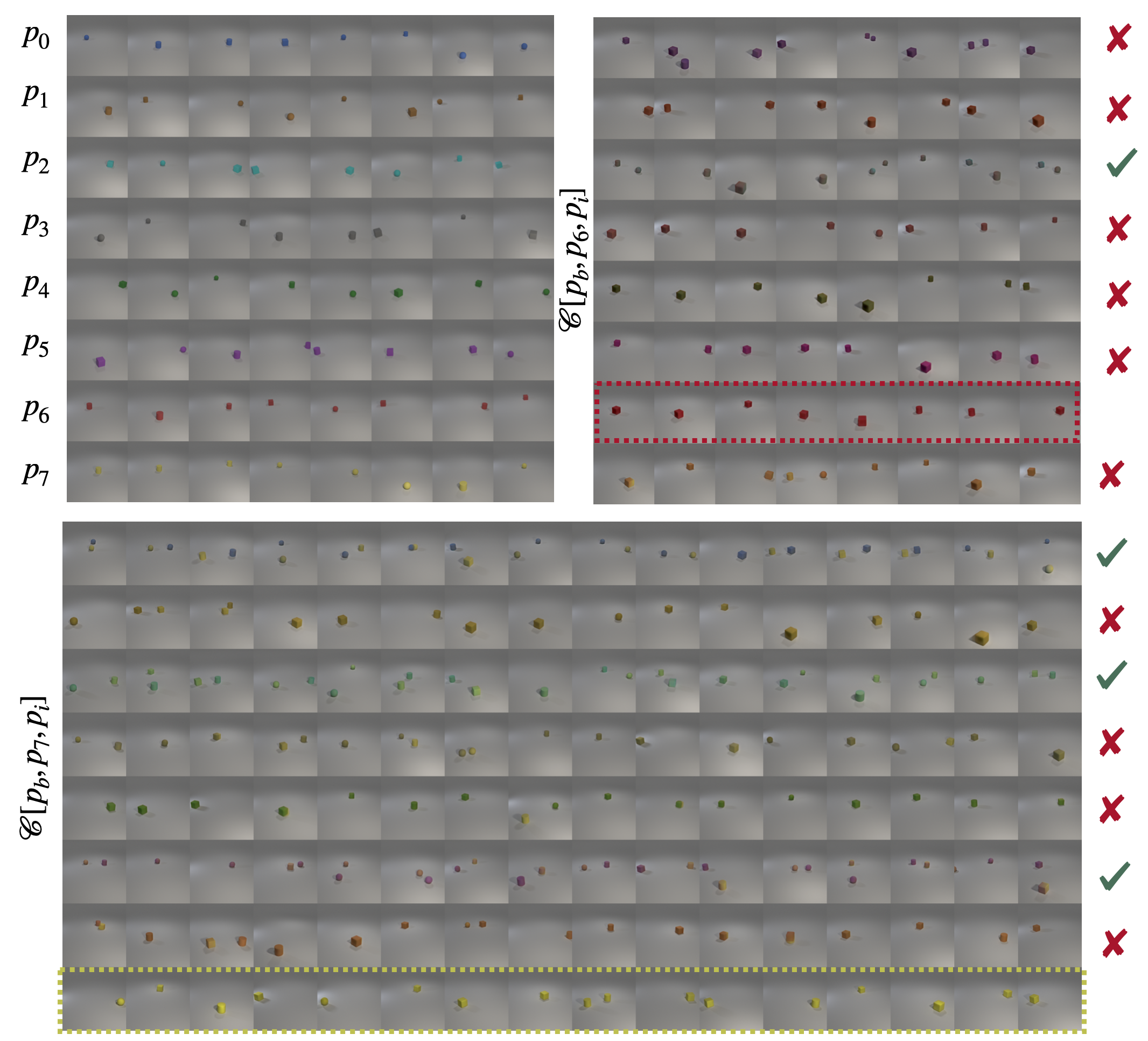}
}
\caption{Additional non-cherrypicked samples for CLEVR experiment of Figure \ref{fig:clevr_color_comp}. Top left grid shows conditional samples for each color. Top right grid shows compositions of red-colored objects ($p_6$) with objects of other colors (8 samples of each), which only succeeds for cyan-colored objects. Bottom grid shows compositions of yellow-colored objects ($p_7$) with objects of other colors (16 samples of each): these are additional samples of the exact experiment shown in Figure \ref{fig:clevr_color_comp}.}
\label{fig:clever_color_comp_extra}
\end{center}
\vskip -0.2in
\end{figure}

\section{SDXL experimental details}
\label{app:sdxl_detail}

\subsection{Figure~\ref{fig:style-content}}

The two models composed are
\begin{enumerate}
    \item An SDXL model \citep{podell2023sdxl} fine-tuned 
    on 30 personal photos of the author's dog (Papaya).
    \item SDXL-base-1.0 \citep{podell2023sdxl}
    conditioned on prompt ``an oil painting in the style of van gogh.''
\end{enumerate}

The background score distribution is the unconditional background 
(i.e. SDXL conditioned on the empty prompt).
We use the DDPM sampler \citep{ho2020denoising} with 30 steps,
using the composed score, and CFG guidance weight of $2$ \citet{ho2020denoising}.

Note that using guidance weight $1$ (i.e. no guidance)
also performs reasonably in this case, but is lower quality.

\subsection{Figure~\ref{fig:dog-horse-hat}}
{\bf Left:}
The two score models composed are
\begin{enumerate}
    \item SDXL-base-1.0 \citep{podell2023sdxl}
    conditioned on prompt ``photo of a dog''
    \item SDXL-base-1.0 \citep{podell2023sdxl}
    conditioned on prompt ``photo of a horse''
\end{enumerate}
The background score distribution is the unconditional background 
(i.e. SDXL conditioned on the empty prompt).

For improved sample quality, we use
a Predictor-Corrector method \citep{song2020score}
with the DDPM predictor and the Langevin dynamics corrector,
both operating on the composed score.
We use 100 predictor denoising steps, and 3 Langevin iterations
per step.
We do not use any guidance/CFG.

{\bf Right:}
Identical setting as above, using prompts:
\begin{enumerate}
    \item ``photo of a dog''
    \item ``photo, with red hat''
\end{enumerate}

Note that the DDPM sampler also performed reasonably in this setting,
but Predictor-Corrector methods improved quality.

\section{CLIP experiment details}
\label{app:clip}

In the CLIP experiment of \cref{fig:clip}, we used the following text prompts for each concept:
\begin{verbatim}
    "dog": "a photograph of a dog",
    "horse": "a photograph of a horse",
    "cat": "a photograph of a cat",
    "watercolor": "a watercolor painting",
    "oil-painting": "an oil painting",
    "hat": "wearing a hat",
    "sunglasses": "wearing sunglasses",
    "uncond": ""
\end{verbatim}
We did an automated collection of 10 images for each concept initially and then manually filtered them to ensure that they actually representative of each concept, leaving us with 10 for ``dog'', 2 for ``horse'', 9 for ``cat'', 3 for ``watercolor'', 4 for ``oil-painting'', 6 for ``hat'', 5 for ``sunglasses'', 10 for ``uncond'' (the latter were arbitrary images searched with no specific keyword). 

For each concept $i$, we ran the image experiment by computing the CLIP embedding of each representative image and averaging them to estimate the mean $\mu_i$. Similarly, we estimated the background mean $\mu_b$ by using the arbitrary images (representing the unconditional distribution). For the text experiment we simply estimated $\mu_i$, $\mu_b$ as the CLIP embedding of the single representative text prompt (or empty prompt).

In either case we then computed the cosine similarity between the mean difference vectors
$$ \frac{\boldsymbol{\mu}_i - \boldsymbol{\mu}_b)^T (\boldsymbol{\mu}_j - \boldsymbol{\mu}_b)}{\|\boldsymbol{\mu}_i - \boldsymbol{\mu}_b\|\|\boldsymbol{\mu}_j - \boldsymbol{\mu}_b\|}$$
to assess whether the condition of \cref{lem:heuristic}, i.e.
$\boldsymbol{\mu}_i - \boldsymbol{\mu}_b)^T (\boldsymbol{\mu}_j - \boldsymbol{\mu}_b) \approx 0$
approximately holds.

\section{Reverse Diffusion and other Samplers}
\label{app:samplers}

\subsection{Diffusion Samplers}
DDPM \citep{ho2020denoising} and DDIM \citep{song2021denoising} are standard reverse diffusion samplers \citep{sohl2015deep, song2019generative} that correspond to discretizations of a reverse-SDE and reverse-ODE, respectively (so we will sometimes refer to the reverse-SDE as DDPM and the reverse-ODE as DDIM for short).
The forward process, reverse-SDE, and equivalent reverse-ODE \citep{song2020score} for the \emph{variance-preserving} (VP) \citep{ho2020denoising} conditional diffusion are
\begin{align}
\textsf{Forward SDE}: dx &= -\half \beta_{t} x dt + \sqrt{\beta_{t}} dw. \label{eq:ddpm_sde} \\
\textsf{DDPM SDE}: \quad
dx &=
-\half \beta_{t} x ~dt 
- \beta_t \grad_x \log p_t(x|c) dt
+ \sqrt{\beta_{t}} d\bar{w}  \label{eq:ddpm} \\
\textsf{DDIM ODE}: \quad
dx &= 
-\half \beta_{t} x~dt 
- \half \beta_t \grad_x \log p_t(x|c) dt. \label{eq:ddim}
\end{align}

\subsection{Langevin Dynamics}
Langevin dynamics (LD) \citep{rossky1978brownian,parisi1981correlation}
an MCMC method for sampling from a desired distribution. It is given by the following SDE \citep{robert1999monte}
\begin{align}
    dx &= \frac{\epsilon}{2} \nabla \log \rho(x) dt + \sqrt{\epsilon} dw, \label{eq:ld}
\end{align}
which converges (under some assumptions)
to $\rho(x)$ \citep{roberts1996exponential}. That is, letting $\rho_s(x)$ denote the solution of LD at time $s$, we have $\lim_{s \to \infty} \rho_s(x) = \rho(x)$.

\section{Factorized Conditionals vs. Orthogonality}
\label{app:score_orthog}

\cref{lem:heuristic} states that Factorized Conditionals (\cref{def:factorized}) implies orthogonality between mean differences (providing a necessary-but-not-sufficient condition to check for FC). The proof is straightforward:
\begin{proof} (\cref{lem:heuristic})
\begin{align*}
p_i(x|_{M_i^c}) &= p_b(x|_{M_i^c}) \quad \text{ by FC} \\
\mu_i = \E_{p_i}[x], &\quad i=1, \ldots, k, \quad \mu_b = \E_{p_b}[x]\\
\implies (\mu_i)_{M_i^c} &= (\mu_b)_{M_i^c}  \\
\implies \text{Support}(\mu_i - \mu_b) &\subset M_i \\
\implies (\mu_i - \mu_b)^T (\mu_j - \mu_b) &= 0, \quad \text{since } M_i \cap M_j = \emptyset\\
\end{align*}
\end{proof}

Similarly, Definition \ref{def:factorized} also implies orthogonality between the score differences (recall that the score is related to the conditional mean as $\grad \log p^t(x) := \frac{1}{\sigma_t^2} \E_p[x - x_t|x_t]$, so this is closely related to \cref{lem:heuristic}). To see this:
\begin{align*}
    v_i^t(x) &:= \grad_x \log p_i^t(x_t) - \grad_x \log p_b^t(x_t) \\
    &= \grad_x \log \frac{p_i^t(x)}{p_b^t(x)} 
    = \grad_x \log \frac{p_i^t(x|_{M_i}) p_b^t(x|_{M_i^c} x)}{p_b^t(x|_{M_i}) p_b^t(x|_{M_i^c})} \\
    &= \grad_x \log \frac{p_i^t(x|_{M_i})}{p_b^t(x|_{M_i})} \\
    \implies v_i^t(x)[k] &= 0, \quad \forall k \notin M_i \\
    \implies v_i^t(x)^T v_j^t(x) &= 0, \quad \forall i \neq j, \quad \text{since } M_i \cap M_j = \emptyset,
\end{align*}
where in the second-to-last line we used the fact that the gradient of a function depending only on a subset of variables has zero entries in the coordinates outside that subset. 

In fact, the same argument implies that $\{v_i^t(x): x \in \R^n\} \subset M_i$; in other words, $\{v_i^t(x): x \in \R^n\}$ and $\{v_j^t(x): x \in \R^n\}$ occupy mutually-orthogonal subspaces. But even this latter condition does not imply the stronger condition of Definition \ref{def:factorized}. To find an equivalent definition in terms of scores we must also capture the independence of the subsets under $p_b$. Specifically:
\begin{align*}
&\left\{
    \begin{aligned}
    p_i^t(x) &= p_i^t(x|_{M_i} x) p_b^t(x|_{M_i^c} x) \\
    p_b^t(x) &= p_b^t(x|_{\bar M} x) \prod_i p_b^t(x|_{M_i})                   
    \end{aligned}
\right. \\
\iff 
&\left\{
    \begin{aligned}
     \grad_x \log p_i^t(x) &= \grad_x \log p_i^t(x|_{M_i} x) + \grad_x \log p_b^t(x|_{M_i^c} x) \\
    \grad_x \log p_b^t(x) &= \grad_x \log p_b^t(x|_{\bar M} x) + \sum_i \grad_x \log p_b^t(x|_{M_i})
    \end{aligned}
\right. \\
\iff 
&\left\{
    \begin{aligned}
     \grad_x \log p_i^t(x) - \grad_x \log p_b^t(x) 
     &= \grad_x \log \frac{p_i^t(x|_{M_i} x)}{p_b^t(x|_{M_i} x)} \\
    \grad_x \log p_b^t(x) &= \grad_x \log p_b^t(x|_{\bar M} x) + \sum_i \grad_x \log p_b^t(x|_{M_i})
    \end{aligned}
\right.
\end{align*}
So an definition equivalent to \cref{def:factorized} in terms of scores could be:
\begin{definition}
    The distributions $(p_b, p_1, p_2, \ldots)$ form \emph{factored conditionals} if the score-deltas $v_i^t := \grad_x \log p_i^t(x) - \grad_x \log p_b^t(x)$ satisfy $\{v_i^t(x): x \in \R^n\} \subset M_i$, where the $M_i$ are mutually-orthogonal subsets, and furthermore the score of the background distribution decomposes over these subsets as follows:
    $\grad_x \log p_b^t(x) = \grad_x \log p_b^t(x|_{\bar M} x) + \sum_i \grad_x \log p_b^t(x|_{M_i})$.
\end{definition}
(Note: this is actually equivalent to a slightly more general version of Definition \ref{def:factorized} that allows for orthogonal transformations, which is the most general assumption under diffusion sampling generates a projective composition, per Lemmas \ref{lem:transform_comp} and \ref{lem:orthogonal_sampling}.)

\section{Connections with the Bayes composition}
\label{app:bayes_connect}

\begin{figure}[ht]
\vskip 0.2in
\begin{center}
\centerline{
\includegraphics[width=0.48\columnwidth]{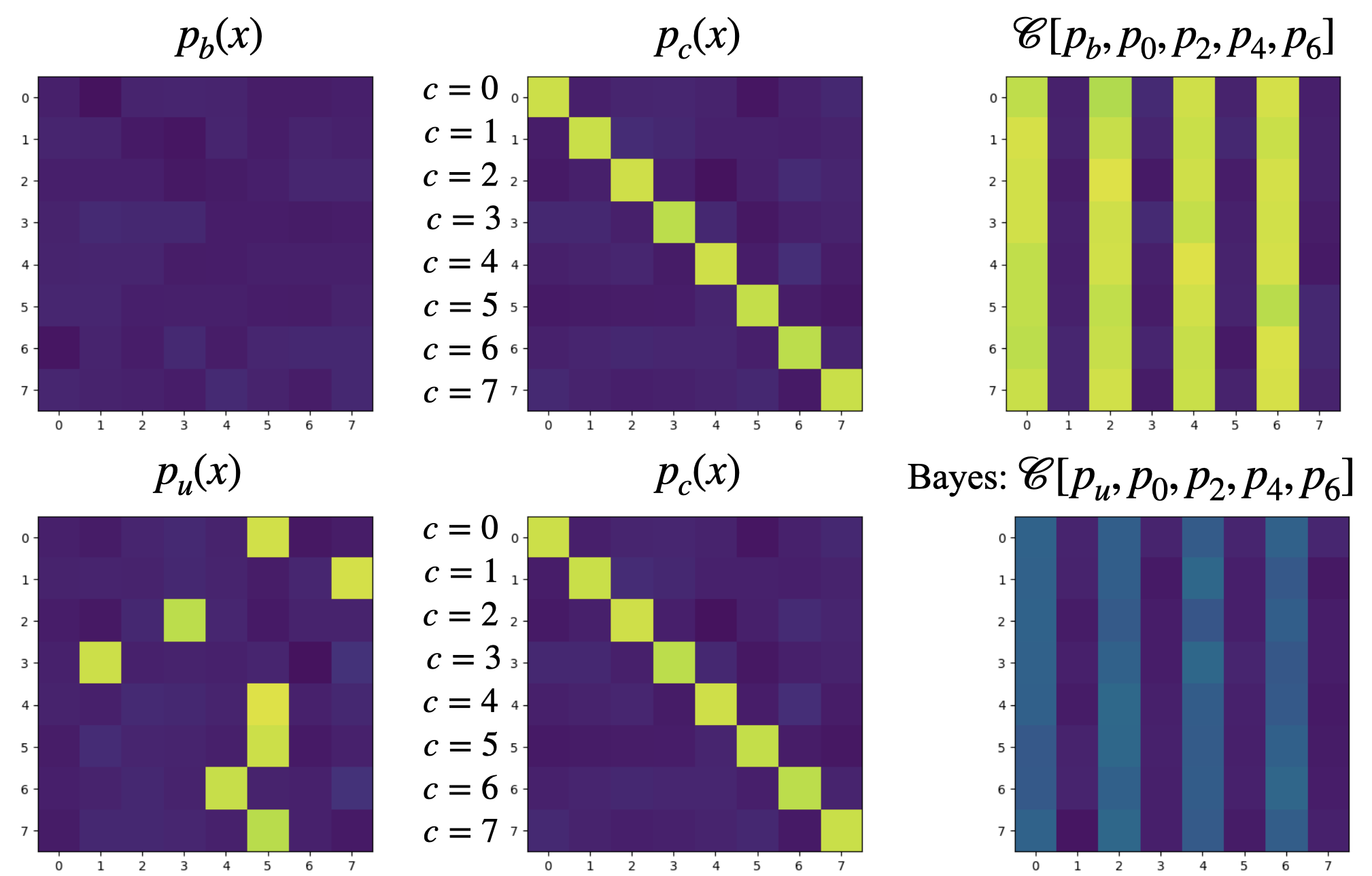}
\vline
\includegraphics[width=0.48\columnwidth]{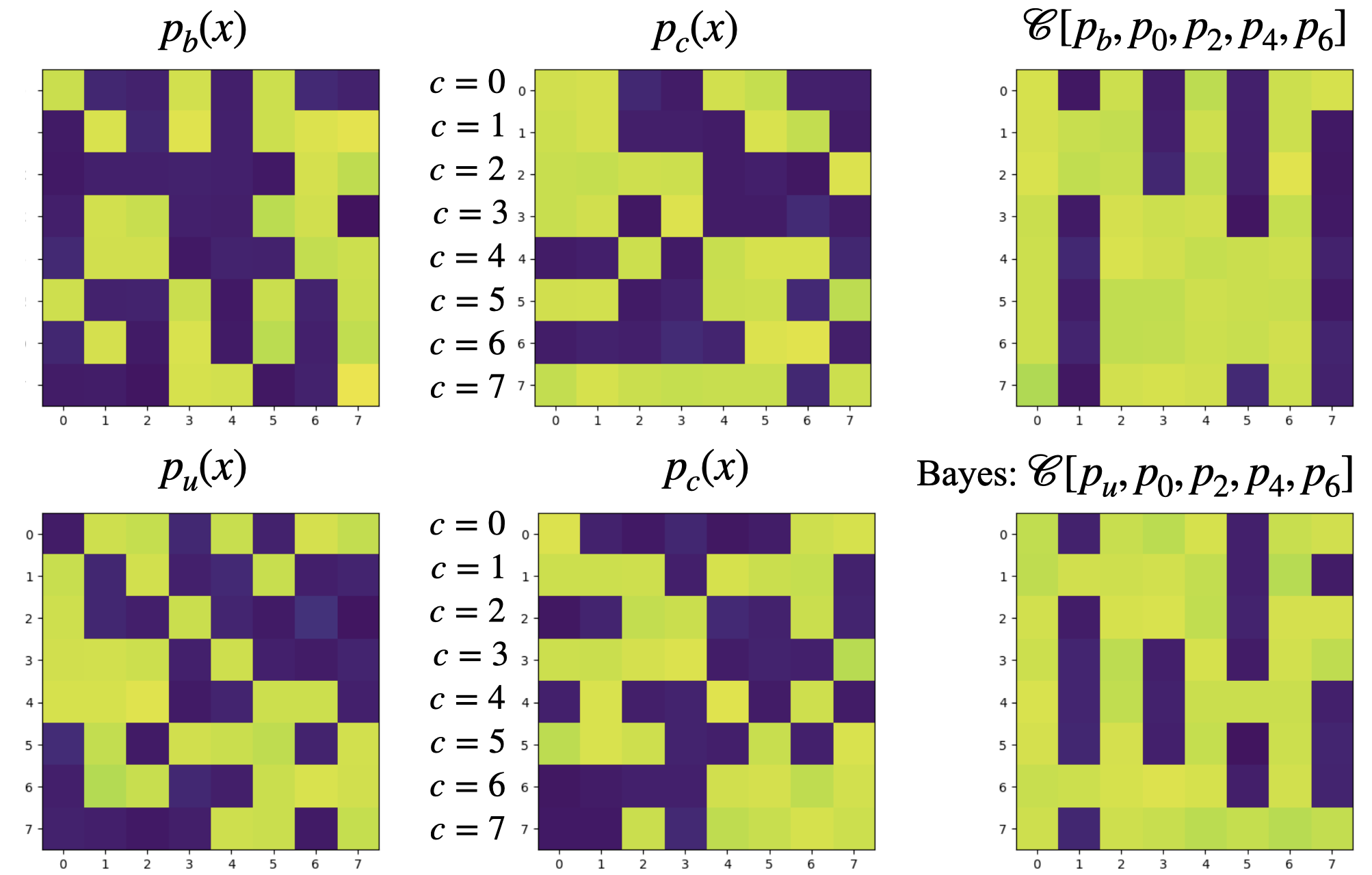}
}
\caption{Bayes composition vs. projective composition. All experiments use exact scores, which is possible since the diffusion-noised distributions are Gaussian mixtures. (Left) Distributions follow \eqref{eq:binary_single_index}: each conditional $p_i$ activates index $i$ only, unconditional $p_u$ averages over the $p_i$, and background $p_b$ is all-zeros. We attempt to compose the conditions $p_0, p_2, p_4, p_6$ and hope to obtain the result $[1, 0, 1, 0, 1, 0]$. This requires length-generalization, since each of the conditionals $p_i$ contains only a single 1. The composition using the empty background $p_b$ (top) achieves this goal, while the Bayes composition using the unconditional $p_u$ (bottom) does not. Note that $[p_b, p_1, p_2, \ldots]$ satisfy Definition \ref{def:factorized} while $[p_u, p_1, p_2, \ldots]$ does not. (Right) Distributions follow \eqref{eq:binary_bernoulli}, where each conditional $p_i$ activates index $i$ on an independently `cluttered' background. In this case the unconditional is similar to the cluttered background. Again we attempt to compose $p_0, p_2, p_4, p_6$, and in this case we find that the composition using $p_u$ works similarly well to $p_b$.}
\label{fig:bayes_binary}
\end{center}
\vskip -0.2in
\end{figure}

\subsection{The Bayes composition and length-generalization}
\label{app:bayes_counterex}

We give a counterexample for which the Bayes composition fails to length-generalize, while composition using an ``empty background'' succeeds. The example corresponds to the experiment shown in Figure \ref{fig:bayes_binary} (left). Suppose we have conditional distributions $p_i$ that set a single index $i$ to one and all other indices to zero, a zero-background distribution $p_b$, and an unconditional distribution formed from the conditionals by assuming $p(c=i)$ is uniform. That is:
\begin{align}
    p_i^t(x_t) &= \cN(x_t; e_i, \sigma_t^2) \propto \exp \left( -\frac{\|x_t - e_i\|^2}{2\sigma_t^2} \right) \notag\\
    p_b^t(x_t) &= \cN(x_t; 0, \sigma_t^2) \propto \exp \left( -\frac{\|x_t\|^2}{2\sigma_t^2} \right) \notag \\
    p_u^t(x_t) &= \frac{1}{n} \sum_{i=1}^n p_i(x_t)
    \label{eq:binary_single_index}
\end{align}
Suppose we want to compose all $n$ distributions $p_i$, that is, we want to activate all indices. It is enough to consider $x_t$ of the special form $x_t = (\alpha, \ldots, \alpha)$ since there is no reason to favor any condition over any another. Making this restriction,
\begin{align*}
    x_t = (\alpha, \ldots, \alpha) \implies 
    p_i^t(x_t) &\propto \exp \left( -\frac{(n-1)\alpha^2 + (1-\alpha)^2}{2\sigma_t^2} \right)
    = \exp \left( -\frac{n\alpha^2-2\alpha+1}{2\sigma_t^2} \right), \quad \forall i \\
    p_u^t(x_t) &= \exp \left( -\frac{n\alpha^2-2\alpha+1}{2\sigma_t^2} \right) \\
    p_b^t(x_t) &\propto \exp \left( -\frac{n\alpha^2}{2\sigma_t^2} \right)
\end{align*}

Let us find the value of $\alpha$ that maximizes the probability under the Bayes composition of all condition:
\begin{align*}
    x_t = (\alpha, \ldots, \alpha) \implies
    \frac{p_i^t(x_t)}{p_u^t(x_t)} &= 1\\
    \implies p_u^t(x_t) \prod_{i=1}^n \frac{p_i^t(x_t)}{p_u^t(x_t)} &\propto p_u^t(x_t) \propto \exp \left( -\frac{n\alpha^2-2\alpha+1}{2\sigma_t^2} \right) 
    = \exp \left( -\frac{n(\alpha - \frac{1}{n})^2 + \text{const}}{2 \sigma_t^2} \right) \\
    \implies \alpha^\star &= \frac{1}{n},
\end{align*}
so the optimum is $\alpha^\star = \frac{1}{n}$. That is, under the Bayes composition the most likely configuration places value $\frac{1}{n}$ at each index we wished to activate, rather than the desired value 1.

On the other hand, if we instead use $p_b$ in the linear score combination and optimize, we find that:
\begin{align*}
    x_t = (\alpha, \ldots, \alpha) \implies
    \implies \frac{p_i^t(x_t)}{p_b^t(x_t)} &\propto \exp \left( -\frac{1-2\alpha}{2\sigma_t^2} \right) \\
    \implies p_b^t(x_t) \prod_{i=1}^n \frac{p_i^t(x_t)}{p_b^t(x_t)} &\propto \exp \left( -\frac{n\alpha^2}{2\sigma_t^2} \right) \exp \left( -\frac{n(1-2\alpha)}{2\sigma_t^2} \right) 
    \propto \exp \left( -\frac{n(\alpha^2-2\alpha+1)}{2\sigma_t^2} \right) \\
    &\propto \exp \left( -\frac{n(\alpha-1)^2}{2\sigma_t^2} \right) \\
    \implies \alpha^\star &= 1
\end{align*}
so the optimum is $\alpha^\star = 1$. That is, the most likely configuration places the desired value $1$ at each index we wished to activate, achieving projective composition, and in particular, length-generalizing correctly.

\subsection{Cluttered Distributions}
\label{app:clutter}
In certain ``cluttered'' settings, the Bayes composition may be approximately projective. We explore this in the following simplified setting, corresponding to the experiment in Figure \ref{fig:bayes_binary} (right). Suppose that $x$ is binary-valued, $M_i = \{i\}, \forall i$, the $x_i$ are independently Bernoulli with parameter $q$ under the background, and the projected conditional distribution $p_i(x|_i)$ just guarantees that $x_i = 1$: 
\begin{align}
    p_b(x |_{i^c}) &\sim \text{Bern}_q(x|_{i^c}), \text{ i.i.d. $\forall i$}, \quad \quad
    p_i(x |_{i}) = \mathbbm{1}_{x|_{i} = 1},
    \label{eq:binary_bernoulli}
\end{align}
The distributions $(p_b, p_1, p_2, \ldots)$ then clearly satisfy Definition \ref{def:factorized} and hence guarantee projective composition. In this case, the unconditional distribution used in the Bayes composition is similar to the background distribution if number of conditions is large. Intuitively, each conditional looks very similar to the Bernoulli background except for a single index that is guaranteed to be equal to 1, and the unconditional distribution is just a weighted sum of conditionals. Therefore, we expect the Bayes composition to be approximately projective.

More precisely, we will show that the unconditional distribution converges to the background in the limit as $n \to \infty$, where $n$ is both the data dimension and number of conditions, in the following sense:
$$ \E_{x \sim p_b} \left[ \left( \frac{p_u(x) - p_b(x)}{p_b(x)} \right)^2 \right] \to 0 \quad \text{as } n \to \infty.$$

We define the conditional and background distributions by:
\begin{align*}
    x \in \R^n, \quad M_i &= \{i\} \\ 
    p_b(x|_{i}) &\sim \text{Bern}_q(x|_{i}), \text{ i.i.d. for $i=1,\ldots, n$} \\
    p_i(x|_{i}) &= \mathbbm{1}_{x|_{i} = 1}, \text{ for all $i=1,\ldots, n$} \\
    \implies p_b(x) &= q^{nnz(x)} (1-q)^{n - nnz(x)}\\ 
    p_i(x) &= \mathbbm{1}_{x|_{i} = 1} p_b(x|_{i^c}) 
    = \mathbbm{1}_{x|_{i} = 1}  
    q^{nnz(x|_{i^c})} (1-q)^{n - 1 - nnz(x|_{i^c})}
\end{align*}
We construct the unconditional distribution with assuming uniform probabibility over all labels: $p_u(x) := \frac{1}{n} \sum_i p_i(x)$.
The number-of-nonzeros (nnz) in all of these distributions follow Binomial distributions:
\begin{align*}
    x \sim p_b \implies p_b(nnz(x) = k) &\sim \text{Binom}(k; n, q) \\
    x \sim p_i \implies p_i(nnz(x) = k) &= p_b(nnz(x|_{i^c}) = k-1) \\
    &\sim \text{Binom}(k-1; n-1, q) \quad \text{if } k > 0 \text{ else } 0 \\
    x \sim p_u \implies p_u(nnz(x) = k) &= \frac{1}{n} \sum p_i(nnz(x) = k) \\
    &\sim \text{Binom}(k-1; n-1, q) \quad \text{if } k > 0 \text{ else } 0
\end{align*}
The basic intuition is that for large $k$ and $n$, $p_b \sim \text{Binom}(k; n, q)$ and $p_u \sim \text{Binom}(k-1; n-1, q)$ are similar. More precisely, we can calculate:
\begin{align*}
    \E_{x \sim p_b} \left[ \left( \frac{p_u(x) - p_b(x)}{p_b(x)} \right)^2 \right] &=
    \E_{x \sim p_b} \left[ \left(\frac{nnz(x)}{qn} - 1\right)^2 \right], \quad \text{since } \frac{B(k-1; n-1, q)}{B(k; n, q)} = \frac{k}{qn} \\
    &= \E_{k \sim \text{Binom}(n, q)} \left[ \left(\frac{k}{qn} - 1\right)^2 \right] 
    = \frac{1}{(nq)^2} \E_{k \sim \text{Binom}(n, q)} \left[ (k - nq)^2 \right] \\
    &= \frac{1}{(nq)^2} \text{Var}(k), \quad k \sim \text{Binom}(n, q) \\
    &= \frac{1}{(nq)^2} n q (1-q) 
    = \frac{1-q}{nq} \to 0 \quad \text{as } n \to \infty.
\end{align*}

\section{Proof of \cref{lem:compose} and \cref{lem:fc_relax}}
\label{app:compose_pf}
\begin{proof} (\cref{lem:compose})
For any set of distributions $\vec{q} = (q_b, q_1, q_2, \ldots)$ satisfying Definition \ref{def:factorized}, we have
\begin{align}
\cC[\vec{q}](x) &:= q_b(x) \prod_i \frac{q_i(x)}{q_b(x)} = q_b(x) \prod_i \frac{q_b(x_t|_{M_i^c}) q_i(x|_{M_i})}{q_b(x|_{M_i^c})q_b(x|_{M_i})} \notag \\
&= q_b(x) \prod_i \frac{q_i(x|_{M_i})}{q_b(x|_{M_i})} 
= q_b(x|_{M_b}) \prod_i q_i(x_t|_{M_i})
\label{eq:comp_indep}
\end{align}
(where we used \eqref{eqn:cc-cond} in the second equality). 
Since $(p_b, p_1, p_2, \ldots)$ satisfy Definition \ref{def:factorized} by assumption, applying \eqref{eq:comp_indep} gives
$$\cC[\vec{p}](x) = p_b(x|_{M_b}) \prod_i p_i(x|_{M_i}) := \hat{p}(x),$$
so the composition at $t=0$ is projective, as desired.
Now to show that reverse-diffusion sampling with the compositional scores generates $\cC[\vec{p}]$, we need to show that
$$\cC[\vec{p^t}]= N_t[\cC[\vec{p}]],$$ 
where $p^t := N_t[p]$ denotes the $t$-noisy version of distribution $p$ under the forward diffusion process.
First, notice that if $\vec{p}$ satisfies Definition \ref{def:factorized}, then $\vec{p^t}$ does as well. This is because the diffusion process adds Gaussian noise independently to each coordinate, and thus preserves independence between sets of coordinates.
Therefore by \eqref{eq:comp_indep}, we have
$\cC[\vec{p^t}](x)= p^t_b(x|_{\bar{M}}) \prod_i p^t_i(x_t|_{M_i}).$
Now we apply the same argument (that diffusion preserves independent sets of coordinates) once again, to see that $\cC[\vec{p^t}]= N_t[\cC[\vec{p}]],$
as desired.
\end{proof}

\begin{proof}(\Cref{lem:fc_relax})
    \begin{align*}
        \text{KL}(\cC^\star[\vec{p}] || \cC[\vec{p}]) &\equiv \E_{\cC^\star[\vec{p}]} \left[ \log \frac{\cC^\star[\vec{p}]}{\cC[\vec{p}]}\right] \\
        &= \E_{\cC^\star[\vec{p}]} \left[ \log \frac{p_b(x|_{M_b}) \prod_i p_i(x|_{M_i})}{p_b(x) \prod_i \frac{p_i(x)}{p_b(x)}}\right] \\
        &= \E_{\cC^\star[\vec{p}]} \left[ \sum_i \log \left(\frac{p_i(x|_{M_i})}{p_i(x)} 
        + p_b(x) \right)
        + \log \frac{p_b(x|_{M_b})}{p_b(x)} \right]
        \\
        &= \E_{\cC^\star[\vec{p}]} \left[ \sum_i \left( \log \frac{p_i(x|_{M_i}) p_b(x|_{M_i^c})}{p_i(x)} +\log p_b(x|_{M_i}) \right) 
        + \log \frac{p_b(x|_{M_b})}{p_b(x)} \right]
        \\
        &= \sum_i \E_{\cC^\star[\vec{p}]} \left[ \log \frac{p_i(x|_{M_i}) p_b(x|_{M_i^c})}{p_i(x)} \right] + \E_{\cC^\star[\vec{p}]}\left[ \log \frac{p_b(x|_{M_b})}{p_b(x|_{M_b} | x|_{M_b^c})} \right]
    \end{align*}
    
    \begin{align*}
        \E_{\cC^\star[\vec{p}]} \left[ \log \frac{p_i(x|_{M_i}) p_b(x|_{M_i^c})}{p_i(x)} \right] &= \E_{\cC^\star[\vec{p}]} \left[ \log \frac{p_i(x|_{M_i})}{p_i(x|_{M_i} | x|_{M_i^c})} \right] \\
        &= \int p_b(x|_{M_b}) \prod_j p_j(x|_{M_j}) \log \frac{p_i(x|_{M_i})}{p_i(x|_{M_i} | x|_{M_i^c})} dx \\
        &= \int p_b(x|_{M_b}) \prod_{j \neq i} p_j(x|_{M_j}) \int p_i(x|_{M_i}) \log \frac{p_i(x|_{M_i})}{p_i(x|_{M_i} | x|_{M_i^c})} dx|_{M_i} dx|_{M_i^c} \\
        &= \E_{p_b(x|_{M_b}) \prod_{j \neq i} p_j(x|_{M_j})} \left[ \text{KL}[ p_i(x|_{M_i}) || p_i(x|_{M_i} | x|_{M_i^c})] \right] \\
        &\le \sup_{x|_{M_i^c}} \quad \text{KL}[ p_i(x|_{M_i}) || p_i(x|_{M_i} | x|_{M_i^c})] \le \epsilon_i, \quad \text{and similarly for $b$.}
    \end{align*}
\end{proof}

Furthermore, note that by the Data Processing Inequality, the KL divergence between any two distributions is non-increasing as both distributions are convolved with Gaussian noise, and in fact it is known that the KL divergence between $N_t[q], N_t[r]$ decreases in $t$ for any two distinct distributions $q \neq r$. Specifically,
. This does not necessarily guarantee that the \emph{supremum} appearing in \Cref{lem:fc_relax} must decrease with $t$. However we can also show that with one additional assumption.

\section{Parameterization-Independent Compositions and Proof of Lemma \ref{lem:transform_comp}}
\label{app:param-indep}

The proof of Lemma \ref{lem:transform_comp} relies on certain general fact about parametrization-independence of certain operators, which we develop here.

Suppose we have an operator that 
takes as input two probability distributions $(p, q)$
over the same space $\cX$,
and outputs a distribution over $\cX$.
That is, $F: \Delta(\cX)\x\Delta(\cX) \to \Delta(\cX)$.
We can think of such operators as performing some kind of ``composition''
of $p, q$.

Certain operators are \emph{independent of parameterization}, meaning
for any reparameterization of the base space $A: \cX \to \cY$, we have
\[
F(p, q) = A^{-1}\sharp( F(A \sharp p, A \sharp q) )
\]
or equivalently:
\[
F(A \sharp p, A \sharp q) = A \sharp F(p, q),
\]
where $\sharp$ is the pushforward:
\[ (\mathcal{A} \sharp p)(z) := \frac{1}{\left|\grad \mathcal{A}(z)\right|} p(\mathcal{A}^{-1}(z)). \]

This means that reparameterization commutes with the operator:
it does not matter if we first reparameterize, then compose, or first compose, then reparameterize. A few examples:
\begin{enumerate}
    \item The pointwise-geometric median, $F(p, q)(x) := \sqrt{p(x)q(x)}$, is independent of reparameterization:
    \item Squaring a distribution, $F(p, q)(x) := p(x)^2$, is NOT independent of reparameterization:
    \item The ``CFG composition'' \citep{ho2022classifier}, $F(p, q)(x) := p(x)^\gamma q(x)^{1 - \gamma}$, is independent of reparameterization:
\end{enumerate}

We can analogously define parametrization-independence for operators on more than 2 distributions.
Notably, given a tuple of distributions $\vec{p} = (p_b, p_1, p_2, \dots, p_k)$,
our composition operator $\cC$ of Definition \ref{def:comp_oper}, $\cC[\vec{p}] \propto p_b(x) \prod_i \frac{p_i(x)}{p_b(x)}$ 
is independent of parameterization.

\begin{lemma}[Parametrization-independence of 1-homogeneous operators]
    If an operator $F$ is 1-homogeneous, i.e. $F(tp, tq, \ldots) = t F(p, q, \ldots)$ and operates pointwise, then it is independence of parametrization.
    \label{lem:1-homog-param-indep}
\end{lemma}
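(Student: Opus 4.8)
\textbf{Proof proposal for Lemma~\ref{lem:1-homog-param-indep}.}
The plan is to compute directly how a $1$-homogeneous pointwise operator interacts with a pushforward, exploiting the fact that the pushforward formula multiplies the density by a single scalar Jacobian factor $|\grad\cA|^{-1}$ that is the same for every input distribution. So the key observation is that applying $\cA\sharp(-)$ to a density is itself a ``scale by a common factor, then precompose with $\cA^{-1}$'' operation, and both of these commute with a pointwise $1$-homogeneous operator.

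First I would fix a diffeomorphism $\cA:\cX\to\cY$ and a tuple $\vec p = (p_b,p_1,\dots,p_k)$, and write out $(\cA\sharp p_i)(z) = \tfrac{1}{|\grad\cA(\cA^{-1}(z))|}\, p_i(\cA^{-1}(z))$ for each $i$. Set $\lambda(z) := |\grad\cA(\cA^{-1}(z))|^{-1}$, a positive scalar function of $z$ alone (independent of $i$). Then $F(\cA\sharp p_b,\dots,\cA\sharp p_k)(z) = F\big(\lambda(z)\, p_b(\cA^{-1}(z)),\,\dots,\,\lambda(z)\, p_k(\cA^{-1}(z))\big)$, using that $F$ acts pointwise so its value at $z$ depends only on the input densities evaluated at $z$. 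Now apply $1$-homogeneity with $t = \lambda(z)$ to pull the common factor out: this equals $\lambda(z)\, F\big(p_b(\cA^{-1}(z)),\dots,p_k(\cA^{-1}(z))\big) = \lambda(z)\, F(\vec p)(\cA^{-1}(z))$, where the last equality is again ``$F$ acts pointwise.'' Finally, recognize the right-hand side as exactly $(\cA\sharp F(\vec p))(z)$ by the pushforward formula. This gives $F(\cA\sharp\vec p) = \cA\sharp F(\vec p)$, which is parametrization-independence. I would also remark that the composition operator $\cC$ of Definition~\ref{def:comp_oper} satisfies the hypotheses, since $\cC[\vec p](x) = \tfrac{1}{Z}\,p_b(x)\prod_i \tfrac{p_i(x)}{p_b(x)}$ is manifestly pointwise and scaling all arguments by $t$ scales the unnormalized expression by $t\cdot t^k / t^k = t$ (the normalization constant $Z$ is handled by working with unnormalized densities and normalizing at the end, which is compatible with pushforward); hence Lemma~\ref{lem:reparam} follows as a corollary.

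There is no serious obstacle here; the only point requiring a little care is the bookkeeping around the normalization constant $Z$ in $\cC$ — one should phrase the argument at the level of unnormalized densities (for which $1$-homogeneity is exact) and observe that pushforward commutes with normalization, or alternatively absorb $Z$ by noting it is determined by the other factors and drops out. A second minor subtlety is making precise what ``operates pointwise'' means for a density-valued operator, namely that $F(\vec p)(x)$ is a fixed function of the tuple of real numbers $(p_b(x),\dots,p_k(x))$; once this is stated, the substitution $t=\lambda(z)$ step is immediate and the rest is the pushforward change-of-variables identity.
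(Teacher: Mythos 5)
Your proof is correct and follows essentially the same route as the paper's: evaluate the pushforward densities pointwise, pull out the common Jacobian factor $\lambda(z)=|\grad\cA(\cA^{-1}(z))|^{-1}$ via $1$-homogeneity, and recognize the result as $(\cA\sharp F(\vec p))(z)$. Your added care about where the Jacobian is evaluated, the meaning of ``pointwise,'' and the normalization constant $Z$ in the application to $\cC$ are all sound refinements of the same argument.
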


\begin{proof}
\begin{align*}
    F(\mathcal{A} \sharp p, \mathcal{A} \sharp q, \ldots)(z) &= F(\mathcal{A} \sharp p(z), \mathcal{A} \sharp q(z), \ldots), \quad \text{pointwise} \\
    &= F \left( \frac{1}{\left|\grad \mathcal{A}(z)\right|} p(\mathcal{A}^{-1}(z)), \frac{1}{\left|\grad \mathcal{A}(z)\right|} q(\mathcal{A}^{-1}(z)), \ldots \right) \\
    &= \frac{1}{\left|\grad \mathcal{A}(z)\right|}F\left(p(\mathcal{A}^{-1}(z)), q(\mathcal{A}^{-1}(z)), \ldots \right), \quad \text{1-homogeneous} \\
    &= \mathcal{A} \sharp F(p, q, \ldots)(z)
\end{align*}
\end{proof}

\begin{corollary}[Parametrization-invariance of composition]
    The composition operator $\cC$ given by Definition \ref{def:comp_oper} is independent of parametrization.
    \label{corr:comp-param-indep}
\end{corollary}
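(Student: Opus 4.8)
The plan is to deduce the corollary directly from Lemma~\ref{lem:1-homog-param-indep}, after isolating the normalization constant. First I would write $\cC$ as a normalized version of an unnormalized pointwise operator,
\[
\cC[\vec{p}] = \frac{\tilde{\cC}[\vec{p}]}{Z(\vec{p})}, \qquad
\tilde{\cC}[p_b,p_1,\dots,p_k](x) := p_b(x)^{1-k}\prod_{i=1}^k p_i(x), \qquad
Z(\vec{p}) := \int \tilde{\cC}[\vec{p}](x)\,dx .
\]
The operator $\tilde{\cC}$ manifestly acts pointwise, since its value at $x$ depends only on $p_b(x),p_1(x),\dots,p_k(x)$. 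So the only real verification is $1$-homogeneity: replacing every argument $q$ by $tq$ multiplies the output by $t^{1-k}\cdot t^{k} = t^1$, the exponent $1-k$ on $p_b$ and the $k$ exponents $+1$ on the conditionals summing to $1$. Hence Lemma~\ref{lem:1-homog-param-indep} applies to $\tilde{\cC}$ and yields $\tilde{\cC}[\cA \sharp \vec{p}] = \cA \sharp \tilde{\cC}[\vec{p}]$ for every diffeomorphism $\cA:\R^n\to\R^n$.

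The remaining step is to check that the normalization is preserved. Since pushforward by a diffeomorphism preserves total mass (change of variables), $Z(\cA\sharp\vec{p}) = \int \tilde{\cC}[\cA\sharp\vec{p}] = \int \cA\sharp\tilde{\cC}[\vec{p}] = \int \tilde{\cC}[\vec{p}] = Z(\vec{p})$. Combining this with the previous display and using linearity of pushforward,
\[
\cC[\cA\sharp\vec{p}] = \frac{\tilde{\cC}[\cA\sharp\vec{p}]}{Z(\cA\sharp\vec{p})} = \frac{\cA\sharp\tilde{\cC}[\vec{p}]}{Z(\vec{p})} = \cA\sharp\!\left(\frac{\tilde{\cC}[\vec{p}]}{Z(\vec{p})}\right) = \cA\sharp\cC[\vec{p}],
\]
which is precisely the claimed parametrization-independence (and recovers Lemma~\ref{lem:reparam} as the same identity written in the form $\cC[\cA\sharp\vec{p}] = \cA\sharp\cC[\vec{p}]$).

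I do not anticipate a genuine obstacle here: the content is entirely in the $1$-homogeneity bookkeeping and in recalling that pushforward conserves mass. The one point worth stating carefully is that $Z(\vec{p})$ must be finite for $\cC[\vec{p}]$ to be defined at all — an assumption implicit in Definition~\ref{def:comp_oper}'s reference to ``the appropriate normalization constant'' — and this integrability is itself invariant under $\cA$ by change of variables, so the argument goes through verbatim on the domain where $\cC$ is defined. (One could also phrase the whole proof more directly by substituting the pushforward formula $(\cA\sharp p)(z) = p(\cA^{-1}(z))/|\grad\cA|$ into $\tilde{\cC}$ and factoring out $|\grad\cA|^{-((1-k)+k)} = |\grad\cA|^{-1}$, which is exactly the proof of Lemma~\ref{lem:1-homog-param-indep} specialized to $\tilde{\cC}$.)
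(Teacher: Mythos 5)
Your proof is correct and follows essentially the same route as the paper's: verify that the (unnormalized) composition operator is pointwise and $1$-homogeneous, then invoke Lemma~\ref{lem:1-homog-param-indep}. Your explicit separation of the normalization constant $Z(\vec{p})$ and the check that it is preserved under pushforward is a welcome tightening of a step the paper's proof leaves implicit, but it is a refinement of the same argument rather than a different one.
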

\begin{proof}
    The composition operator given by Definition \ref{def:comp_oper} is 1-homogeneous:
    \begin{align*}
        \cC(tp_b, tp_1, tp_2, \ldots)(x) &= tp_b(x) \prod_i \frac{tp_i(x)}{tp_b(x)} 
        = tp_b(x) \prod_i \frac{p_i(x)}{p_b(x)} 
        = t\cC(p_b, p_1, p_2, \dots)(x)
    \end{align*}
    and so the result follows from Lemma \ref{lem:1-homog-param-indep}.
    Alternatively, a direct proof is:
    \begin{align*}
    \cC(p_b, p_1, p_2, \ldots)(x) &:= p_b(x) \prod_i \frac{p_i(x)}{p_b(x)} \\
    \cC(\mathcal{A} \sharp p_b, \mathcal{A} \sharp p_1, \mathcal{A} \sharp p_2, \ldots)(z) 
    &= (\mathcal{A} \sharp p_b)(z) \prod_i \frac{(\mathcal{A} \sharp p_i)(z)}{(\mathcal{A} \sharp p_b)(z)} 
    = \frac{1}{\left|\grad \mathcal{A}\right|} p_b(\mathcal{A}^{-1}(z)) \prod_i \frac{p_i(\mathcal{A}^{-1}(z))}{p_b(\mathcal{A}^{-1}(z))} 
    = \mathcal{A} \sharp \cC(p_b, p_1, p_2, \ldots)(z).
    \end{align*}
\end{proof}

\cref{lem:transform_comp} follows from \cref{corr:comp-param-indep}:
\begin{proof} (\cref{lem:transform_comp})
Let $(q_b, q_1, q_2, \dots, q_k) := (\cA \sharp p_b, \cA \sharp p_1, \dots \cA\sharp p_k)$, for which Definition~\ref{def:factorized} holds by assumption. Applying an intermediate result from the proof of Theorem~\ref{lem:compose} gives:
\begin{align*}
    \cC[\vec{q}](z) &:= q_b(z) \prod_i \frac{q_i(z)}{q_b(z)} 
    = q_b(z|_{\bar{M}}) \prod_i q_i(z|_{M_i}).
\end{align*}
By Corollary \ref{corr:comp-param-indep}, $\cC$ is independent of parametrization, hence 
$$ \cA \sharp \hat{p} := \cA \sharp (\cC[\vec{p}]) = \cC[\vec{\cA \sharp p}] := \cC(\vec{q}).$$
\end{proof}

\section{Proof of \cref{lem:orthogonal_sampling}}
\label{app:orthog_sample_pf}

\begin{figure}[ht]
\vskip 0.2in
\begin{center}
\centerline{
\includegraphics[width=0.5\columnwidth]{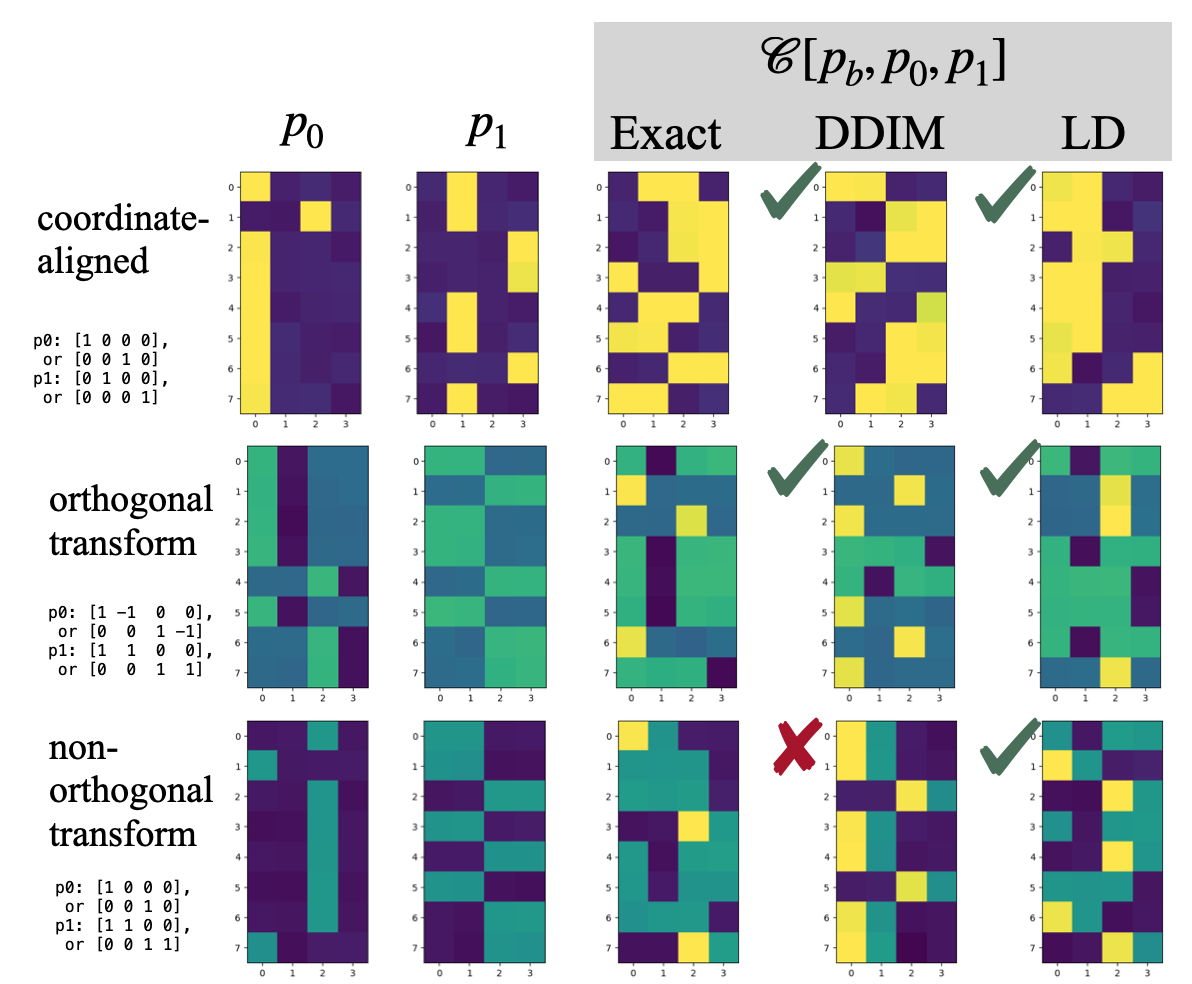}
}
\caption{Synthetic composition experiment illustrating the sampling guarantees of \cref{lem:orthogonal_sampling} in contrast to the lack-of-guarantees in the non-orthogonal case. We compare a coordinate-aligned case (which satisfies \cref{def:factorized} in the native space) (top), an orthogonal-transform case (middle) (which satisfies the assumptions of \cref{lem:orthogonal_sampling}), and a non-orthogonal-transform case (bottom) (which satisfies the assumptions of \cref{lem:transform_comp} but not of \cref{lem:orthogonal_sampling}). In the first two cases the correct composition can be sampled using either diffusion (DDIM) or Langevin dynamics (LD) at $t=0$, while in the final case DDIM sampling is unsuccessful although LD at $t=0$ still works.
The distributions are 4-dimensional and we show 8 samples (rows) for each. We show samples from the individual conditional distributions $p_0, p_1$ using DDIM, samples from the desired exact composition $\cC[p_b, p_0, p_1]$ at $t=0$ (obtained by sampling from $\cA \sharp \cC[\vec{p}]$ with DDIM and transforming by $\cA^{-1}$), samples from the composition $\cC[p_b, p_0, p_1]$ using DDIM with exact scores, and samples from the composition $\cC[p_b, p_0, p_1]$ using Langevin dynamics (LD) with exact scores at time $t=0$ in the diffusion schedule ($\sigma_{\min} = 0.02$). 
The noiseless distributions $p_0$ and $p_1$ are each 4-dimensional 2-cluster Gaussian mixtures with means as noted in the figure, equal weights, and standard deviation $\tau = 0.02$. For example, in the non-orthogonal-transform case, $p_0$ has means $[1, 0, 0, 0]$ and $[0, 0, 1, 0]$, and $p_1$ has means $[1, 1, 0, 0]$ and $[0, 0, 1, 1]$, (which can be transformed to satisfy \cref{def:factorized} via a non-orthogonal linear transform).}
\label{fig:orthog_sampling}
\end{center}
\vskip -0.2in
\end{figure}

Figure \ref{fig:orthog_sampling} shows a synthetic experiment illustrating the sampling guarantees of \cref{lem:orthogonal_sampling} in contrast to the lack-of-guarantees in the non-orthogonal case.

The proof of \cref{lem:orthogonal_sampling} relies on the fact that diffusion noising commutes with orthogonal transformation, i.e. $\cA \sharp N_t[q] = N_t[\cA \sharp q]$ if $\cA$ is orthogonal, since standard Gaussians are invariant under orthogonal transformation.

\begin{proof}(\cref{lem:orthogonal_sampling})
By assumption, $(\cA \sharp p_b, \cA \sharp p_1, \dots \cA\sharp p_k)$ satisfy Definition~\ref{def:factorized}, where $\cA(z) = Az$ with $A$ an orthonormal matrix. By Lemma \ref{lem:transform_comp}, $\hat{p} = \cC[\vec{p}]$ satisfies \eqref{eqn:p_hat_A}. To show that reverse-diffusion sampling with scores $s_t = \grad_x \log \cC[\vec{p}^t]$ generates the composed distribution $\cC[\vec{p}]$ we need to show that composition commutes with the forward diffusion process, i.e.
$$\cC[\vec{p^t}] = N_t[\cC[\vec{p}]].$$ 
Theorem~\ref{lem:compose} immediately gives us
$$ \cC[N_t[\cA \sharp p]] = N_t[\cC[\cA \sharp p]]. $$
Now we have to be careful with commuting operators. We know that composition is independent of parametrization, i.e. $\cA \sharp \cC[\vec{p}] = \cC[\vec{\cA \sharp p}]$. Diffusion noising $N_t$ commutes with orthogonal transformation, i.e. $\cA \sharp N_t[q] = N_t[\cA \sharp q]$ if $\cA$ is orthogonal, because a standard Gaussian multiplied by an orthonormal matrix $Q$ remains a standard Gaussian:  $\eta \sim \cN(0, I) \implies Q\eta \sim \cN(0, QQ^T) = \cN(0, I)$ (this is false for non-orthogonal transforms, however). Therefore, in the orthogonal case, we can rewrite:
$$ \cA \sharp  \cC[N_t[p]] = \cA \sharp  N_t[\cC[p]], $$
which implies the desired result since $\cA$ is invertible.
\end{proof}

\section{Proof and further discussion of Lemma \ref{lem:lipschitz}}
\label{app:lipschitz}

\subsection{Benefits of sampling at $t=0$}
\label{app:hmc}
Interestingly, \cite{du2023reduce} have observed that sophisticated samplers like Hamiltonian Monte Carlo (HMC) requiring energy-based formulations often outperform standard diffusion sampling for compositional quality. Lemmas \ref{lem:transform_comp} and \ref{lem:lipschitz} help explain why this may be the case. In particular, HMC (or any variant of Langevin dynamics) can enable sampling $p^0$ at time $t=0$, even when the path $p^t$ used for annealing does not necessarily represent a valid forward diffusion process starting from $p^0$ (as \citet{du2023reduce} note, $\cC[\vec{p}^t]]$ may not be). Lemma \ref{lem:transform_comp} should gives us hope that approximately-projective composition may often be possible at $t=0$, since it allows \emph{any} invertible transform $\cA$ to transform into a factorized feature space (which need not be explicitly constructed). However, that does not mean that we can actually \emph{sample} from this projection at time $t=0$. As Lemma \ref{lem:lipschitz} shows, $\cC[\vec{p}^t]]$ is not necessarily a valid diffusion path unless $\cA$ is orthogonal, so standard diffusion sampling may not work. This is consistent with \citet{du2023reduce}'s observation that non-diffusion samplers that allow sampling at $t=0$ may be necessary. Interestingly, Lemma \ref{lem:lipschitz} further cautions that sometimes $\cC[\vec{p}^t]]$ may not even be an effective annealing path for any kind of sampler (which is consistent with our own experiments but not reported by other works, to our knowledge.)

\subsection{Proof of Lemma \ref{lem:lipschitz}}
\label{app:lipschitz_proof}

\begin{figure}[ht]
\vskip 0.2in
\begin{center}
\centerline{
\includegraphics[width=0.45\columnwidth]{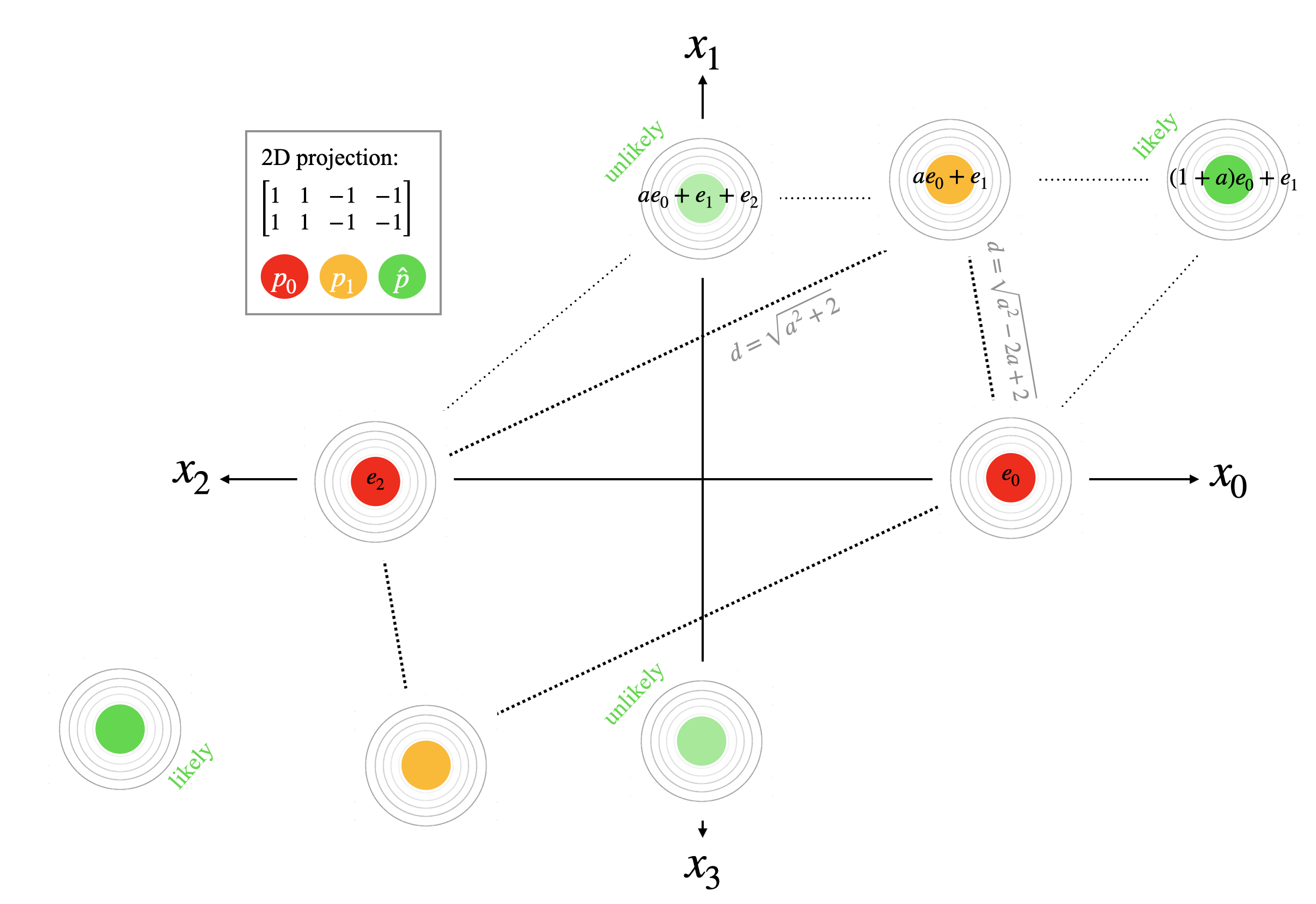}
\includegraphics[width=0.55\columnwidth]{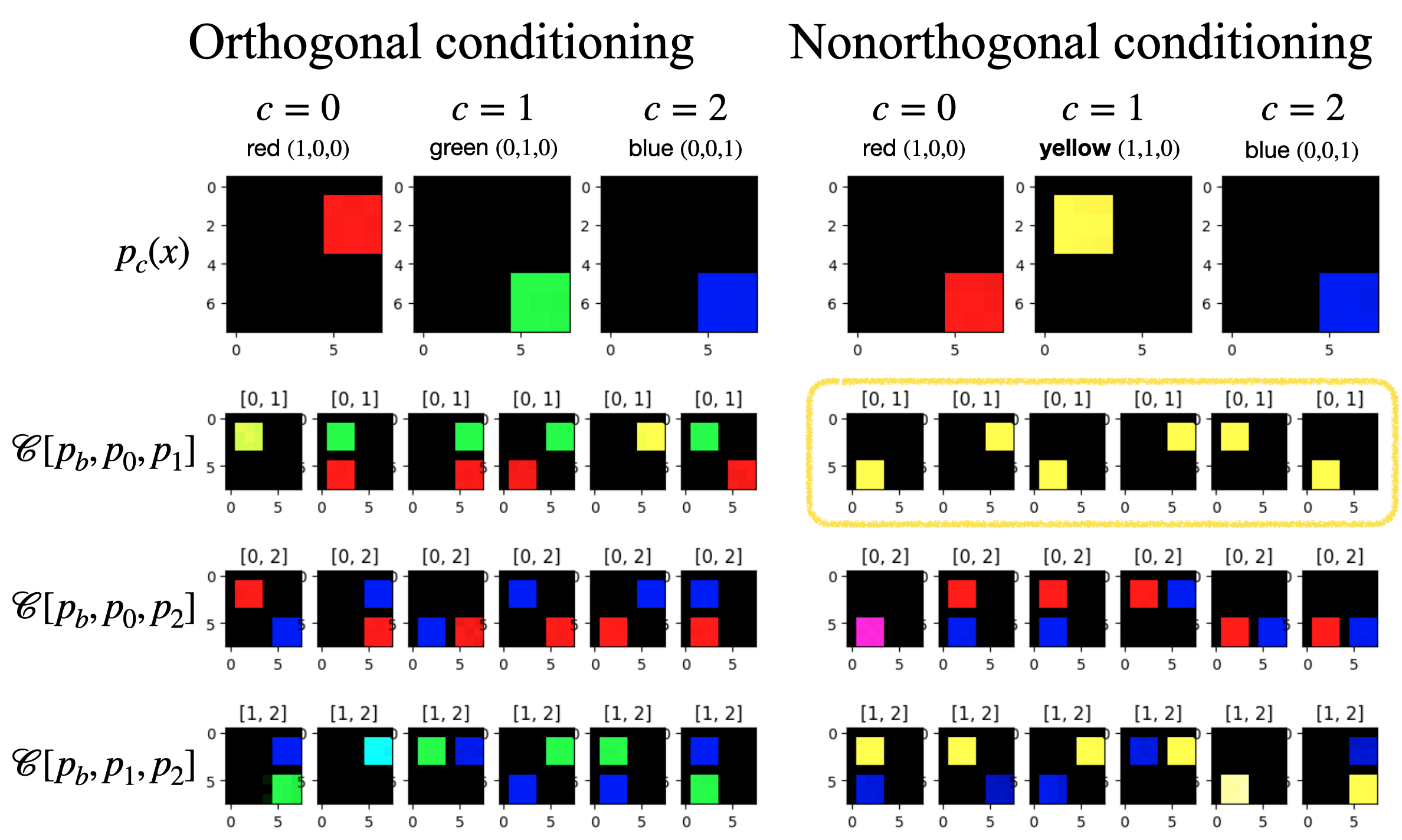}
}
\caption{(Left) A visualization of the intuition behind the proof of Lemma \ref{lem:lipschitz}, under a 2D projection. (Right) An experiment where the colors red, green, and blue all compose projectively, while the colors red and yellow do not. We trained a Unet on images each containing a single square in one of 4 locations (selected randomly) and a certain color, conditioned on the color. We then generate composed distributions by running DDIM on the composed scores. The desired result of composing red and blue is an image containing a red and a blue square, both with randomly-chosen locations (so we occasionally get a purple square when the locations overlap). When we try to compose red and yellow, we only only ever obtain a single yellow square.Note that in pixel space, the colors are represented as red $(1,0,0)$, green $(0,1,0)$, blue $(0,0,1)$, yellow $(1,1,0)$, so that red, green and blue are all orthogonal and are expected to work by Lemma \ref{lem:compose}, while red and yellow are not orthogonal, and fail as allowed by Lemma \ref{lem:lipschitz}. In fact this experiment is closely related to the counterexample used to prove Lemma \ref{lem:lipschitz}.}
\label{fig:yellow}
\end{center}
\vskip -0.2in
\end{figure}

\begin{figure}[ht]
\vskip 0.2in
\begin{center}
\centerline{
\includegraphics[width=1.0\columnwidth]{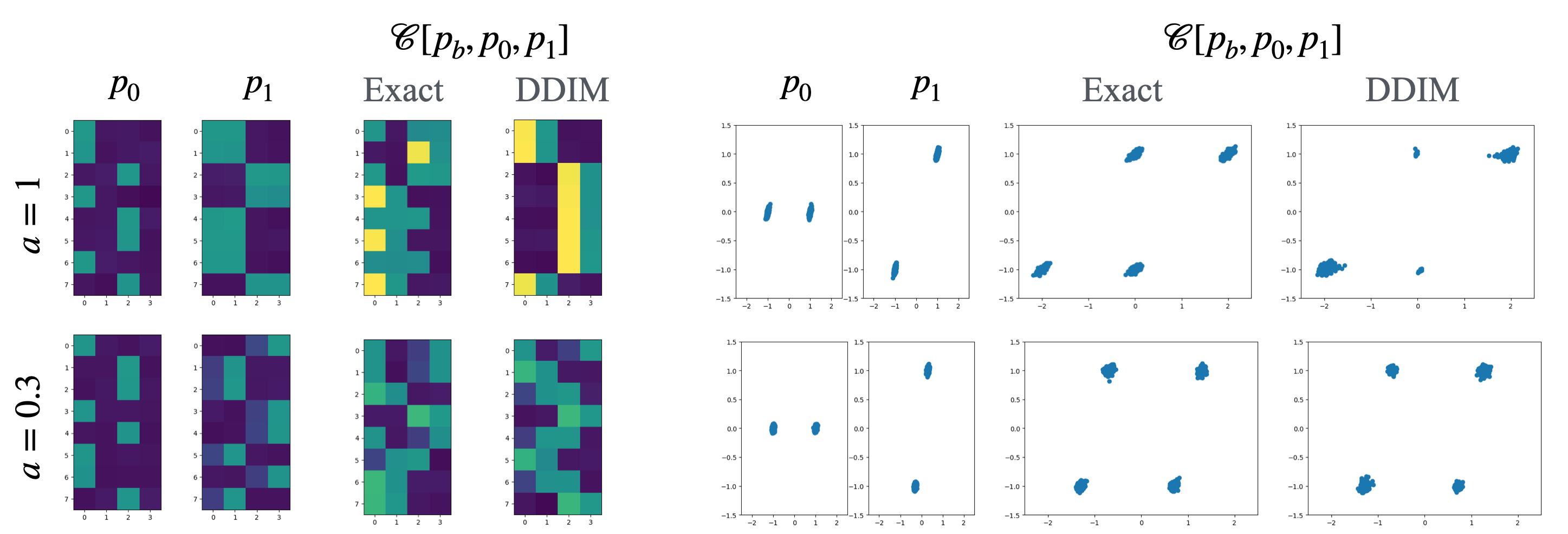}
}
\caption{Composition experiments for the setting in the proof of Lemma \ref{lem:lipschitz}. Left pane shows 8 samples (rows) of each distribution in the native 4d representation; right pane shows 1000 samples under the 2D projection used in Figure \ref{fig:yellow}. We show samples from the individual conditional distributions $p_0, p_1$ using DDIM, samples from the desired exact composition $\cC[p_b, p_0, p_1]$ at $t=0$ (obtained by sampling from $\cA \sharp \cC[\vec{p}]$ with DDIM and transforming by $\cA^{-1}$), and samples from the composition $\cC[p_b, p_0, p_1]$ using DDIM with exact scores.  We take $\tau = 0.02$, and set $\sigma_{\min} = 0.02$ in the diffusion schedule. In the top row we take $a=1$ (``very non-orthogonal'') as in the proof, and compare this to $a=0.3$ (``mildly non-orthogonal'') in the bottom row. With $a=1$, as in the proof we see that DDIM barely samples two of the clusters. With $a=0.3$, DDIM still slightly undersamples the ``hard'' clusters but the effect is much less pronounced.}
\label{fig:eg3}
\end{center}
\vskip -0.2in
\end{figure}

We will prove Lemma \ref{lem:lipschitz} using a counterexample, which is inspired by an experiment, shown in Figure \ref{fig:eg3} (left), where non-orthogonal conditions fail to compose projectively.

The basic idea for the counterexample is that given a distribution $p(x)$ with two conditions, $c=0,1$, such at $t=0$, 
\begin{align*}
    p_0(x) &\approx \half \delta_{e_0}(x) + \half \delta_{e_2}(x), \quad \quad \quad
    p_1(x) \approx \half \delta_{ae_0 + e_1}(x) + \half \delta_{ae_2 + e_3}(x),
\end{align*}
for some $0 < a \le 1$,
so the conditional distributions do not satisfy the independence assumption of Definition \ref{def:factorized},
However, there exists a (linear, but not orthogonal) $A$ such that the distribution of $z = Ax$ is axis-aligned
\begin{align*}
    (A \sharp p_0)(z) &\approx \half \delta_{e_0}(x) + \half \delta_{e_2}(x), \quad \quad \quad
    (A \sharp p_1)(z) \approx \half \delta_{e_1}(x) + \half \delta_{e_3}(x),
\end{align*}
and thus does satisfy Definition \ref{def:factorized} at $t=0$, which guarantees correct composition of $p$ at $t=0$ under Lemma \ref{lem:transform_comp}. The correct composition should sample uniformly from
$ \{ (1+a)e_0 + e_1, \quad e_0 + a e_2 + e_3, \quad ae_0 + e_2 + e_1, \quad (1+a) e_2 + e_3 \}$.
What goes wrong is that as soon as we add Gaussian noise to the distribution $p(x)$ at time $t > 0$ of the diffusion forward process, the relationship $z = Ax$ breaks and so we are no longer guaranteed correct composition of $p^t(x)$. In fact, the distribution is still a GMM but places nearly all its weight on only two of the four clusters, namely: 
$ \{ (1+a)e_0 + e_1, (1+a) e_2 + e_3 \}.$
Intuitively, let us focus on the mode $ae_0 + e_1$ of $p_1$ and consider how it interacts with the two modes $e_0, e_2$ of $p_0$, at some time $t > 0$ when we have isotropic Gaussians centered at each mode. 
Since $ae_0 + e_1$ is further away from $e_2$ (distance $\sqrt{a^2+2}$) than it is from $e_0$ (distance $\sqrt{a^2-2a+2})$, it is much less likely under $\cN(e_2, \sigma_t)$ than $\cN(e_0, \sigma_t)$, leading to a lower weight. This intuition is shown graphically in a 2D projection in Figure \ref{fig:yellow} (left).

For the detailed proof, we actually want to ensure that $p$ has full support even at $t=0$ so we add a little bit of noise to it, but choose the covariance such that $z=Ax$ still holds at $t=0$.

We begin by defining the distributions we will use for the counterexample.
\begin{definition}
\label{def:counterexample}
For any choice of $\tau > 0$, define the following \emph{counterexample} distributions:
\begin{align*}
    p_0^0(x) &= \half \cN(x; e_0, \tau^2 (A^TA)^{-1}) + \half \cN(x; e_2, \tau^2 (A^TA)^{-1}) \\
    p_1^0(x) &= \half \cN(x; ae_0 + e_1, \tau^2 (A^TA)^{-1}) + \half \cN(x; ae_2 + e_3, \tau^2 (A^TA)^{-1}) \\
    p_b^0(x) &= \cN(x; 0, \tau^2 (A^TA)^{-1}),
    \quad \text{where }
    A := \begin{bmatrix}
        1 & -a & 0 & 0 \\
        0 & 1 & 0 & 0 \\
        0 & 0 & 1 & -a \\
        0 & 0 & 0 & 1
    \end{bmatrix},
\end{align*}
so that in the transformed space
\begin{align*}
    (A \sharp p)(z) &:= p(A^{-1} z), \quad z = Ax \\
    (A \sharp p_b^0)(z) &= \cN(z; 0, \tau^2) \\
    (A \sharp p_0^0)(z) &= \half \cN(z; e_0, \tau^2) + \half \cN(z; e_2, \tau^2) \\ 
    (A \sharp p_1^0)(z) &= \half \cN(z; e_1, \tau^2) + \half \cN(z; e_3, \tau^2).
\end{align*}

The noised versions at time $t > 0$ are
\begin{align*}
    p_i^t(x_t|x_0) &:= \cN(x_t; x_0, \sigma_t^2) \\
    p_0^t(x) &= \half \cN(x_t; e_0, \sigma_t^2 I + \tau^2 (A^TA)^{-1}) + \half \cN(x; e_2, \sigma_t^2 I + \tau^2 (A^TA)^{-1}) \\
    p_1^t(x) &= \half \cN(x_t; ae_0 + e_1, \sigma_t^2 I + \tau^2 (A^TA)^{-1}) + \half \cN(x; ae_2 + e_3, \sigma_t^2 I + \tau^2 (A^TA)^{-1}). \\
\end{align*}
\end{definition}

Next, we state some intermediate results we will need for the proof.
\begin{proposition}
\label{prop:gaussian_composition}
A composition of two Gaussians with identical covariance (using a Gaussian background with zero mean and the same covariance) is a scaled Gaussian with the following parameters
\begin{align*}
    \frac{\cN(x; \mu_1; \Sigma) \cN(x; \mu_2, \Sigma)}{\cN(x; 0; \Sigma)} 
    &= C \cN(x; \mu_1 + \mu_2, \Sigma), \quad \text{where }
    C = \exp(\mu_1^T \Sigma^{-1} \mu_2).
\end{align*}
\end{proposition}

\begin{proof} %
\begin{align*}
    \frac{\cN(x; \mu_1; \Sigma) \cN(x; \mu_2, \Sigma)}{\cN(x; 0; \Sigma)} 
    &= (2\pi)^{-\frac{n}{2}} |\Sigma|^{-\half} \frac{e^{-\half (x-\mu_1)^T \Sigma^{-1} (x-\mu_1)} e^{-\half (x-\mu_2)^T \Sigma^{-1} (x-\mu_2)} }{e^{-\half x^T\Sigma^{-1}x}} \\
    &= (2\pi)^{-\frac{n}{2}} |\Sigma|^{-\half} \exp \left( -\half (x-\mu_1)^T \Sigma^{-1} (x-\mu_1) -\half (x-\mu_2)^T \Sigma^{-1} (x-\mu_2) + \half x^T\Sigma^{-1}x \right) \\
    &= (2\pi)^{-\frac{n}{2}} |\Sigma|^{-\half} \exp \left( -\half x^T\Sigma^{-1}x + x^T \Sigma^{-1} (\mu_1 + \mu_2) - \half \mu_1^T \Sigma^{-1} \mu_1 - \half \mu_2^T \Sigma^{-1} \mu_2 \right) \\
    &= C (2\pi)^{-\frac{n}{2}} |\Sigma|^{-\half} \exp \left( -\half (x - \mu_1 - \mu_2)^T \Sigma^{-1}  (x - \mu_1 - \mu_2) \right) \\
    &= C \cN(x; \mu_1 + \mu_2, \Sigma) \\
    C &= \exp \left( - \half \mu_1^T \Sigma^{-1} \mu_1 - \half \mu_2^T \Sigma^{-1} \mu_2 + \half (\mu_1 + \mu_2)^T \Sigma^{-1} (\mu_1 + \mu_2)   \right) \\
    &= \exp(\mu_1^T \Sigma^{-1} \mu_2)
\end{align*}
\end{proof}

\begin{proposition}
\label{prop:counterex_comp}
With $(p_b, p_1, p_2)$ from \cref{def:counterexample}, defining $\hat p^t(x) := \cC[p_b^t, p_0^t, p_1^t]$, we have that $\hat p^0(x)$, $\hat p^t(x)$, and $N_t[\hat p^0](x)$ are all Gaussian mixtures (GMs) with identical means:
$$ \vec{\mu} = \{(1+a)e_0 + e_1, \quad e_0 + a e_2 + e_3, \quad ae_0 + e_2 + e_1, \quad (1+a) e_2 + e_3\}, $$
and the following weights and covariances: 
\begin{align*}
    \hat p^0(x): \quad \text{weights: } w^0 &:= \frac{1}{4} [ 1, 1, 1, 1 ], \\
    \text{covariance: } \tilde{\Sigma_0} &:= \tau^2 (A^TA)^{-1} \\
    \\
    \hat p^t(x): \quad \text{weights: } w^t &:= [\frac{1}{2}-\epsilon, \epsilon, \epsilon, \frac{1}{2}-\epsilon], \\ \epsilon &:= \half S(-\xi), \quad \xi := \frac{a\sigma_t^2}{(a^2 + 2) \sigma_t^2 \tau^2 + \sigma_t^4 + \tau^4} \\
    & \quad \text{ where } S(z) := \frac{1}{e^{-z} + 1} \quad \text{(logistic function)}, \\
    \text{covariance: } \tilde{\Sigma_t} &:= \sigma_t^2 I + \tau^2 (A^TA)^{-1} \\
    \\
    N_t[\hat p^0](x): \quad \text{weights: } w^0 &:= \frac{1}{4} [ 1, 1, 1, 1 ], \\
    \text{covariance: } \tilde{\Sigma_t} &:= \sigma_t^2 I + \tau^2 (A^TA)^{-1} \\
\end{align*}
\end{proposition}

\begin{proof}
    We apply \cref{prop:gaussian_composition} to the distributions of \cref{def:counterexample} to analyze $\hat p^0(x)$, $\hat p^t(x)$, and $N_t[\hat p^0](x)$. \cref{prop:gaussian_composition} gives that all three distributions are Gaussian mixtures with identical means $\vec{\mu}$, 
    and variances $\tilde{\Sigma_0}$, $\tilde{\Sigma_t}$, and $\tilde{\Sigma_t}$, respectively. The weights for $\hat p^0(x)$ and $N_t[\hat p^0](x)$ are uniform ($w^0 = \frac{1}{4} [ 1, 1, 1, 1 ]$). We just need to calculate the weights for $\hat p^t(x)$.

    First we compute the covariance and inverse covariance:
    \begin{align*}
        \tilde{\Sigma_t} &:= \sigma_t^2 I + \tau^2 (A^TA)^{-1} = \sigma_t^2 I + \tau^2
        \begin{bmatrix}
            1+a^2 & a & 0 & 0 \\
            a & 1 & 0 & 0 \\
            0 & 0 & 1+a^2 & a \\
            0 & 0 & a & 1
        \end{bmatrix} \\ 
        \\
        \tilde{\Sigma_t}^{-1} &= \frac{1}{(a^2 + 2) \sigma_t^2 \tau^2 + \sigma_t^4 + \tau^4} \begin{bmatrix}
            \sigma_t^2 + \tau^2 & -a \tau^2 & 0 & 0 \\
            -a \tau^2 & (a^2 + 1)\tau^2 + \sigma_t^2 & 0 & 0 \\
            0 & 0 & \sigma_t^2 + \tau^2 & -a \tau^2 \\
            0 & 0 & -a \tau^2 & (a^2 + 1) \tau^2 + \sigma_t^2
        \end{bmatrix}.
    \end{align*}
    After some algebra (namely, computing $C = \exp(\mu_1^T \tilde{\Sigma_t}^{-1} \mu_2)$ for each cluster), we find that
    \begin{align*}
        \hat p^t(x): \quad w^t &\propto [\exp(\xi), 1, 1, \exp(\xi)], \quad \xi := \frac{a\sigma_t^2}{(a^2 + 2) \sigma_t^2 \tau^2 + \sigma_t^4 + \tau^4} \\
        &= [\frac{1}{2}-\epsilon, \epsilon, \epsilon, \frac{1}{2}-\epsilon], \quad \epsilon := \half S(-\xi), \text{ where } S(z) := \frac{1}{e^{-z} + 1} \text{ (logistic function)}.
    \end{align*}
\end{proof}

The intuition for the proof of \cref{lem:lipschitz} will be that when $\epsilon$ is small, clusters (1,2) have much lower weight than clusters (0,3) in the GM. In that case, we can lower-bound the $W_2$ distance by noting that since $w^t$ has almost no mass on the two of the clusters, we will need to move a little less than $1/4$ probability over to those clusters. For example we need to move $1/4$ probability onto cluster $e_0 + a e_2 + e_3$ from either $(1+a)e_0 + e_1$ (L2 distance between means is $\sqrt{2a + 2}$) or $(1+a) e_2 + e_3$ (L2 distance $\sqrt{2}$). So overall we will have to move a bit less that $1/2$ probability at least $\sqrt{2}$ distance.

We restate the following results from \citet{delon2020wasserstein}, which we will need to help bound the $W_2$ distance.
\begin{theorem} (Mixture Wasserstein distance \citep{delon2020wasserstein})
    \label{thm:delon}
    \begin{align*}
    MW_2(q_0, q_1) &:= \inf_{\gamma \in \Pi(q_0, q_1) \cap \text{GMM}_{2d}(\infty)} \int \|y_0 - y_1\|^2 d\gamma(y_0,y_1), \\
    MW_2^2(q_0, q_1) &= \min_{c \in \Pi(w_0, w_1)} \sum_{k,l} c_{k,l} W_2^2(q_0^k, q_1^l) \quad \text{(Delon Prop. 4)},\\
    W_2(q_0, q_1) &\le MW_2(q_0, q_1) \le W_2(q_0, q_1) + 2 \sum_{i=0,1} \sum_{k=1}^{K_i} w_i^k \text{Tr}(\Sigma_i^k) \quad \text{(Delon Prop. 6)},
\end{align*}
where $\Pi(q_0, q_1)$ denotes the set of all joint distributions with marginals $q_0$ and $q_1$, and $\text{GMM}_{d}(\infty) := \cup_{K \ge 0} \text{GMM}_d(K)$ denotes the set of all finite GMMs.
\end{theorem}

We will also need one more standard fact about the $W_2$ distance between Gaussians.
\begin{proposition} ($W_2$ distance between Gaussians; standard)
\label{prop:W2_Gaussian}
\begin{align*}
W_2^2(\cN(\mu_x, \Sigma_x), \cN(\mu_y, \Sigma_y)) 
&= \|\mu_x - \mu_y\|_2^2 + \text{Tr}(\Sigma_x + \Sigma_y - 2 (\Sigma_x^\half \Sigma_y \Sigma_x^\half)^\half) 
\ge \| \mu_x - \mu_y \|_2^2.
\end{align*}
\end{proposition}

\bigskip
\begin{proposition}
    \label{prop:gmm_lipschitz}
    If $p$ is a Gaussian mixture distribution of the form
    $$p(x) := \sum_{k=1}^K w_i \cN(\mu_k, C_k), \quad x \in \R^n $$
    then $p$ is $(nK)^\half$-Lipschitz w.r.t Wasserstein 2-distance:
    $$W_2(p^{t'}, p^t) \le (nK)^\half |\sigma_t - \sigma_{t'}|,$$
    (that is, $\cO(1)$-Lipschitz, where $\cO$ only hides constants depending on ambient dimension and number-of-clusters). 
\end{proposition}
\begin{proof}
\begin{align*}
    p(x) &:= \sum_{k=1}^K w_i \cN(\mu_k, C_k)\\
    p^t(x) &:= \sum_{k=1}^K w_i \cN(\mu_k, C_k + \sigma_t^2 I) \\
    W_2^2(\cN(\mu_x, \Sigma_x), \cN(\mu_y, \Sigma_y)) &:= \|\mu_x - \mu_y\|_2^2 + \text{Tr}(\Sigma_x + \Sigma_y - 2 (\Sigma_x^\half \Sigma_y \Sigma_x^\half)^\half) \\
    &:= \|\mu_x - \mu_y\|_2^2 + \| \Sigma_x^\half - \Sigma_y^\half\|_F^2 \quad \text{if $\Sigma_x, \Sigma_y$ commute} \\
    \implies W_2^2(p^{t'}[k], p^t[k]) &= \| (C_k + \sigma_{t}^2 I)^\half - (C_k + \sigma_{t'}^2 I)^\half\|_F^2 \\
    &= \| (\Lambda + \sigma_{t}^2 I)^\half - (\Lambda + \sigma_{t'}^2 I)^\half\|_F^2, \quad \text{where $C_k = U \Lambda U^T$ is eigendecomposition}  \\
    &\le \| (\sigma_t - \sigma_{t'}) I \|_F^2, \quad \text{(by concavity of square root and $\Lambda \succeq 0$)} \\
    &= n (\sigma_t - \sigma_{t'})^2
\end{align*}
\begin{align*}
    W_2^2(p^{t'}, p^t) &\le MW_2^2(p^{t'}, p^t) \\
    &:= \min_{c \in \Pi(w, w)} \sum_{k, l} c_{k,l} W_2^2(p^{t'}[k], p^t[l]) \\
    &\le \sum_{k}^K W_2^2(p^{t'}[k], p^t[k]), \quad \text{(since $c=I \in \Pi(w, w)$)} \\
    &\le nK (\sigma_t - \sigma_{t'})^2 \\
    \implies W_2(p^{t'}, p^t) &\le (nK)^\half |\sigma_t - \sigma_{t'}|.
\end{align*}
Thus $p$ is $(nK)^\half$-Lipschitz w.r.t. $W_2$ distance.
\end{proof}

\begin{proposition}
\label{prop:w_2_bounds_smooth}
With $(p_b, p_1, p_2)$ from \cref{def:counterexample} and $\hat p^t(x) := \cC[p_b^t, p_0^t, p_1^t]$, the $W_2$-distance between $\hat{p}^t$ and $\hat{p}^0$ is bounded as follows 
    $$ (1 - 4\epsilon)^{\half} - 4(\tau^2(4+2a^2) + 2\sigma_t^2) \le W_2(\hat{p}^0, \hat{p}^t) \le \left((1 - 4\epsilon) (1+a) + 2( \tau^2(4 + 2 a^2) + 2 \sigma_t^2) \right)^\half.$$
\end{proposition}

\begin{proof}
Using \cref{prop:W2_Gaussian},
\begin{align*} 
    MW_2^2(\hat{p}^0, \hat{p}^t) &= \min_{c \in \Pi(w^0, w^t)} \sum_{k, l} c_{k,l} W_2^2(\hat{p}^0[k], \hat{p}^t[l]) \\
    &= \min_{c \in \Pi(w^0, w^t)} \sum_{k, l} c_{k,l} \left( \|\mu_k - \mu_l\|_2^2 + \text{Tr}(\tilde \Sigma_0 + \tilde \Sigma_t - 2 (\tilde \Sigma_0^\half \tilde \Sigma_t \tilde \Sigma_t^\half)^\half) \right) \\
    c_{(0 \text{ or } 3), (1 \text{ or } 2)} &= 1/4 - \epsilon \\
    2 &\le \|\mu_{(0 \text{ or } 3)} - \mu_{(1 \text{ or } 2)}\|_2^2 \le 2(1+a) \\
    0 &\le \text{Tr}(\tilde \Sigma_0 + \tilde \Sigma_t - 2 (\tilde \Sigma_0^\half \tilde \Sigma_t \tilde \Sigma_t^\half)^\half) \le \text{Tr}(\tilde \Sigma_0 + \tilde \Sigma_t) = 2 \tau^2(4 + 2 a^2) + 4 \sigma_t^2 \\
    \implies 1 - 4\epsilon &\le MW_2^2(N_t[\hat{p}^0], \hat{p}^t) \le (1 - 4\epsilon) (1+a) + 2 \tau^2(4 + 2 a^2) + 4 \sigma_t^2 \\
    \implies (1 - 4\epsilon)^\half &\le MW_2(N_t[\hat{p}^0], \hat{p}^t) \le \left((1 - 4\epsilon) (1+a) + 2 \tau^2(4 + 2 a^2) + 4 \sigma_t^2\right)^\half.
\end{align*}
Above, we noted that any $c \in \Pi(w^0, w^t)$ has to move at least $\frac{1}{4} - \epsilon$ probability each away from indices 1 or 2 and onto indices either 0 or 3, and for any of these moves we can bound the squared L2 distance the mass must move between 2 and $2(1+a)$. We also used simple bounds on the trace term.

We can use \cref{thm:delon} to bound the $W_2$ distance by the $MW_2$ distance:
Using \cref{thm:delon} \citep{delon2020wasserstein},
\begin{align*}
    W_2(\hat{p}^0, \hat{p}^t) &\le MW_2(\hat{p}^0, \hat{p}^t) \\
    &\le \left((1 - 4\epsilon) (1+a) + 2 \tau^2(4 + 2 a^2) + 4 \sigma_t^2\right)^\half \\
    W_2(\hat{p}^0, \hat{p}^t) &\ge MW_2(\hat{p}^0, \hat{p}^t) - 2 \sum_{k}^{} (w^0[k] \text{Tr}(\tilde{\Sigma}^0) + w^t[k]\text{Tr}(\tilde{\Sigma}^t))\\
    &\ge MW_2(\hat{p}^0, \hat{p}^t) - 2(\text{Tr}(\tilde{\Sigma}^0) + \text{Tr}(\tilde{\Sigma}^t))\\
    &\ge (1 - 4\epsilon)^{\half} - 2( 4\sigma_t^2 + 2 \tau^2(4+2a^2)).\\
\end{align*}
\end{proof}

\begin{proposition}
\label{prop:w_2_bounds_diff}
With $(p_b, p_1, p_2)$ from \cref{def:counterexample} and $\hat p^t(x) := \cC[p_b^t, p_0^t, p_1^t]$, the $W_2$-distance between $\hat{p}^t$ and $N_t[\hat{p}^0]$ is bounded as follows 
    $$ (1 - 4\epsilon)^{\half} - 4(\tau^2(4+2a^2) + 4\sigma_t^2) \le W_2(N_t[\hat{p}^0], \hat{p}^t) \le (1 - 4\epsilon)^\half (1+a)^\half.$$
\end{proposition}
\begin{proof}
Using \cref{prop:W2_Gaussian},
    \begin{align*} 
    MW_2^2(N_t[\hat{p}^0], \hat{p}^t) &= \min_{c \in \Pi(w^0, w^t)} \sum_{k, l} c_{k,l} W_2^2(N_t[\hat{p}^0][k], \hat{p}^t[l]) \\
    &= \min_{c \in \Pi(w^0, w^t)} \sum_{k, l} c_{k,l} \|\mu_k - \mu_l\|_2^2 \\
    c_{(0 \text{ or } 3) \to (1 \text{ or } 2)} &= 1/4 - \epsilon \\
    2 &\le \|\mu_{(0 \text{ or } 3)} - \mu_{(1 \text{ or } 2)}\|_2^2 \le 2(1+a) \\
    \implies 1 - 4\epsilon &\le MW_2^2(N_t[\hat{p}^0], \hat{p}^t) \le (1 - 4\epsilon) (1+a) \\
    \implies (1 - 4\epsilon)^\half &\le MW_2(N_t[\hat{p}^0], \hat{p}^t) \le (1 - 4\epsilon)^\half (1+a)^\half.
\end{align*}
Using \cref{thm:delon} \citep{delon2020wasserstein},
\begin{align*}
    W_2(N_t[\hat{p}^0], \hat{p}^t) &\le MW_2(N_t[\hat{p}^0], \hat{p}^t) \\
    &\le (1 - 4\epsilon)^\half (1+a)^\half \\
    W_2(N_t[\hat{p}^0], \hat{p}^t) &\ge MW_2(N_t[\hat{p}^0], \hat{p}^t) - 2 \sum_{k} (w^0[k] + w^t[k]) \text{Tr}(\tilde{\Sigma}^t))\\
    &\ge MW_2(N_t[\hat{p}^0], \hat{p}^t) - 4\text{Tr}(\tilde{\Sigma}^t)\\
    &\ge (1 - 4\epsilon)^{\half} - 4(4 \sigma_t^2 + \tau^2(4+2a^2)).
\end{align*}
\end{proof}

Now we have all the pieces to prove \cref{lem:lipschitz}.
\begin{proof} (\cref{lem:lipschitz})

 We will show that the distributions $(p_b, p_1, p_2)$ of \cref{def:counterexample} satisfy \cref{lem:lipschitz}. We make the choices $a=1$ and $\sigma_t := t$ for simplicity, and define $\hat p^t(x) := \cC[p_b^t, p_0^t, p_1^t]$.

Lemma \ref{lem:transform_comp} applied to the distributions of \cref{def:counterexample} implies that at time $t=0$,
\begin{align*}
    \hat p^0(x) :=\hat \cC[\vec{p}] &:= \frac{p_0^0(x)p_1^0(x)}{p_b^0(x)} = p_0^0(x|_{(0,2)})p_1^0(x|_{(1,3)}).
\end{align*}
That is, we have a projective composition at time $t=0$.

For Part 1 of Lemma \ref{lem:lipschitz}, we note that the $p_i$ are Gaussian mixtures, therefore \cref{prop:gmm_lipschitz} gives that each $p_i$ is $\cO(1)$-Lipschitz w.r.t Wasserstein 2-distance:
$$\forall i: \quad W_2(p_i^{t}, p_i^{t'}) \leq \sqrt{8} |t - t'|.$$

For Part 2, we need to bound the Lipschitz constant of $\hat p^t := \hat \cC[\vec{p}^t]$.

\cref{prop:w_2_bounds_smooth} gives:
\begin{align*}
    &(1 - 4\epsilon)^{\half} - 4(\tau^2(4+2a^2) + 2\sigma_t^2) \le W_2(\hat{p}^0, \hat{p}^t) \le \left((1 - 4\epsilon) (1+a) + 2( \tau^2(4 + 2 a^2) + 2 \sigma_t^2) \right)^\half. \\
    &\quad \text{ where }\epsilon := \half S(-\xi), \quad \xi := \frac{a\sigma_t^2}{(a^2 + 2) \sigma_t^2 \tau^2 + \sigma_t^4 + \tau^4}, \quad
    \text{ with } S(z) := \frac{1}{e^{-z} + 1}.
\end{align*}
Plugging in $a=1$,
\begin{align*}
    &(1 - 4\epsilon)^{\half} - 24\tau^2 - 8 \sigma_t^2 \le W_2(\hat{p}^0, \hat{p}^t) \le \left(2(1 - 4\epsilon) + 12\tau^2 + 4 \sigma_t^2\right)^\half \\
    &\quad \text{ where }\epsilon := \half S(-\xi), \quad \xi := \frac{\sigma_t^2}{3 \sigma_t^2 \tau^2 + \sigma_t^4 + \tau^4}.
\end{align*}

We will to show that
$$\exists t, t': \quad \half \tau^{-1}|t - t'| \leq W_2(q^{t}, q^{t'}) \leq 2 \tau^{-1}|t - t'|.$$

After some algebra, we find that for any fixed $\tau$, the minimum and maximum of $\epsilon$ are
\begin{align*}
    \sigma_t &\to 0 \implies \xi \to 0 \implies \epsilon \to \frac{1}{4} \text{ (max)}\\
    \sigma_t &= \tau \implies \xi = \xi^\star(\tau) := \frac{1}{5\tau^2} \implies \epsilon = \half S \left(-\xi^\star(\tau) \right) \text{ (min)}\\
    \sigma_t &= \tau \to 0 \implies \xi^\star(\tau) \to \infty \implies \epsilon \to 0 \text{ (min)}
\end{align*}

Thus, taking $\sigma_t = \tau$ (the minimizer of $\epsilon$) and choosing any $\tau^2 < \frac{1}{66}$ (somewhat arbitrarily, but small enough), gives
\begin{align*}
    \sigma_t = \tau &\implies (1 - 4\epsilon)^{\half} - 32 \tau^2 \le W_2(\hat{p}^0, \hat{p}^t) \le \left(2(1 - 4\epsilon) + 32 \tau^2\right)^\half,
    \quad \text{ where }\epsilon := \half S(-\frac{1}{5 \tau^2}) \\
    \tau^2 < \frac{1}{66} &\implies \epsilon \approx 10^{-6} \ll \frac{1}{4}\tau^2 \implies (1 - 4\epsilon)^{\half} \ge 1 - \tau^2 \\
    &\implies 0.5 \le 1 - 33 \tau^2 \le W_2(\hat{p}^0, \hat{p}^t) \le \left(2 + 32 \tau^2\right)^\half < 2.
\end{align*}

Thus, for any choice of $\tau^2 < \frac{1}{66}$, if we take $\sigma_t := t = \tau$ and $t'=0$, we have as desired that
\begin{align*}
     0.5 \tau^{-1}|t| \equiv 0.5 \leq W_2(\hat{p}^0, \hat{p}^t) \leq 2 \equiv 2 \tau^{-1}|t|,
\end{align*}
that is,
$$\exists t, t': \quad W_2(\hat{p}^{t'}, \hat{p}^t) = \Theta(\tau^{-1}|t - t'|).$$
\end{proof}

We can also prove another lemma using the same counterexample as \cref{lem:lipschitz}:
\begin{lemma}
\label{lem:bad_diffus_path}
Let $q^t$ denote the composed distribution at time $t$: $q^t := \cC[\vec{p}^t]$, and $N_t$ be the Gaussian-noising operator. There exist distributions $\{p_b, p_1, \dots p_k\}$ over $\R^n$ and a value of $t$ such that $N_t[q^0]$ (the ideal diffusion path to $q^0$) differs from $q^t$ (the path actually followed) by at least $\Omega(1)$:
    $$\exists t: W_2(N_t[q^0], q^t) \geq \half.$$
\end{lemma}

\begin{proof}

We will show that the distributions of \cref{def:counterexample} satisfy \cref{lem:bad_diffus_path}. We make the choices $a=1$ and $\sigma_t := t$ for simplicity.
 
By \cref{prop:w_2_bounds_diff},
\begin{align*}
    &(1 - 4\epsilon)^{\half} - 4(\tau^2(4+2a^2) + 4\sigma_t^2) \le W_2(N_t[\hat{p}^0], \hat{p}^t) \le (1 - 4\epsilon)^\half (1+a)^\half \\
    &\quad \text{ where }\epsilon := \half S(-\xi), \quad \xi := \frac{a\sigma_t^2}{(a^2 + 2) \sigma_t^2 \tau^2 + \sigma_t^4 + \tau^4}, \quad
    \text{ with } S(z) := \frac{1}{e^{-z} + 1}.
\end{align*}

Taking $a=1$:
\begin{align*}
    &(1 - 4\epsilon)^{\half} - 24\tau^2 - 16\sigma_t^2 \le W_2(N_t[\hat{p}^0], \hat{p}^t) \le \sqrt{2}(1 - 4\epsilon)^\half \\
    &\quad \text{ where }\epsilon := \half S(-\xi), \quad \xi := \frac{\sigma_t^2}{3 \sigma_t^2 \tau^2 + \sigma_t^4 + \tau^4}.
\end{align*}

For any fixed $\tau$, the minimum and maximum of $\epsilon$ are
\begin{align*}
    \sigma_t &\to 0 \implies \xi \to 0 \implies \epsilon \to \frac{1}{4} \text{ (max)}\\
    \sigma_t &= \tau \implies \xi = \xi^\star(\tau) := \frac{1}{5\tau^2} \implies \epsilon = \half S \left(-\xi^\star(\tau) \right) \text{ (min)}\\
    \sigma_t &= \tau \to 0 \implies \xi^\star(\tau) \to \infty \implies \epsilon \to 0 \text{ (min)}
\end{align*}

First we want to show that
$$\exists t: W_2(N_t[q^0], q^t) \geq \half.$$

Taking $\sigma_t = \tau$ (the minimizer of $\epsilon$) and choosing any $\tau^2 < \frac{1}{82}$ (somewhat arbitrarily, but small enough), gives
\begin{align*}
    \sigma_t = \tau &\implies (1 - 4\epsilon)^{\half} - 40\tau^2 \le W_2(N_t[\hat{p}^0], \hat{p}^t) \le \sqrt{2}(1 - 4\epsilon)^\half,
    \quad \text{ where }\epsilon := \half S(-\frac{1}{5 \tau^2}) \\
    \tau^2 < \frac{1}{82} &\implies \epsilon \approx 10^{-7} \ll \frac{1}{4}\tau^2 \implies (1 - 4\epsilon)^{\half} \ge 1 - \tau^2 \\
    &\implies 0.5 \le 1 - 41 \tau^2 \le W_2(\hat{p}^0, \hat{p}^t) \le \sqrt{2}.
\end{align*}
Thus, fixing a $\tau^2 < \frac{1}{82}$ and taking $\sigma_t := t = \tau$,  we have as desired that
\begin{align*}
      W_2(N_t[\hat{p}^0], \hat{p}^t) \geq 0.5.
\end{align*}

The proof is now complete, but we can make one more interesting observation. The bound above was obtained by choosing a small value of $\tau$, but the diffusion path (specifically, for distributions of the form of \cref{def:counterexample}) is much less problematic for larger $\tau$:
$$\forall t: W_2(N_t[\hat p^0], \hat p^t) \leq \cO(\tau^{-1}).$$
That is, even for our counterexample distributions, diffusion \emph{can} still approximately work to sample from the composition $\hat p^0$, if $\tau$ is large enough.
To see this, we note that
\begin{align*}
    \forall t: \quad W_2(N_t[\hat{p}^0], \hat{p}^t) &\le \sqrt{2}(1 - 4\epsilon)^\half \\
    \text{ where }\epsilon &:= \half S(-\xi), \quad \xi := \frac{\sigma_t^2}{3 \sigma_t^2 \tau^2 + \sigma_t^4 + \tau^4} \\
    & \geq \half S \left(-\xi^\star(\tau) \right), \quad \xi^\star(\tau) := \frac{1}{5\tau^2} \quad \text{for any fixed $\tau$} \\
    \implies W_2(N_t[\hat{p}^0], \hat{p}^t) &\le \sqrt{2}(1 - 2 S \left(-\xi^\star(\tau) \right))^\half
    \le \frac{\sqrt{2}}{\tau}, \quad \forall \tau.
\end{align*}
\end{proof}

\section{Learned vs. explicit compositions in CLEVR trained on multiple objects}
\label{app:clevr_multi}

\begin{figure}[ht]
  \centering
\includegraphics[width=.49\linewidth]{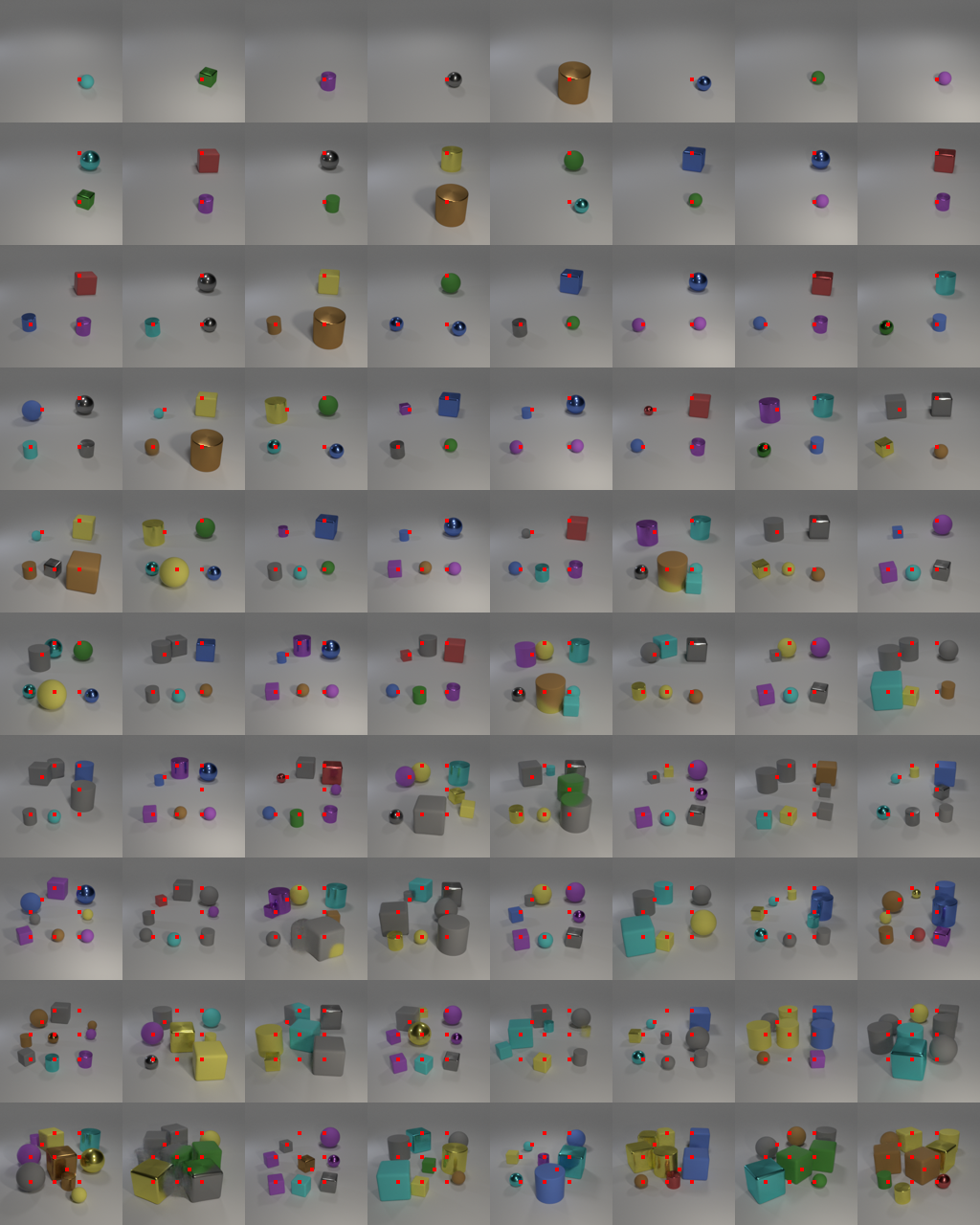}
\includegraphics[width=.49\linewidth]{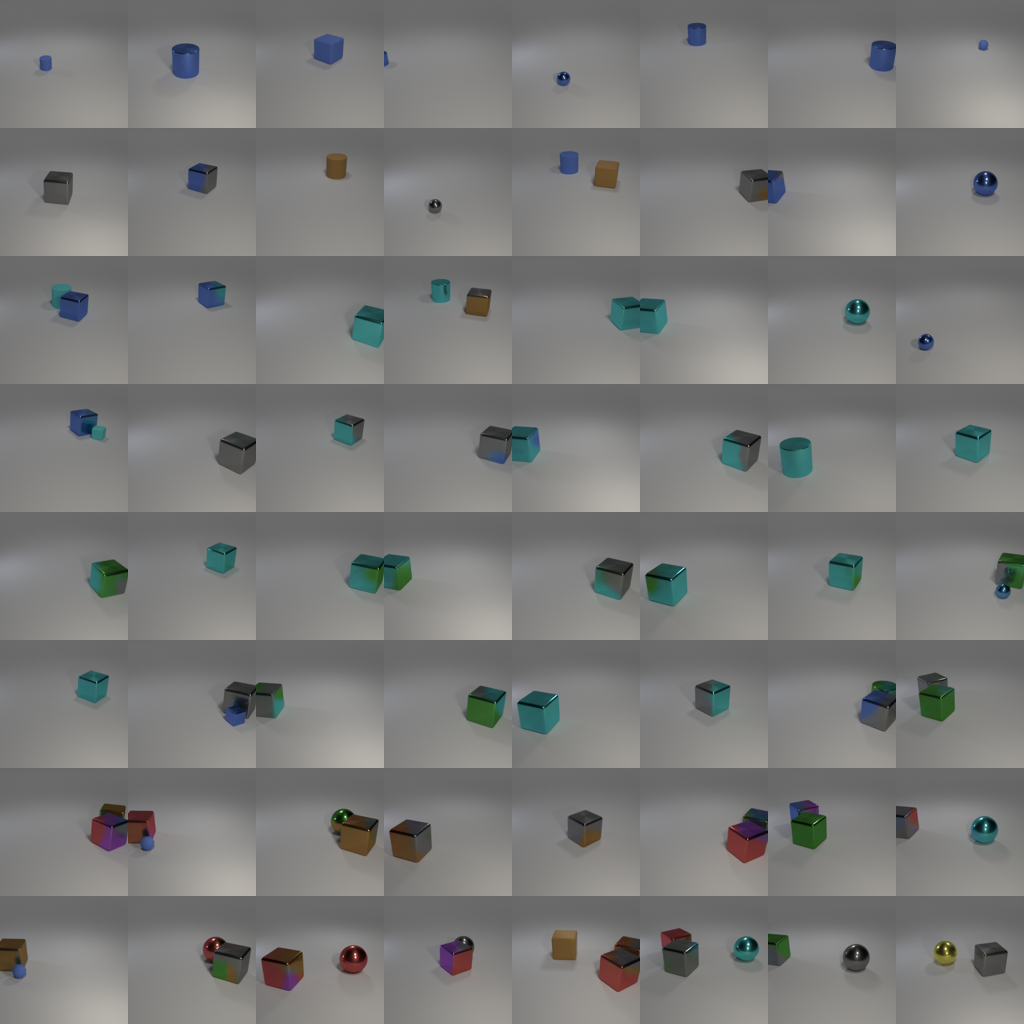}
\caption{Compositions of models trained on multiple objects with a learned background. (Left) Location-conditioned model, trained on 1-3 objects, testing length-generalization from 1-10 objects. We compose models conditioned on one location with a background conditioned on zero locations. (Right) Color-conditioned model, trained on 1-5 objects, testing length-generalization from 1-8 objects.}
\label{fig:explicit_location_color}
\end{figure}

In contrast to the \emph{explicit composition} of multiple models via linear score combination that we study in this paper, \citet{bradley2025local} demonstrated length-generalization in a \emph{single} location-conditioned model trained on CLEVR images with multiple objects -- which we will call \emph{learned composition}. For example, \citet{bradley2025local} showed that a location-conditioned model trained on 1-3 objects can length-generalize to 9 objects when given an OOD conditioner specifying 9 locations. Their location-conditioning is implemented as a 2d integer array representing a 16x16 grid over the image, where each entry is the count of objects whose center falls within the grid cell as shown in (usually 0 or 1). To test length-generalization, they constructed new conditioners with $K$ nonzero entries corresponding to $K$ desired locations. Similarly, to explore feature-space compositionality, they also showed partial length-generalization in a single color-conditioned model, where the color conditioning used an 8-dimensional integer array (one dimension per color) indicating object counts for each color. They showed that a color-conditioned model trained on 1-5 objects length-generalized up to 7 objects. Notably, \citet{bradley2025local}'s learned compositions, particular with the location-conditioned model, length-generalize to higher object counts with fewer artifacts than our explicit compositions shown in \Cref{fig:len_gen_monster}.

In this section, we explore whether explicit compositions of multi-object models trained with \citet{bradley2025local}'s conditioning strategy also exhibit improved length-generalization. \Cref{fig:explicit_location_color} shows explicit compositions using \citet{bradley2025local}'s location-conditioned model (trained on 1-3 objects) and color-conditioned model (trained on 1-5 objects). For these explicit compositions, individual concept models are created by conditioning the multi-object model on a single location or color at a time. A \emph{learned background} model is obtained by passing the multi-object model an empty conditioner (which is also OOD). We find the location-conditioned explicit composition length-generalizes up to about 8 objects, while the color-conditioned explicit composition shows very limited length-generalization. 

Table \ref{table:xy-learned-comp-variations} provides quantitative results for length-generalization of learned vs. explicit compositions for location-conditioned models, trained on varying numbers of objects. For the explicit compositions, we also compare different backgrounds. The evaluation criterion and the results of the learned composition are from \citet{bradley2025local}. \Cref{fig:rel_err_learned_explicit} measures the difference between the learned and explicit composition scores in location-conditioned models across denoising times.

\paragraph{Location-conditioned models} For location-conditioned multi-object models, the explicit composition with learned background length-generalizes almost as well as the learned composition. A simple explanation for this is that if the true joint distribution $p(x|c_1, c_2, \ldots)$ is a projective composition, a model trained on sufficient conditioner-combinations may learn this underlying structure, and also, $p(x|c_1, c_2, \ldots)$ is exactly equal to the explicit composition of its individual conditionals $\{p(x|\emptyset), p(x|c_1), p(x|c_2), \ldots\}$. However, at high object counts the objects become crowded, making the distribution less compositional. This may be mitigated by the fact that at low noise levels the model appears to behave like an \emph{local, unconditional denoiser} \citep{kamb2024analytic, niedoba2024towards}. Thus, if the projective structure approximately holds at high noise then the compositional mechanism may establish the global structure, with details resolved at low noise. We offer the following evidence for this.
First, the score visualizations in \Cref{fig:scores_learn_v_expl} show that at high noise ($t=20$), the scores of the learned and explicit compositions look similar, while at low noise \emph{all} scores resemble the unconditional score. Second, \Cref{fig:rel_err_learned_explicit} shows that the gap between the unconditional and learned scores is large at high noise but nearly zero at low noise. Conversely, the gap between the learned and explicit compositions is relatively small at high noise (suggesting a shared compositional structure), widens at intermediate noise, and finally disappears at low noise. We therefore hypothesize that both the learned and explicit compositions exploit the (approximate) compositional structure at high noise, while reverting to local unconditional denoising at low noise.

Finally, we observe that the learned and explicit compositions do differ somewhat: the learned composition length-generalizes to more objects, and \Cref{fig:rel_err_learned_explicit} shows that their scores diverge, particularly at intermediate noise levels. Why might this be? First, even at high noise levels, the compositional structure of the data is only approximate. In fact, it is rather surprising that the learned composition length-generalizes to as many objects as it does, given that they become quite crowded (and thus less independent) even at intermediate noise levels.  One hypothesis proposed by \citet{bradley2025local} is that a model trained on 1-3 objects might learn to represent \emph{clusters} of 1-3 objects, which it can then compose.\footnote{This is consistent with our existing projective composition theory, applied to concept distributions corresponding to \emph{subsets} of conditioners. If the underlying data has this type of compositional structure, a trained model could learn to group conditioners to exploit it.} If this is the case, the model could generate, for instance, 12 objects by composing four 3-object clusters. In contrast, the explicit composition (as currently constructed, one object at a time) assumes independence between objects, at least at high noise.

\begin{table}
  \caption{Limits of length-generalization in learned vs. explicit compositions. The table lists the maximum value, $K_{\max}$, such that the model ``succeeds'' for every $1 \le K \le K_{\max}$, as defined in \citet{bradley2025local}. The learned composition of \citet{bradley2025local}, conditions a single model on $K$ locations simultaneously. The explicit compositions are $\cC[p_b, \{p_j\}_{j=1:K}]$ where $p_j(x) := p(x|c_j)$, conditions on a single location $j$ at a time. We test various backgrounds for the composition: the learned background $p_b(x) := p(x|\emptyset)$ (i.e. conditioned on zero locations), the empty background (a separate model trained only on empty images), and an unconditional background (another separate unconditional model trained on images containing 1-5 objects). The ``switch'' background experiment uses learned background for the first 12 denoising steps and then switches to the unconditional background. The maximum values in each row are shown in bold.}
  \label{table:xy-learned-comp-variations}
  \centering
  \begin{tabular}{l|l|llll}
    \toprule
    $K_{\max}$ & Learned & $\cC$ (learned bg) & $\cC$ (empty bg) & $\cC$ (uncond bg) & $\cC$ (switch bg) \\
    \midrule
    Trained on 1   & 1 & 2 & \textbf{4} & 1 & 3 \\
    Trained on 1-2 & \textbf{5} & \textbf{5} & 4 & 3 & 3\\
    Trained on 1-3 & \textbf{9} & 8 & 4 & 2 & 8\\
    Trained on 1-4 & \textbf{10} & 9 & 4 & 3 & 8\\
    Trained on 1-5 & \textbf{10} & 9 & 4 & 2 & 8\\
    Trained on 1-6 & \textbf{11} & 9 & 4 & 2 & 8\\
    \bottomrule
  \end{tabular}
\end{table} 

A few additional comments about \Cref{table:xy-learned-comp-variations}:
\begin{itemize}
    \item For single-object models (trained on 1 object), explicit composition with empty background (as in \Cref{fig:len_gen_monster} Exp. A) length-generalizes the best, up to 4 objects. 
    \begin{itemize}
        \item The learned composition unsurprisingly does not length-generalize, having never seen multiple objects.
        \item The model also fails to generalize on the empty conditioner, hence the explicit composition with learned background performs poorly.
        \item The explicit composition with unconditional background (as in to \Cref{fig:len_gen_monster} Exp. B) also fails to length-generalize consistent with our previous observations.
    \end{itemize}
    \item The multi-object explicit compositions with learned background length-generalize more successfully than their counterparts with either empty or unconditional backgrounds. This suggests that the learned background -- which appears to transition from empty to unconditional -- is beneficial for multi-object explicit compositions. The `switch background' experiment (learned background early in denoising followed by unconditional) also supports this hypothesis.
\end{itemize}

\paragraph{Color-conditioning}

The color-conditioned explicit composition was far less successful at length-generalization. Although the learned composition for the model trained on 1-5 objects generalized to 7 colors in \citet{bradley2025local}, the explicit composition of this same model (with learned background) does not length-generalize beyond 2-3 objects. We hypothesize that this is due to the compositional structure existing in \emph{feature-space}. As shown in this paper, sampling explicit compositions in feature-space is challenging. In contrast, as discussed in \citet{bradley2025local}, trained models can potentially learn both the appropriate feature-space transform as well as the compositional structure, leading to better generalization.

We tested the following locations and colors as in \citet{bradley2025local}.
\begin{small}
\begin{verbatim}
locations = ([[0.65, 0.65], [0.65, 0.25], [0.25, 0.65], [0.35, 0.35], [0.45, 0.65], 
[0.45, 0.25], [0.65, 0.45], [0.25, 0.45], [0.45, 0.45], [0.55, 0.55], [0.35, 0.55], 
[0.55, 0.35]]).
colors = ([blue, brown, cyan, gray, green, purple, red, yellow]).
\end{verbatim}
\end{small}

\begin{figure}
  \centering
  \includegraphics[width=0.3\linewidth]{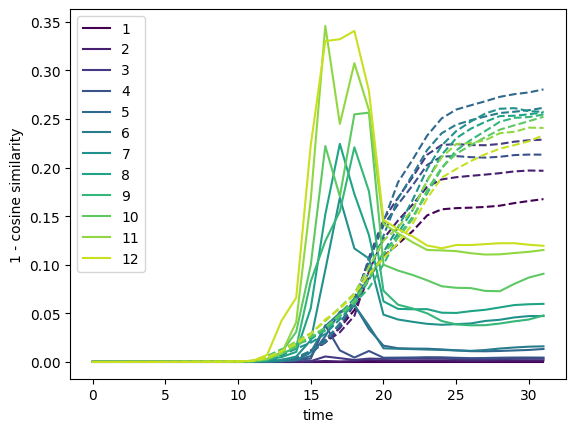}
  \includegraphics[width=0.3\linewidth]{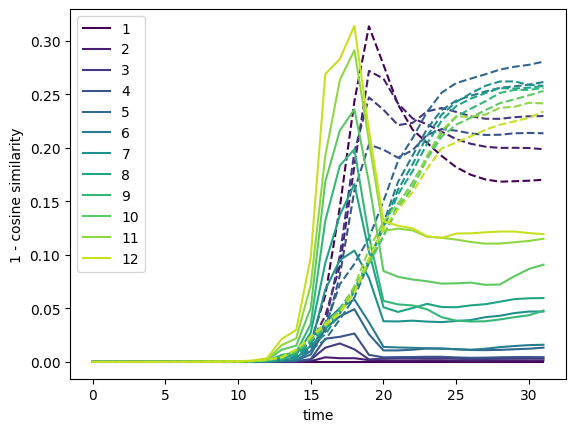}
  \includegraphics[width=0.3\linewidth]{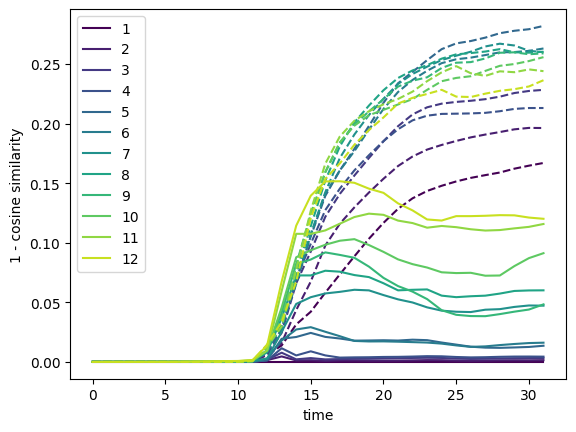}
  \caption{Difference between learned and explicit score (solid lines) and between learned and unconditional score (dashed lines), across denoising timesteps, with a location-conditioned model trained on 1-3 objects. Different colors represent compositions of varying numbers of objects (e.g., 0:6 indicates a composition of 7 distinct locations). The clean image $x_0$ is generated by the learned model, with different conditioning for each subplot: (right) conditioned on the current composition, (middle) conditioned on 6 fixed locations, (left) conditioned on 0 objects.}
  \label{fig:rel_err_learned_explicit}
\end{figure}

\begin{figure}
  \centering
  \includegraphics[width=1.0\linewidth]{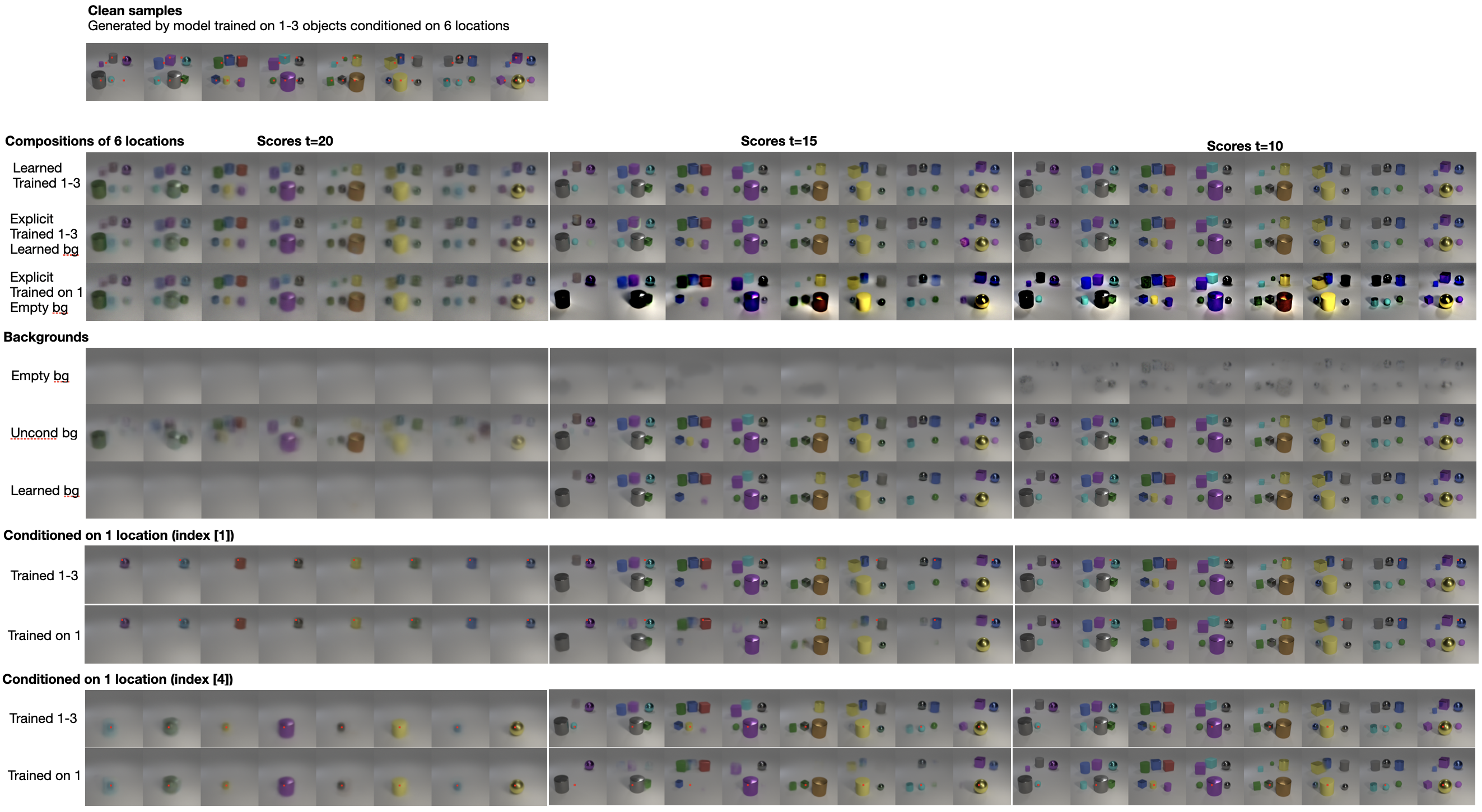}
  \caption{Score visualizations for different models at various noise levels ($t=20, 15, 10$. The clean image $x_9$ contains 6 objects. Models include: Learned composition (trained 1-3 objects), Explicit composition (trained 1-3 objects), and Explicit composition (trained 1 object).}
  \label{fig:scores_learn_v_expl}
\end{figure}

\end{document}